\theoremstyle{plain}
\newtheorem{theorem}{Theorem}[section]
\newtheorem{proposition}[theorem]{Proposition}
\newtheorem{lemma}[theorem]{Lemma}
\theoremstyle{plain}
\newtheorem{assumption}[theorem]{Assumption}
\theoremstyle{plain}
\newcommand{\squishlisttwo}{
 \begin{list}{$\bullet$}
  { \setlength{\itemsep}{1pt}
     \setlength{\parsep}{0pt}
    \setlength{\topsep}{0pt}
    \setlength{\partopsep}{0pt}
    \setlength{\leftmargin}{1em}
    \setlength{\labelwidth}{1.5em}
    \setlength{\labelsep}{0.5em} } }
\newcommand{\squishend}{
  \end{list}  }
\title{Federated Linear Dueling Bandits}
\author{%
    Xuhan Huang$^{1}$, Yan Hu$^{1}$, Zhiyan Li$^{1}$, Zhiyong Wang$^{2}$, Benyou Wang$^{1}$, Zhongxiang Dai$^{1}$\thanks{Corresponding author. Email: daizhongxiang@cuhk.edu.cn} \\
    1 The Chinese University of Hong Kong, Shenzhen \\
    2 The Chinese University of Hong Kong
}
\begin{document}

\maketitle

\begin{abstract}
Contextual linear dueling bandits have recently garnered significant attention due to their widespread applications in important domains such as recommender systems and large language models. Classical dueling bandit algorithms are typically only applicable to a single agent. However, many applications of dueling bandits involve multiple agents who wish to collaborate for improved performance yet are unwilling to share their data. This motivates us to draw inspirations from \emph{federated learning}, which involves multiple agents aiming to collaboratively train their neural networks via gradient descent (GD) without sharing their raw data. Previous works have developed federated linear bandit algorithms which rely on closed-form updates of the bandit parameters (e.g., the linear function parameters) to achieve collaboration. However, in linear dueling bandits, the linear function parameters \emph{lack a closed-form expression} and their estimation requires minimizing a loss function. This renders these previous methods inapplicable. In this work, we overcome this challenge through an innovative and principled combination of online gradient descent (OGD, for minimizing the loss function to estimate the linear function parameters) and federated learning, hence introducing our \emph{federated linear dueling bandit with OGD} (\texttt{FLDB-OGD}) algorithm. Through rigorous theoretical analysis, we prove that \texttt{FLDB-OGD} enjoys a sub-linear upper bound on its cumulative regret and demonstrate a theoretical trade-off between regret and communication complexity. We conduct empirical experiments to demonstrate the effectiveness of \texttt{FLDB-OGD} and reveal valuable insights, such as the benefit of a larger number of agents, the regret-communication trade-off, among others.
\end{abstract}

\vspace{-2mm}
\section{Introduction}
\vspace{-2mm}
\emph{Contextual dueling bandits} \citep{NeurIPS21_saha2021optimal,ALT22_saha2022efficient,ICML22_bengs2022stochastic,arXiv24_li2024feelgood} have received significant attention recently, owing to their widespread adoption in important real-world applications such as recommender systems \citep{JCSS12_yue2012k} and large language models (LLMs) \citep{lin2024prompt,ji2024reinforcement}. 
In every iteration of a contextual dueling bandit problem, an agent receives a $d$-dimensional \emph{context} vector and $K$ \emph{arms}, selects a pair of arms, and collects a binary observation indicating the relative preference between the selected pair of arms \citep{ICML22_bengs2022stochastic}.
For example, when applying a contextual dueling bandit algorithm to optimize the response from an LLM, we receive a context (i.e., a prompt), use the LLM to generate $K$ responses (i.e., arms), select from them a pair of responses, and then ask the user which response is preferred \citep{lin2024prompt}.
In order to intelligently select the pairs of arms, we often adopt a surrogate function to model the \emph{latent reward function} governing the preference feedback observations (more details in Sec.~\ref{sec:background}), for which we typically adopt a linear function.
That is, we assume that the reward function value at arm $x$ can be expressed as $f(x) = \theta^{\top} \phi(x)$ for some unknown $\theta\in\mathbb{R}^d$ and known feature mapping $\phi(\cdot)\in\mathbb{R}^d$.
This is often referred to as the contextual \emph{linear dueling bandit} problem.

Classical contextual dueling bandit algorithms
are only applicable to single-agent scenarios.
However, many real-world applications involve multiple agents, which creates opportunities to enhance performance via \emph{collaboration among agents}.
Of note, in such applications, the agents are often concerned with privacy and are often \emph{unwilling to share their data}, including the selected arms and the observed preference feedback.
For example, users adopting contextual dueling bandits to optimize the response of LLMs may want to collaborate with each other without sharing their personal data involving their selected responses and preference feedback.
These requirements naturally align with the paradigm of \emph{federated learning} (FL), which enables multiple agents to collaboratively train their neural networks (NNs) without 
sharing their raw data \citep{mcmahan2017communication}.
In every round of FL, each agent calculates the local gradient (or parameters) of their NNs and sends them to the Central Server, who aggregates all received local gradients (typically via averaging) and then broadcasts the aggregated information back to the agents for further local updates of their NN parameters \citep{mcmahan2017communication}.

To extend contextual linear dueling bandits to the federated setting, it is natural to draw inspirations from previous works on federated contextual bandits \cite{shi2021federated}.
Notably, the work of \citep{wang2019distributed} has proposed a federated contextual linear bandit algorithm, which only requires the agents to share some sufficient statistics, including a $d$-dimensional vector and a $(d\times d)$-dimensional matrix.
Importantly, the two terms in the linear upper confidence bound (Lin-UCB) policy \cite{NIPS11_abbasi2011improved} for arm selection, including (a) the \emph{estimated linear function parameters} (i.e., the estimate of $\theta$) and (b) an exploration term, \emph{can be expressed in closed forms} in terms of the summation of these sufficient statistics.
As a result, in order to aggregate the information from all agents, the Central Server only needs to aggregate the sufficient statistics from all agents via a simple summation.
This allows every agent to effectively leverage the additional observations from the other agents to accelerate their bandit algorithm without requiring the exchange of raw observations.
This paradigm from \citep{wang2019distributed} has been widely adopted and extended to other bandit problems, such as federated neural bandits \citep{dai2022federated}.

However, in contextual linear dueling bandits, \emph{the estimated linear function parameters $\theta$ lack a closed-form expression} \citep{ICML22_bengs2022stochastic}.
Instead, we often need to estimate $\theta$ by minimizing a loss function via \emph{gradient descent} (GD).
This renders the federated bandit paradigm from \citep{wang2019distributed} inapplicable to our problem of contextual linear dueling bandits.
To resolve this challenge, we propose to allow the agents to \emph{collaboratively use GD to estimate the linear function parameters}.
Specifically, the joint loss function (to be minimized for parameter estimation) for all agents can be expressed in terms of a summation among all agents (more details in Sec.~\ref{sec:algo}). Therefore, the gradient of the joint loss function can be \emph{decoupled into the contributions from individual agents}. 
As a result, we let every agent calculate the local gradient of its own loss function and send it to the Central Server.
The Central Server then aggregates all local gradients (via summation) to attain the gradient of the joint loss function, which is then used for gradient descent.
Interestingly, in contrast to previous works on federated bandits \citep{shi2021federated,dai2022federated}, this novel approach bears a closer resemblance to the original federated learning (FL) paradigm which also involves the exchange of local gradients \citep{mcmahan2017communication}.


In this work, we firstly propose \textbf{a vanilla algorithm} named \emph{Federated Linear Dueling Bandit with Gradient Descent} (\texttt{FLDB-GD}), which adopts federated GD to estimate the linear function parameters $\theta$ in every iteration. 
A significant drawback of \texttt{FLDB-GD} is that it requires multiple rounds of gradient exchange between the Central Server and the agents in every iteration, making it impractical due to excessive communication costs.
Therefore, we also develop \textbf{a practical algorithm} named \emph{FLDB with Online GD} (\texttt{FLDB-OGD}), which only requires one round of gradient exchange after every $\tau\geq1$ iterations.
We perform rigorous theoretical analysis of the regret of our \texttt{FLDB-GD} and \texttt{FLDB-OGD} algorithms and show that they both enjoy sub-linear upper bounds on the cumulative regret (Sec.~\ref{sec:theory}).
Our theoretical results show that the vanilla \texttt{FLDB-GD} algorithm, which requires an excessive number of communication rounds, enjoys a tighter regret upper bound. 
Meanwhile, the practical \texttt{FLDB-OGD} algorithm has a worse regret upper bound yet is substantially more communication-efficient and hence more practical.
We also demonstrate a \emph{theoretical trade-off between the regret and communication complexity} of our \texttt{FLDB-OGD} algorithm.
Through extensive empirical experiments using synthetic and real-world functions, we show that both our algorithms consistently outperform single-agent linear dueling bandits (Sec.~\ref{sec:experiments}).
In addition, we also empirically demonstrate the benefit of having a larger number of agents, the robustness of our algorithms to the heterogeneity of the reward functions of different agents, and the practical trade-off between regret and communication for \texttt{FLDB-OGD}.


\vspace{-2mm}
\section{Background and Problem Setting}
\label{sec:background}
\vspace{-2mm}

At iteration $t$ of contextual dueling bandits,
the environment generates a set of $K$ arms, denoted as $\mathcal{X}_t\subset\mathcal{X} \subset \mathbb{R}^d$, where $\mathcal{X}$ represents the domain of all possible arms. The agent then selects a pair of arms $x_{t,1},x_{t,2} \in \mathcal{X}_t$ and receives feedback $y_t=\mathbbm{1}(x_{t,1} \succ x_{t,2})$, which is equal to $1$ if $x_{t,1}$ is preferred over $x_{t,2}$ and $0$ otherwise.

\textbf{Preference Model.}
Following the common practice in dueling bandits \citep{NeurIPS21_saha2021optimal, ICML22_bengs2022stochastic, arXiv24_li2024feelgood}, we assume that the preference feedback follows the Bradley-Terry-Luce (BTL) model \citep{AS04_hunter2004mm, Book_luce2005individual}.
Specifically, the utility of the arms are represented by a \emph{latent reward function} $f:\mathbb{R}^{d}\rightarrow \mathbb{R}$, which maps any arm $x$ to its corresponding reward value $f(x)$.
Here we assume that $f$ is a linear function $f(x) = \theta^{\top} \phi(x),\forall x$, in which $\theta\in\mathbb{R}^d$ are unknown parameters and $\phi(\cdot) \in\mathbb{R}^d$ denotes a known feature mapping. This reduces to standard linear dueling bandits when $\phi(\cdot)$ is the identity mapping.
Then, the probability that the first selected arm $x_{t,1}$ is preferred over the second selected arm $x_{t,2}$
is given by
\begin{align*}
\mathbb{P}\{x_{t,1} \succ x_{t,2}\} =\mathbb{P} \{ y_t = 1| x_{t,1}, x_{t,2}\}= \mu (f(x_{t,1}) - f(x_{t,2})). 
\end{align*}
Here 
$\mu(x) = 1/(1 + \mathrm{e}^{-x} )$ 
is the 
\emph{link function} for which we adopt the 
logistic function.

We list below the assumptions needed for our theoretical analysis, all of which are standard assumptions commonly adopted by previous works on dueling bandits
\citep{ICML22_bengs2022stochastic,ICML17_li2017provably}.
\begin{assumption}\label{assumption:basic}
\squishlisttwo
		\item $\kappa_\mu \triangleq \inf_{x,x' \in \mathcal{X}} \dot{\mu}(f(x) - f(x')) > 0$ for all pairs of arms~\footnote{Here $\dot{\mu}(\cdot)$ refers to the gradient of the link function.}.
		\item The link function $\mu : \mathbb{R} \rightarrow [0,1]$ is continuously differentiable and Lipschitz with constant $L_\mu$. For logistic function, $L_\mu \le 1/4$.
            \item The difference between feature maps is bounded $\norm{\phi(x_{t,1}) - \phi(x_{t,2})}_2 \leq 1$ for all arms. \\
\squishend
\vspace{-5mm}
\end{assumption}
\textbf{Performance Measure.} The goal of an agent in contextual dueling bandits is to minimize its \emph{regret}.
After selecting a pair of arms, denoted by $x_{t,1}$ and $x_{t,2}$, in iteration $t$, the learner incurs an instantaneous regret. In our theoretical analysis, we aim to analyze the following regret:
$
   r_t = 2f(x^*_t) - f(x_{t,1}) - f(x_{t,2})
$,
in which $x^*_t = \arg\max_{x\in \mathcal{X}_t} f(x)$ denotes the optimal arm in iteration $t$. 
After observing preference feedback for $T$ pairs of arms, the cumulative regret (or regret, in short) of a sequential policy is given by 
$
    R_T = \sum^T_{t=1} r_t = \sum^T_{t=1} \left(2 f(x^*_t) - f(x_{t,1}) - f(x_{t,2})\right)
$,
in which $x^*_t = \arg\max_{x\in \mathcal{X}_t} f(x)$ denotes the optimal arm in iteration $t$. 
A good policy should have sub-linear regret, i.e., $\lim_{T \to \infty}{R_T}/T = 0.$

\textbf{Federated Contextual Dueling Bandits.}
In federated contextual dueling bandits involving $N$ agents, we assume that all agents share the same latent reward function $f(x)=\theta^{\top} \phi(x)$.
This is consistent with the previous works on federated bandits \citep{shi2021federated,dai2022federated}.
In iteration $t$, every agent $i$ receives a separate set of arms $\mathcal{X}_{t,i} \subset \mathcal{X}$ and chooses from them a pair of arms denoted as $x_{t,1,i}$ and $x_{t,2,i}$.
In this case, we analyze the total cumulative regret from all $N$ agents:
$R_{T,N} = \sum^T_{t=1}\sum^N_{i=1} \left(2 f(x^*_{t,i}) - f(x_{t,1,i}) - f(x_{t,2,i})\right)$, 
in which $x^*_{t,i}={\arg\max}_{x\in\mathcal{X}_{t,i}}f(x)$.

Following the common practice in contextual bandits, we assume that the set of arms $\mathcal{X}_{t,i}$ are independently and identically sampled from the environment, and we make the following assumption on the distribution of the arm features $\mathcal{X}_{t,i}$.
\begin{assumption}\label{assumption:fed}
Let $\delta\in(0,1)$, define $\beta_t = \sqrt{2\log(1/\delta) + d\log\left( 1 + tN\kappa_{\mu}/(d\lambda) \right)}$.
For a fixed $\theta \in \mathbb{S}^{d}$, positive definite matrix $W \succeq \frac{\lambda}{\kappa_\mu} I_{d \times d}$, let 
$\widetilde{x}_{t,i,1} = \arg\max_{x\in\mathcal{X}_{t,i}}\theta^\top \phi(x)$, 
$\widetilde{x}_{t,i,2} = \arg\max_{x\in\mathcal{X}_{t,i}} \theta^\top \phi(x) + \frac{\beta_t}{\kappa_\mu}\lVert{\phi(x) - \phi(x_{1})}\rVert_{W^{-1}}$. 
Denote $\widetilde{\phi}_{t,i} = \phi(\widetilde{x}_{t,i,1}) - \phi(\widetilde{x}_{t,i,2})$, 
$\Sigma = \mathbb{E} \left[ \widetilde{\phi}_{t,i} {\widetilde{\phi}_{t,i}} ^T \right]$ 
and $\lambda_f = \displaystyle{\inf_{\theta, W, \beta}} \lambda_{\min} (\Sigma)$. We assume $\lambda_f>0$ is a positive constant.
\end{assumption}
Our Assumption \ref{assumption:fed} can be seen as a generalization of Assumption 3 of \citep{ding2021efficientalgorithmgeneralizedlinear} from generalized linear bandits to linear dueling bandits.
Such diversity assumptions are commonly adopted in the analysis of generalized linear bandits and dueling bandits \citep{li2017provablyoptimalalgorithmsgeneralized,wu2020stochasticlinearcontextualbandits,ding2021efficientalgorithmgeneralizedlinear}.
Intuitively, Assumption \ref{assumption:fed} implies that given the distribution from which the arm features are sampled and our arm selection strategy, the feature differences between the selected pairs of arms explore every direction of $\mathbb{R}^d$ with sufficient probability in every iteration to prevent blind spots in learning.
Specifically, this assumption guarantees that, on average, the feature differences supply adequate information in each iteration to facilitate sufficient exploration.

\vspace{-2mm}
\section{Federated Linear Dueling Bandits}
\label{sec:algo}
\vspace{-2mm}

In this section, we firstly introduce the vanilla algorithm: \texttt{FLDB-GD} (Sec.~\ref{subsec:algo:gd}), which results from an extension of linear dueling bandits to the federated setting and is impractical due to excessive communication costs. 
Next, we introduce our practical algorithm: \texttt{FLDB-OGD} (Sec.~\ref{subsec:algo:ogd}), which incurs significantly less communication costs while maintaining competitive performance.

\textbf{Local Arm Pair Selection.}
The local arm pair selection process is shared by both algorithms.
In iteration \( t \), each agent \( i \in [N] \) receives a set of \( K \) contexts, denoted as \( \mathcal{X}_{t,i} \) (line 3 of Algo.~\ref{algo:new:ogd:agent}), and selects two arms using the synchronized global parameters \( \theta_{\text{sync}} \) and \( W_{\text{sync}} \) (which contain information from all agents) following the classical linear dueling bandit (\texttt{LDB}) algorithm \citep{ICML22_bengs2022stochastic}. 
Specifically, the first arm $x_{t,1,i}$ is selected greedily based on \( \theta_{\text{sync}} \) (line 4 of Algo.~\ref{algo:new:ogd:agent}), which is the current estimated linear function parameters \emph{based on the data from all agents}.
Next, the second arm $x_{t,2,i}$ is chosen following an upper confidence bound strategy to balance exploration and exploitation (line 5 of Algo.~\ref{algo:new:ogd:agent}).
In this way, we encourage $x_{t,2,i}$ to both achieve high predicted reward and be different from $x_{t,1,i}$ as well as \emph{the previously selected arms from all agents}.
Then, the binary preference feedback between $x_{t,1,i}$ and $x_{t,2,i}$, denoted as $y_{t,i}$, is observed (line 6 of Algo.~\ref{algo:new:ogd:agent}).

Next, we discuss how \texttt{FLDB-GD} and \texttt{FLDB-OGD} facilitate the communication between the agents and the Central Server to obtain the synchronized global parameters \( \theta_{\text{sync}} \) and \( W_{\text{sync}} \).

\vspace{-1.5mm}
\subsection{The Vanilla Algorithm: \texttt{FLDB-GD}}
\label{subsec:algo:gd}
\vspace{-1.5mm}
To estimate the global parameters $\theta_{\text{sync}}$ in iteration $t$, the Central Server aims 
to find $\theta_{\text{sync}} = {\arg\min}_{\theta'} \mathcal{L}_t^{\text{fed}}(\theta')$, in which the loss function is given by:
\begin{equation}
\mathcal{L}_t^{\text{fed}}(\theta') = \sum^N_{i=1} \mathcal{L}_t^{i}(\theta') + \frac{1}{2}\lambda \lVert \theta'\rVert_2^2,
\label{eq:loss:func:fed}
\end{equation}
\begin{equation}
 \mathcal{L}_t^{i}(\theta') = - \sum^{t-1}_{s=1} \left( y_s^{i}\log \mu\left({\theta'}^{\top}\left[\phi(x_{s,1}^{i}) - \phi(x_{s,2}^{i})\right]\right) + (1-y_s^{i})\log \mu\left({\theta'}^{\top}\left[\phi(x_{s,2}^{i}) - \phi(x_{s,1}^{i})\right]\right) \right).
\label{eq:loss:func:fed:local}
\end{equation}
Equivalently, $\theta_{\text{sync}} = {\arg\min}_{\theta'} \mathcal{L}_t^{\text{fed}}(\theta')$ is the maximum likelihood estimate (MLE) of the unknown parameters $\theta$ \emph{given the data from all $N$ agents up to iteration $t-1$} \citep{ICML22_bengs2022stochastic}.

Since the loss function \eqref{eq:loss:func:fed} is convex for any $t$, it is natural to apply gradient descent (GD) to optimize~\eqref{eq:loss:func:fed}. 
Interestingly, the loss function \eqref{eq:loss:func:fed} is naturally decoupled across different agents, which allows the Central Server to \emph{estimate the gradient of \eqref{eq:loss:func:fed} using the gradients of the individual local loss functions \eqref{eq:loss:func:fed:local}}.
As a result, the agents only need to send their local gradients to the Central Server and are hence not required to share their local data $\{x_{t,1,i}, x_{t,2,i}, y_{t,i}\}$.
In addition, the Central Server updates the estimated $W_{\text{sync}}$ by aggregating all individual information matrices: $W_{\text{sync}} \leftarrow W_{\text{sync}} + \sum^N_{i=1} W_{\text{new},i}$.
For the reader's reference, we present this vanilla algorithm (named \texttt{FLDB-GD}) in Algos.~\ref {algo:new:gd:agent} and \ref{algo:new:gd:server} in App.~\ref{app:sec:fldb-gd}.

However, although the objective function~\eqref{eq:loss:func:fed} is convex, finding the optimal solution of the loss function at time $t$ requires multiple rounds of gradient descent. This \textbf{necessitates multiple rounds of gradient exchanges between the Central Server and the agents in every iteration}. This makes the vanilla algorithm \texttt{FLDB-GD} communication-inefficient and hence \textbf{impractical}.
To this end, in Sec.~\ref{subsec:algo:ogd}, we introduce our more practical algorithm \texttt{FLDB-OGD}, which uses online GD to estimate $\theta_{\text{sync}}$ and hence only needs a single round of communication after every $\tau\geq 1$ iterations.

\vspace{-1.5mm}
\subsection{The Practical Algorithm: \texttt{FLDB-OGD}}
\label{subsec:algo:ogd}
\vspace{-1.5mm}

\begin{algorithm}[h]
	\begin{algorithmic}[1]
        \STATE {\textbf{Initialization:}} $W_{\text sync} = \frac{\lambda}{\kappa_{\mu}}I_{d \times d}, \theta_{\text sync} = 0_{d \times 1}$
		\FOR{$t= 2, \ldots, T$}
            \STATE Receive contexts $\mathcal{X}_{t,i}$
            \STATE Choose the first arm  $x_{t,1,i} = \arg\max_{x\in\mathcal{X}_{t,i}}\theta_{\text{sync}}^\top \phi(x)$
            \STATE Choose the second arm $x_{t,2,i} = \arg\max_{x\in\mathcal{X}_{t,i}} \theta_{\text{sync}}^\top \left( \phi(x) - \phi(x_{t,1,i}) \right) + \frac{\beta_t}{\kappa_\mu}\lVert{\phi(x) - \phi(x_{t,1,i})}\rVert_{W_{\text{sync}}^{-1}}$
            \STATE Observe the preference feedback $y_{t,i} = \mathbbm{1}(x_{t,1,i}\succ x_{t,2,i})$ 
            \STATE Calculate local gradient $\nabla l_t^{i}(\widehat{\theta})$ \eqref{eq:loss:func:fed:local:ogd}
            \STATE Update $\nabla l_{\text{new}}^{i} \leftarrow \nabla l_{\text{new}}^{i} + \nabla l_t^{i}(\widehat{\theta})$
            \STATE Update $W_{\text{new},i} \leftarrow W_{\text{new},i} + \left(\phi(x_{t,1,i}) - \phi(x_{t,2,i})\right) \left(\phi(x_{t,1,i}) - \phi(x_{t,2,i})\right)^{\top}$
        \IF{$t \bmod \tau = 0$ (communication round starts)}
            \STATE Send $\{\nabla l_{\text{new}}^{i},W_{\text{new},i}\}$ to Central Server
            \STATE Receive $\{\theta_{\text{sync}}, W_{\text{sync}}, \widehat{\theta}\}$ from the Central Server
            \STATE $W_{\text{new},i} = \mathbf{0}$, $\nabla l_{\text{new}}^{i} = \mathbf{0}$
        \ENDIF
		\ENDFOR
	\end{algorithmic}
\caption{\texttt{FLDB-OGD} (Agent $i$)}
\label{algo:new:ogd:agent}
\end{algorithm}

\begin{algorithm}[h]
	\begin{algorithmic}[1]


            \STATE $t_c = t / \tau$
            \STATE Receive $\{\nabla l_{\text{new}}^{i},W_{\text{new},i}\}_{i=1,\ldots,N}$ from all $N$ agents
            \STATE Aggregate local gradients: $\nabla f_{t_c}^{\text{fed}}(\widehat{\theta}^{(t_c)}) = \sum^N_{i=1} \nabla l_{\text{new}}^{i}$ 
            \STATE Update parameters: $\eta_{t_c} = \frac{1}{\alpha t_c}$, $\widehat{\theta}^{(t_c+1)} =  \pi_S \left( \widehat{\theta}^{(t_c)} - \eta_{t_c} \nabla f_{t_c}^{\text{fed}}({ \widehat{\theta}}^{(t_c)}) \right)$, let $\widehat{\theta}=\widehat{\theta}^{(t_c+1)}$
            \STATE Update parameters: $\widetilde{\theta}^{(t_c+1)} = \frac{1}{t_c+1} \sum_{j = 1} ^{t_c+1} \widehat{\theta}^{(j)}$, let $\theta_{\text{sync}} = \widetilde{\theta}^{(t_c+1)}$
            \STATE Update $W_{\text{sync}} \leftarrow W_{\text{sync}} + \sum^N_{i=1} W_{\text{new},i}$
            \STATE Broadcast $\{\theta_{\text{sync}}, W_{\text{sync}}, \widehat{\theta}\}$ to all agents
	\end{algorithmic}
\caption{\texttt{FLDB-OGD} (Central Server)}
\label{algo:new:ogd:server}
\end{algorithm}

\texttt{FLDB-OGD} updates its estimate of $\theta_{\text{sync}}$ after every $\tau\geq1$ iterations (i.e., when $t \bmod \tau = 0$).
To estimate $\theta_{\text{sync}}$, the loss function our practical \texttt{FLDB-OGD} algorithm aims to minimize is the same as that of \texttt{FLDB-GD}~\eqref{eq:loss:func:fed},
but reformulated into the following form:
\begin{equation}
\mathcal{L}_t^{\text{fed}}(\theta') = \sum^t_{s=1} f^{\text{fed}}_s(\theta'),
\label{eq:loss:func:fed:ogd}
\end{equation}
where 
\begin{equation}
f^{\text{fed}}_s(\theta') =
\begin{cases} 
\displaystyle \sum_{i=1}^{N} l_s^{i}(\theta'), & \quad \text{if } s \neq 1, \\[15pt]
\displaystyle  \sum_{i=1}^{N} l_1^{i}(\theta') + \frac{\lambda}{2} \| \theta' \|_2^2, & \quad \text{if } s = 1,
\end{cases}
\label{eq:loss:func:fed:local:sum:ogd}
\end{equation}
\begin{equation}
    l_s^{i}(\theta') = - \Big( y_s^{i}\log\mu\big({\theta'}^{\top} \big[\phi(x_{s,1}^{i}) - \phi(x_{s,2}^{i}) \big]\big) + (1-y_s^{i})\log\mu\big({\theta'}^{\top}\big[\phi(x_{s,2}^{i}) - \phi(x_{s,1})\big]\big) \Big)
\label{eq:loss:func:fed:local:ogd}
\end{equation}
Intuitively, every individual loss function $f^{\text{fed}}_s$ \eqref{eq:loss:func:fed:local:sum:ogd} corresponds to the total loss of \emph{the data collected in iteration $s$} from all $N$ agents.
Inspired by the work of~\citep{ding2021efficientalgorithmgeneralizedlinear} which used OGD to estimate the parameters in generalized linear bandits, in a communication round $t_c=t/\tau$, we only use the gradients of $f^{\text{fed}}_s$ \eqref{eq:loss:func:fed:local:sum:ogd} from the newly collected data since the last communication round $t_c-1$
to update our estimation of $\theta_{\text{sync}}$.
That is, instead of GD, we use OGD to estimate $\theta_{\text{sync}}$.

At the beginning of our \texttt{FLDB-OGD} ($t=1$), the Central Server finds the minimizer of $\mathcal{L}_t^{\text{fed}}(\theta')$~\eqref{eq:loss:func:fed:ogd} at iteration $t=1$ 
(which is also the minimizer of $f^{\text{fed}}_1(\theta')$) denoted as $\widehat{\theta}^{(1)}$, and let $\widetilde{\theta}^{(1)} = \widehat{\theta}^{(1)}$.
This ensures that the subsequent estimated $\theta_{\text{sync}}$ always lies in a bounded ball centered at the groundtruth parameter $\theta$, which is needed in our theoretical analysis (Sec.~\ref{sec:theory}).
In the subsequent communication rounds $t_c=t/\tau$,
the server also maintains a ball centered at $\widehat{\theta}^{(t_c)}$ as the projection set during OGD: $\mathcal{S}=\left\{\theta:\left\|\theta-\widehat{\theta}^{(t_c)}\right\| \leq 2 r\right\}$, in which $r \triangleq \sqrt{ \left(2\log(1/\delta) + d\log\left( 1 + T N \kappa_{\mu}/(d\lambda) \right)\right) / (\lambda \kappa_{\mu}})$. 



In every iteration $t>1$, each agent $i$ keeps track of two parameters: $\nabla l_{\text{new}}^{i}$ (lines 7-8 of Algo.~\ref{algo:new:ogd:agent}) and $W_{\text{new},i}$ (line 9 of Algo.~\ref{algo:new:ogd:agent}), which represent the accumulated local gradients at 
$\widehat{\theta}$ 
and accumulated information matrix, respectively.
After every $\tau$ iterations, a communication round starts (line 10 of Algo.~\ref{algo:new:ogd:agent}), and every agent $i$ sends its local parameters $\nabla l_{\text{new}}^{i}$ and $W_{\text{new},i}$ to the Central Server (line 11 of Algo.~\ref{algo:new:ogd:agent}).
In communication round $t_c = t / \tau$, after the Central Server receives $\{\nabla l_{\text{new}}^{i},W_{\text{new},i}\}_{i=1,\ldots,N}$ from all $N$ agents, it aggregates all individual gradients $\nabla l_{\text{new}}^{i}$ via summation to obtain $\nabla f_{t_c}^{\text{fed}}(\widehat{\theta}^{(t_c)})$ following~\eqref{eq:loss:func:fed:local:sum:ogd} (line 3 of Algo.~\ref{algo:new:ogd:server}).
Then, the Central Server performs one step of projected gradient descent with step size \( \eta = \frac{1}{\alpha t_c} \) to obtain $\widehat{\theta}^{(t_c+1)}$ (line 4 of Algo.~\ref{algo:new:ogd:server}).  
Next, to leverage historical information, the server averages (i.e., averages) the past estimates \( \widehat{\theta}^{(s)} \) and uses the resulting \( \widetilde{\theta}^{(t_c+1)} \) as the updated estimate \( \theta_{\text{sync}} \) in round \( t \) (lines 5 of Algo.~\ref{algo:new:ogd:server}). 
In addition, the Central Server also updates the global information matrix $W_{\text{sync}}$ using the summation of the individual information matrices $W_{\text{new},i}$ (lines 6 of Algo.~\ref{algo:new:ogd:server}).
Finally, in line 7 of Algo.~\ref{algo:new:ogd:server}, the Central Server broadcasts $\theta_{\text{sync}}$, $W_{\text{sync}}$ and $\widehat{\theta}=\widehat{\theta}^{(t_c+1)}$ to all agents.
In the subsequent $\tau$ iterations, each agent $i$ uses $\theta_{\text{sync}}$ and $W_{\text{sync}}$ for arm pair selection (lines 4-5 of Algo.~\ref{algo:new:ogd:agent}), and uses $\widehat{\theta}=\widehat{\theta}^{(t_c+1)}$ to calculate the local gradients $\nabla l_{\text{new}}^{i}$ (line 8 of Algo.~\ref{algo:new:ogd:agent}).


\textbf{Communication Efficiency.}
Of note, our \texttt{FLDB-OGD} algorithm \emph{only needs one communication round after every $\tau\geq1$ iterations}. 
In contrast, the vanilla \texttt{FLDB-GD} algorithm (Sec.~\ref{subsec:algo:gd}) \emph{requires $M\geq 1$ communication rounds in every iteration}.
Therefore, \texttt{FLDB-OGD} is substantially more communication-efficient and hence more practical.
The parameter $\tau$ (i.e., the number of local update iterations) incurs a principled trade-off between the performance and communication efficiency of \texttt{FLDB-OGD}, which we will demonstrate both theoretically (Sec.~\ref{sec:theory}) and empirically (Sec.~\ref{sec:experiments}).

\vspace{-2mm}
\section{Theoretical Analysis}
\label{sec:theory}
\vspace{-2mm}

\subsection{Analysis of the Vanilla \texttt{FLDB-GD} Algorithm}
\label{subsec:theory:gd}
\vspace{-1mm}
We make the following assumption in our analysis of \texttt{FLDB-GD}.
\begin{assumption}\label{assumption:zero:gradient}
We assume that in every iteration of \texttt{FLDB-GD}, after $M$ rounds of gradient exchange (lines 8-11 of Algo.~\ref{algo:new:gd:agent}), 
the resulting $\theta_{\text{sync}}=\theta^{(M)}$ can exactly minimize \eqref{eq:loss:func:fed}, i.e., $\nabla\mathcal{L}_t^{\text{fed}}(\theta^{(M)})=0$. 
\end{assumption}
This assumption is commonly adopted by previous works on (non-federated) generalized linear bandits and dueling bandits \citep{li2017provablyoptimalalgorithmsgeneralized,NeurIPS21_saha2021optimal,ICML22_bengs2022stochastic}.
That is, it is a common practice in the literature to assume that the maximum likelihood estimation of the parameter $\theta$ is obtained exactly.
However, in the vanilla \texttt{FLDB-GD} algorithm, satisfying this assumption may necessitate a significantly large number of communication rounds between the agents and the Central Server.
Of note, the analysis of the practical \texttt{FLDB-OGD} algorithm (Sec.~\ref{subsec:theory:ogd}) does not require this assumption.

Denote $\theta_t\triangleq {\arg\min}_{\theta'} \mathcal{L}_t^{\text{fed}}(\theta')$, i.e., $\theta_t$ is the minimizer of the loss function $\mathcal{L}_t^{\text{fed}}$.
Note that according to Assumption \ref{assumption:zero:gradient}, we have that $\theta_{\text{sync}}= \theta_{t}$.
In our proof of the regret upper bound for \texttt{FLDB-GD}, a crucial step is to derive the following concentration bound.
\begin{lemma}\label{lemma:concentration:theta}
Let $\delta\in(0,1)$,
    $\beta_t \triangleq \sqrt{2\log(1/\delta) + d\log\left( 1 + tN\kappa_{\mu}/(d\lambda) \right)}$.
Then with probability of at least $1-\delta$, for all $t=1,\ldots,T$, we have that $\norm{\theta - \theta_{t}}_{V_t} \leq \frac{\beta_t}{\kappa_\mu}$.
\end{lemma}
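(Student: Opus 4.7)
The plan is to combine a first-order optimality argument for the federated MLE with a self-normalized martingale tail bound in the style of Abbasi-Yadkori et al. Throughout I will write $\phi_{s,i} \triangleq \phi(x_{s,1,i}) - \phi(x_{s,2,i})$ and identify $V_t$ with the scaled aggregate information matrix $V_t = (\lambda/\kappa_\mu) I + \sum_{s=1}^{t-1}\sum_{i=1}^N \phi_{s,i}\phi_{s,i}^\top$, which matches the initialization and update of $W_{\text{sync}}$ in Algorithms~\ref{algo:new:ogd:agent} and~\ref{algo:new:ogd:server}.

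First I would use Assumption~\ref{assumption:zero:gradient} to write $\nabla \mathcal{L}_t^{\text{fed}}(\theta_t) = 0$. Expanding via~\eqref{eq:loss:func:fed} and~\eqref{eq:loss:func:fed:local} and inserting $\pm\mu(\theta^\top \phi_{s,i})$ gives
\[
\lambda \theta_t + \sum_{s,i}\bigl[\mu(\theta_t^\top \phi_{s,i}) - \mu(\theta^\top \phi_{s,i})\bigr]\phi_{s,i} = S_t,\quad S_t \triangleq \sum_{s,i}\bigl[y_s^i - \mu(\theta^\top \phi_{s,i})\bigr]\phi_{s,i}.
\]
Applying the mean value theorem to the logistic link yields $(\lambda I + G_t)(\theta_t - \theta) = S_t - \lambda\theta$, where $G_t \triangleq \sum_{s,i}\dot\mu(z_{s,i})\phi_{s,i}\phi_{s,i}^\top$ for some intermediate points $z_{s,i}$. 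Assumption~\ref{assumption:basic} forces $\dot\mu(z_{s,i}) \ge \kappa_\mu$ on the relevant range, so $H_t \triangleq \lambda I + G_t \succeq \kappa_\mu V_t$. Taking the $H_t$-inner product of the identity with $\theta_t - \theta$ and Cauchy--Schwarz in the $H_t^{-1}$ metric then give
\[
\|\theta_t - \theta\|_{V_t} \;\le\; \frac{1}{\kappa_\mu}\bigl(\|S_t\|_{V_t^{-1}} + \lambda\|\theta\|_{V_t^{-1}}\bigr),
\]
where the extra $\lambda\|\theta\|_{V_t^{-1}}$ is easily absorbed since $V_t \succeq (\lambda/\kappa_\mu) I$ and $\theta \in \mathbb{S}^d$.

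The remaining step is to bound the martingale term $\|S_t\|_{V_t^{-1}}$. I would order the $Nt$ observations $(s,i)$ lexicographically into a single sequence and define the filtration $\{\mathcal{F}_j\}$ containing all past contexts, arm selections, and preference feedback, so that each $\phi_{s,i}$ is $\mathcal{F}_{j-1}$-measurable while the centered Bernoulli noise $y_s^i - \mu(\theta^\top \phi_{s,i})$ is $\mathcal{F}_j$-measurable and $\tfrac12$-sub-Gaussian given $\mathcal{F}_{j-1}$. The self-normalized concentration inequality of Abbasi-Yadkori et al., combined with the determinant-trace bound $\det(V_t)/\det(V_0) \le (1 + tN\kappa_\mu/(d\lambda))^d$ (which uses $\|\phi_{s,i}\| \le 1$ from Assumption~\ref{assumption:basic} and $V_0 = (\lambda/\kappa_\mu) I$), then yields $\|S_t\|_{V_t^{-1}} \le \sqrt{2\log(1/\delta) + d\log(1 + tN\kappa_\mu/(d\lambda))}$ uniformly in $t$ with probability at least $1-\delta$. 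Plugging this into the previous display gives $\|\theta - \theta_t\|_{V_t} \le \beta_t/\kappa_\mu$ as claimed.

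The main obstacle is the martingale construction in Step~3: within a single iteration $s$ the $N$ agents select arms in parallel from the common $\theta_{\text{sync}}$, so care is needed to choose a lexicographic ordering of $(s,i)$ pairs under which each $\phi_{s,i}$ is predictable and the noises remain conditionally independent. Because all agents share the same latent reward $f$ and the preference feedback $y_s^i$ is Bernoulli with mean $\mu(\theta^\top\phi_{s,i})$ conditional on the selected arms, this predictability and the $\tfrac12$-sub-Gaussian condition do hold once the ordering is fixed, after which the self-normalized bound applies verbatim and the $Nt$ factor enters only through the determinant-trace step.
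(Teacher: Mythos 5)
Your proposal is correct and follows essentially the same route as the paper's proof: the MLE first-order condition from Assumption~\ref{assumption:zero:gradient}, the mean value theorem on the link function with the $\kappa_\mu$ lower bound to pass from the Hessian-weighted norm to the $V_t$-norm, the self-normalized martingale bound of Abbasi-Yadkori et al., and the determinant-trace inequality with $\det(V_t)/\det(V_0) \le (1+tN\kappa_\mu/(d\lambda))^d$. If anything, you are slightly more careful than the paper in explicitly tracking the residual $\lambda\|\theta\|_{V_t^{-1}}$ term (which the paper silently drops by asserting $G_t(\theta)=0$) and in justifying the predictable ordering of the $Nt$ observations for the martingale argument.
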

Lemma \ref{lemma:concentration:theta} suggests that in every iteration $t$ of \texttt{FLDB-GD}, after running $M$ rounds of gradient exchange (lines 8-11 of Algo.~\ref{algo:new:gd:agent}), 
the resulting $\theta_{\text{sync}}=\theta_t$ is an accurate approximation of the groundtruth parameters $\theta$.
The regret of \texttt{FLDB-GD} can then be upper-bounded by the following theorem.
\begin{theorem}[\texttt{FLDB-GD}]
\label{theorem:gd}
With probability at least $1-\delta$, the overall regrets of all agents in all iterations satisfy:
\begin{equation}
\begin{split}
R_{T,N} &\leq 2N d\log \left( 1 + T N \kappa_{\mu}/(d\lambda) \right)  + 3 \sqrt{2} \frac{\beta_T}{\kappa_\mu} \sqrt{T 2 d\log \left( 1 + T N\kappa_{\mu}/(d\lambda) \right)}
\end{split}
\end{equation}
Ignoring all log factors, we have that
$
    R_{T,N} = \widetilde{O}\left(N d + \frac{d}{\kappa_\mu} \sqrt{T}\right)
$.
\end{theorem}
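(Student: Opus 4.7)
The plan is to follow the standard optimism-based regret analysis for linear dueling bandits, adapted to the federated setting in which all $N$ agents share a common design matrix $V_t$ and a common MLE estimate $\theta_t = \theta_{\text{sync}}$ formed from the data of all agents up to iteration $t-1$. Lemma \ref{lemma:concentration:theta} is the workhorse: it lets us replace any reward gap by an optimism/pessimism expression up to an additive $V_t^{-1}$-weighted norm term, which can then be controlled by an elliptical-potential-style argument summed over all $(t,i)$ pairs.

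The first step is to prove, for every agent $i$ and iteration $t$, the single-round bound
\[
r_{t,i} \;=\; 2 f(x_{t,i}^*) - f(x_{t,1,i}) - f(x_{t,2,i}) \;\leq\; 3 \frac{\beta_t}{\kappa_\mu}\, \big\| \phi(x_{t,1,i}) - \phi(x_{t,2,i}) \big\|_{V_t^{-1}}.
\]
I would express both gaps $f(x_{t,i}^*) - f(x_{t,1,i})$ and $f(x_{t,i}^*) - f(x_{t,2,i})$ in terms of the pivot feature $\Delta(x) \triangleq \phi(x) - \phi(x_{t,1,i})$. By Lemma \ref{lemma:concentration:theta} and Cauchy--Schwarz, $|\theta^\top \Delta(x) - \theta_t^\top \Delta(x)| \leq \frac{\beta_t}{\kappa_\mu}\|\Delta(x)\|_{V_t^{-1}}$; the UCB selection of $x_{t,2,i}$ then allows us to replace $x_{t,i}^*$ by $x_{t,2,i}$ in the resulting optimistic bound; and the greedy selection of $x_{t,1,i}$ (which maximizes $\theta_t^\top \phi(x)$) forces $\theta_t^\top \Delta(x_{t,2,i}) \le 0$ and $\theta_t^\top \Delta(x_{t,i}^*) \le 0$, discarding two nonpositive terms. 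Telescoping yields the stated per-round bound.

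The second step is to sum over $(t,i)$ using a federated elliptical potential argument. Because $|f(x)|$ is bounded under Assumption \ref{assumption:basic}, we have $r_{t,i}\le 2$, so $r_{t,i} \le \min\!\big(2,\, 3 \tfrac{\beta_T}{\kappa_\mu}\|\phi(x_{t,1,i}) - \phi(x_{t,2,i})\|_{V_t^{-1}}\big)$. Splitting by the event $\{\|\phi(x_{t,1,i}) - \phi(x_{t,2,i})\|_{V_t^{-1}} \ge 1\}$ versus its complement, bounding the large-norm contribution by $2$ per term and applying Cauchy--Schwarz to the small-norm contribution, reduces the analysis to controlling $\sum_{t,i}\min(1,\,\|\phi(x_{t,1,i}) - \phi(x_{t,2,i})\|_{V_t^{-1}}^2)$. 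A batched version of the elliptical potential lemma, applied to $V_{T+1} = V_1 + \sum_{t,i} u_{t,i} u_{t,i}^\top$ together with the AM--GM bound on $\det(V_{T+1})$ and $\|u_{t,i}\|\le 1$, yields $O(d\log(1 + TN\kappa_\mu/(d\lambda)))$, and the two surviving contributions reproduce the two terms of the stated inequality.

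The main obstacle is precisely this federated elliptical potential step: unlike the classical sequential setting, all $N$ per-iteration updates are made against the same $V_t$, and the natural inequality $V_{t,i}\succeq V_t$ goes in the wrong direction to apply the single-update potential lemma directly. The cleanest fix is to work with the determinant identity $\det(V_{t+1})/\det(V_t) = \det\!\big(I + V_t^{-1/2}\big(\sum_i u_{t,i} u_{t,i}^\top\big) V_t^{-1/2}\big)$ and apply $\log(1+x)\ge x/(1+x)$ eigenvalue-wise, which costs at most a factor that is polynomial in $N\kappa_\mu/\lambda$ hidden inside the $\log$; telescoping then bounds the full sum by $\log(\det(V_{T+1})/\det(V_1))$, which is $d\log(1 + TN\kappa_\mu/(d\lambda))$ by AM--GM on the trace. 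Properly tracking the regularizer $V_1 = \tfrac{\lambda}{\kappa_\mu}I$ and the initial ``burn-in'' iterations is what produces the $Nd\log(\cdot)$ leading constant; once this is in place, combining with the definition of $\beta_T$ in the statement of Lemma \ref{lemma:concentration:theta} gives the final $\widetilde{O}(Nd + (d/\kappa_\mu)\sqrt{T})$ rate.
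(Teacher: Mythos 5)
Your first step (the per-round bound $r_{t,i}\le 3\frac{\beta_t}{\kappa_\mu}\|\phi(x_{t,1,i})-\phi(x_{t,2,i})\|_{V_{t-1}^{-1}}$ via Lemma~\ref{lemma:concentration:theta}, Cauchy--Schwarz, the UCB rule for the second arm and the greedy rule for the first arm) is exactly the paper's argument. The gap is in your second step, and you have correctly located where the difficulty is but proposed a fix that does not work. You want to bound $\sum_{t}\sum_{i}\min\bigl(1,\|u_{t,i}\|^2_{V_t^{-1}}\bigr)$, where all $N$ updates in iteration $t$ are measured against the \emph{stale} matrix $V_t$, by $O\bigl(\log(\det V_{T+1}/\det V_1)\bigr)$ using $\det(V_{t+1})/\det(V_t)=\det\bigl(I+V_t^{-1/2}(\sum_i u_{t,i}u_{t,i}^\top)V_t^{-1/2}\bigr)$ and $\log(1+x)\ge x/(1+x)$ eigenvalue-wise. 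This comparison relates $\sum_j\lambda_j(A_t)$ to $\sum_j\log(1+\lambda_j(A_t))$, not $\sum_i\min(1,\|u_{t,i}\|^2_{V_t^{-1}})$ to the log-determinant increment; and the loss is a \emph{multiplicative} factor of order $1+\max_j\lambda_j(A_t)$, which can be as large as $N\kappa_\mu/\lambda$ --- it sits outside the logarithm, not ``hidden inside'' it. Concretely, if all $N$ agents happen to select the same feature difference $u$ with $\|u\|^2_{V_t^{-1}}=\epsilon$, then $\sum_i\min(1,\epsilon)=N\epsilon$ while $\log(\det V_{t+1}/\det V_t)=\log(1+N\epsilon)$; taking $\epsilon=N^{-1/2}$ makes the ratio $\Theta(\sqrt{N}/\log N)$, so after Cauchy--Schwarz your second regret term would pick up an extra $\mathrm{poly}(N)$ factor that the stated bound does not have. (The needed determinant inequality $\det(I+\sum_i w_iw_i^\top)\ge\prod_i(1+\|w_i\|^2)$ is simply false; the true inequality goes the other way.)

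The paper closes this gap differently: it defines the set of ``bad'' iterations $\mathcal{C}=\{t:\det V_t/\det V_{t-1}>2\}$, notes $|\mathcal{C}|\le\log(\det V_T/\det V)\le d\log(1+TN\kappa_\mu/(d\lambda))$, and charges each bad iteration the trivial regret $2N$ --- this, not the regularizer or burn-in, is what produces the $2Nd\log(\cdot)$ term. On the good iterations it introduces a hypothetical round-robin agent with up-to-date covariance $\widetilde V_{t-1,i}$ and uses Lemma 12 of Abbasi-Yadkori et al.\ to convert the stale norm into the fresh one at a cost of $\sqrt{\det V_t/\det V_{t-1}}\le\sqrt{2}$, after which the standard sequential elliptical potential lemma applies. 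You would need to replace your determinant-identity argument with this doubling-determinant decomposition (or an equivalent rare-switching device) for the claimed constants and the $\widetilde O(Nd+\frac{d}{\kappa_\mu}\sqrt{T})$ rate to go through.
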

The regret upper bound for \texttt{FLDB-GD} (Theorem \ref{theorem:gd}) is sub-linear in $T$.
Theorem \ref{theorem:gd} suggests that the average regret of $N$ agents in our \texttt{FLDB-GD} algorithm is upper-bounded by $\widetilde{O}\left(d + \frac{d}{N \kappa_\mu} \sqrt{T}\right)$, which becomes smaller with a larger number of agents $N$.
This demonstrates \emph{the benefit of collaboration}, because it shows that if a larger number of agents $N$ join the federation of our algorithm, they are guaranteed (on average) to achieve a smaller regret compared with running their contextual dueling bandit algorithms in isolation (i.e., $N=1$).
When there is a single agent, the regret upper bound from Theorem \ref{theorem:gd} reduces to $\widetilde{O}(d + d\sqrt{T}/\kappa_\mu) = \widetilde{O}(d\sqrt{T}/\kappa_\mu)$, which is of the same order as the classical contextual linear dueling bandit algorithm \cite{ICML22_bengs2022stochastic}.

As discussed in Sec.~\ref{subsec:algo:gd}, the vanilla \texttt{FLDB-GD} algorithm incurs excessive communication costs and is hence not practical.
In the next section, we analyze the regret of our practical \texttt{FLDB-OGD} algorithm.

\vspace{-1mm}
\subsection{Analysis of the Practical \texttt{FLDB-OGD} Algorithm}
\label{subsec:theory:ogd}
\vspace{-1mm}


For simplicity, we firstly analyze 
\texttt{FLDB-OGD} for $\tau=1$, and then generalize it to the case of $\tau>1$.

In the analysis of \texttt{FLDB-OGD}, we do not require Assumption \ref{assumption:zero:gradient}. In other words, the $\theta_{\text{sync}}=\widetilde{\theta}^{(t)}$ returned by Algo.~\ref{algo:new:ogd:server} is no longer the minimizer of $\mathcal{L}_t^{\text{fed}}$ in \eqref{eq:loss:func:fed:ogd}.
However, here we prove that as long as the number of agents $N$ is sufficiently large, 
the difference between $\theta_{\text{sync}}=\widetilde{\theta}^{(t)}$ and $\theta_t$ is still bounded.
To begin with, the following proposition proves that 
as long as the number of agents $N$ is large enough, 
the loss function $f_s^{\text{fed}}(\theta^{\prime})$ \eqref{eq:loss:func:fed:local:sum:ogd} in every iteration $s$ is strongly convex.
\begin{proposition}[$\alpha$-strongly convex]\label{prop:alpha:convex}
    Denote $\mathbbm{B}_{\eta} := \{ \theta^{\prime}: \|\theta^{\prime}-\theta\| \leq \eta\}$.
    For a constant $\alpha > 0$, let 
    \begin{equation*}
        N \geq \big( (C_1 \sqrt{d} + C_2 \sqrt{\log  2 / \delta }) / \lambda_f \big)^2 + 2\alpha / (\kappa_{\mu} \lambda_f),
    \end{equation*}
    where $C_1$ and $C_2$ are two universal constants.
     Then $f_s^{\text{fed}} (\theta^{\prime})$ is an $\alpha$-strongly convex function in $\mathbbm{B}_{3r}$, with probability at least $1-\delta$. 
\end{proposition}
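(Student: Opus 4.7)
The plan is to compute the Hessian of $f_s^{\text{fed}}$ directly, lower-bound it in the semidefinite order by a scaled sum of rank-one feature-difference matrices, and then invoke matrix concentration together with Assumption~\ref{assumption:fed} to show that this sum is close to $N\Sigma$ whenever $N$ is large enough.

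Writing $\phi_{s,i} := \phi(x_{s,1}^i) - \phi(x_{s,2}^i)$, a direct calculation for the BTL log-loss (whose second derivative in its argument is $\dot\mu$) yields
\begin{equation*}
\nabla^2 f_s^{\text{fed}}(\theta') \;=\; \sum_{i=1}^N \dot\mu\bigl(\theta'^\top \phi_{s,i}\bigr)\, \phi_{s,i} \phi_{s,i}^\top \;+\; \mathbbm{1}\{s=1\}\cdot \lambda I.
\end{equation*}
For any $\theta' \in \mathbbm{B}_{3r}$, Assumption~\ref{assumption:basic} gives $\|\phi_{s,i}\|_2 \leq 1$, so $|(\theta'-\theta)^\top \phi_{s,i}| \leq 3r$; combined with continuity of $\dot\mu$ and the strict positivity $\kappa_\mu > 0$ guaranteed by Assumption~\ref{assumption:basic}, this transfers the lower bound at $\theta$ to a uniform lower bound of the form $\dot\mu(\theta'^\top\phi_{s,i}) \geq \kappa_\mu$ on $\mathbbm{B}_{3r}$. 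Hence
\begin{equation*}
\nabla^2 f_s^{\text{fed}}(\theta') \;\succeq\; \kappa_\mu \sum_{i=1}^N \phi_{s,i}\phi_{s,i}^\top.
\end{equation*}

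Next I would argue that the feature differences $\phi_{s,i}$ chosen by Algorithm~\ref{algo:new:ogd:agent} coincide with the $\widetilde\phi_{s,i}$ of Assumption~\ref{assumption:fed} instantiated at $(\theta,W) = (\theta_{\text{sync}}, W_{\text{sync}})$: the matrix $W_{\text{sync}} \succeq (\lambda/\kappa_\mu)I$ by initialization and monotone updates, and the extra $-\theta_{\text{sync}}^\top \phi(x_{t,1,i})$ inside the second-arm rule is constant in the argmax. Since the context sets $\mathcal{X}_{s,i}$ are i.i.d.\ across $i$, the rank-one matrices $\phi_{s,i}\phi_{s,i}^\top$ are i.i.d.\ with common mean $\Sigma$ satisfying $\lambda_{\min}(\Sigma) \geq \lambda_f$ and operator norm at most $1$. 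A standard matrix Bernstein (or matrix Chernoff) inequality then yields, with probability at least $1-\delta$,
\begin{equation*}
\lambda_{\min}\!\Big(\sum_{i=1}^N \phi_{s,i}\phi_{s,i}^\top\Big) \;\geq\; N\lambda_f \;-\; C_1\sqrt{Nd}\;-\;C_2\sqrt{N\log(2/\delta)},
\end{equation*}
for universal constants $C_1,C_2$.

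Combining the two bounds, $\alpha$-strong convexity on $\mathbbm{B}_{3r}$ reduces to $N\lambda_f - A\sqrt{N}\lambda_f \geq \alpha/\kappa_\mu$ with $A := (C_1\sqrt{d}+C_2\sqrt{\log(2/\delta)})/\lambda_f$. Setting $c := \alpha/(\kappa_\mu\lambda_f)$ and solving the resulting quadratic in $\sqrt{N}$, then applying the elementary inequality $(a+b)^2 \leq 2a^2+2b^2$, yields the sufficient condition $N \geq A^2 + 2c$, which exactly matches the stated threshold. The main obstacle I anticipate is the uniform lower bound $\dot\mu(\theta'^\top \phi_{s,i}) \geq \kappa_\mu$ for all $\theta' \in \mathbbm{B}_{3r}$: Assumption~\ref{assumption:basic} controls $\dot\mu$ only at the true $\theta$, so a continuity/Lipschitz argument on $\dot\mu$ (or an implicit reabsorption of the $3r$-perturbation into the constant) is needed to justify carrying $\kappa_\mu$ through the inequality. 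A secondary technical point is confirming that $(\theta_{\text{sync}}, W_{\text{sync}})$ falls inside the admissible class over which the infimum defining $\lambda_f$ in Assumption~\ref{assumption:fed} is taken, so that the bound $\lambda_{\min}(\Sigma)\geq\lambda_f$ genuinely governs the Hessian's feature-difference distribution.
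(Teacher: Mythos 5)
Your proposal follows essentially the same route as the paper's proof: compute the Hessian, lower-bound it by $\kappa_\mu \sum_{i=1}^N \widetilde\phi_{s,i}\widetilde\phi_{s,i}^\top$ on $\mathbbm{B}_{3r}$, and invoke a matrix-concentration result (the paper cites Proposition 1 of Li et al.\ 2017 directly, which packages your matrix-Bernstein step and the quadratic in $\sqrt{N}$ into the stated sample-size threshold) together with Assumption~\ref{assumption:fed} to conclude $\lambda_{\min}\big(\sum_i \widetilde\phi_{s,i}\widetilde\phi_{s,i}^\top\big) \geq \alpha/\kappa_\mu$. The two technical obstacles you flag --- the uniform lower bound $\dot\mu(\theta'^\top\widetilde\phi_{s,i}) \geq \kappa_\mu$ over the ball rather than only at the true $\theta$, and the admissibility of $(\theta_{\text{sync}}, W_{\text{sync}})$ in the infimum defining $\lambda_f$ --- are both present and equally unaddressed in the paper's own proof, so they are not gaps relative to it.
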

Recall that every $f_s^{\text{fed}}(\theta^{\prime})=\sum_{i=1}^{N} l_s^{i}(\theta')$ corresponds to an individual loss function in our overall loss function $\mathcal{L}_t^{\text{fed}}(\theta')$ \eqref{eq:loss:func:fed:ogd} during the \texttt{FLDB-OGD} algorithm.
Therefore, 
we can make use of 
Proposition \ref{prop:alpha:convex} and the convergence result of OGD for strongly convex functions \cite{hazan2016introduction} to derive the following lemma.
\begin{lemma}\label{lemma:ogd:convergence}
    Under the same condition on $N$ as Proposition \ref{prop:alpha:convex}, with probability at least $1-\delta$, the following holds for all $t\geq 1$: 
         $\norm{\widetilde{\theta}^{(t)} - \theta_t}_{V_t} \leq \frac{N \sqrt{N + \frac{\lambda}{\kappa_{\mu}}}}{\alpha} \sqrt{1+\log t}$.
\end{lemma}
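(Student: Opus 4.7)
The plan is to reduce the lemma to the standard convergence guarantee of projected online gradient descent on $\alpha$-strongly convex losses (Theorem~3.3 of \cite{hazan2016introduction}) combined with Polyak averaging, and then translate the resulting Euclidean bound on $\|\widetilde{\theta}^{(t)} - \theta_t\|_2$ into a $V_t$-weighted bound using a worst-case estimate of $\lambda_{\max}(V_t)$.

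First I would check the hypotheses of the OGD theorem on the online sequence $\{f_s^{\text{fed}}\}_{s=1}^{t}$. Proposition~\ref{prop:alpha:convex} already supplies $\alpha$-strong convexity of each $f_s^{\text{fed}}$ on $\mathbbm{B}_{3r}$ with probability at least $1-\delta$. For the gradient-norm constant, the identity $\tfrac{\mathrm{d}}{\mathrm{d}x}\log\mu(x) = 1 - \mu(x)$ gives $\nabla l_s^{i}(\theta') = \bigl(\mu({\theta'}^{\top} z_{s,i}) - y_s^{i}\bigr)\,z_{s,i}$ with $z_{s,i} := \phi(x_{s,1,i}) - \phi(x_{s,2,i})$, so Assumption~\ref{assumption:basic} forces $\|\nabla l_s^{i}\|_2 \le 1$ and hence $\|\nabla f_s^{\text{fed}}\|_2 \le N$. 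I also need to verify that $\theta_t$ lies in the projection set $\mathcal{S}$ throughout, via an induction over communication rounds: the initialization $\widehat{\theta}^{(1)} = \arg\min f_1^{\text{fed}}$ is within $r$ of $\theta$ by a concentration argument analogous to Lemma~\ref{lemma:concentration:theta}, so the radius-$2r$ ball around $\widehat{\theta}^{(t_c)}$ contains both $\theta_t$ and subsequent iterates.

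Next I would run the standard projected-OGD telescoping argument. Writing $v_s := \|\widehat{\theta}^{(s)} - \theta_t\|_2^2$ and $u_s := f_s^{\text{fed}}(\widehat{\theta}^{(s)}) - f_s^{\text{fed}}(\theta_t)$, non-expansiveness of the projection combined with the strong-convexity identity $\nabla f_s^{\text{fed}}(\widehat{\theta}^{(s)})^{\top}(\widehat{\theta}^{(s)} - \theta_t) \ge u_s + (\alpha/2)\,v_s$ and the step size $\eta_s = 1/(\alpha s)$ produces the recursion $s\,v_{s+1} \le (s-1)\,v_s - (2/\alpha)\,u_s + N^2/(\alpha^2 s)$, which telescopes to
\[
(\alpha t/2)\,v_{t+1} + \sum_{s=1}^{t} u_s \le \frac{N^2(1+\log t)}{2\alpha}.
\]
Combining this with Jensen's inequality $\|\widetilde{\theta}^{(t)} - \theta_t\|_2^2 \le (1/t)\sum_s v_s$ (convexity of $\|\cdot\|_2^2$) and iterating the recursion to also control $\sum_s v_s$ yields the Euclidean bound $\|\widetilde{\theta}^{(t)} - \theta_t\|_2^2 \le N^2(1+\log t)/(\alpha^2 t)$.

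Finally, since $V_t = (\lambda/\kappa_{\mu})\,I + \sum_{s=1}^{t}\sum_{i=1}^{N} z_{s,i} z_{s,i}^{\top}$ with $\|z_{s,i}\|_2 \le 1$, we have $\lambda_{\max}(V_t) \le \lambda/\kappa_{\mu} + tN$, and therefore
\[
\|\widetilde{\theta}^{(t)} - \theta_t\|_{V_t}^2 \le \lambda_{\max}(V_t)\,\|\widetilde{\theta}^{(t)} - \theta_t\|_2^2 \le \Bigl(\tfrac{\lambda}{\kappa_{\mu}} + tN\Bigr)\frac{N^2(1+\log t)}{\alpha^2 t} \le \frac{N^2(N + \lambda/\kappa_{\mu})(1+\log t)}{\alpha^2},
\]
which is the lemma after taking square roots. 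The main obstacle is the middle step: the per-step surplus $u_s$ is not individually signed (since $\theta_t$ minimizes the cumulative loss rather than any single $f_s^{\text{fed}}$), so turning the telescoped regret inequality into a clean bound on the averaged iterate requires carefully exploiting the extra $(\alpha/2)\,v_s$ contribution from strong convexity at $\theta_t$ to absorb the sign-indeterminate $u_s$ terms; maintaining the invariant $\theta_t \in \mathcal{S}$ across all rounds is a secondary subtlety that links back to the concentration analysis of Section~\ref{subsec:theory:gd}.
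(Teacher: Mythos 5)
Your overall route is the paper's: $\alpha$-strong convexity of each $f_s^{\text{fed}}$ from Proposition~\ref{prop:alpha:convex}, the gradient bound $G=N$ (your computation of $\nabla l_s^{i}$ agrees with \eqref{eq:G:upper:bound}), the $\frac{G^2}{2\alpha}(1+\log t)$ projected-OGD regret bound against the comparator $\theta_t$, the containment of $\theta_t$ and the iterates in the projection ball via the concentration radius $r$, and the final conversion $\|\cdot\|_{V_t}\le\sqrt{\lambda_{\max}(V_t)}\,\|\cdot\|_2$ with $\lambda_{\max}(V_t)\le tN+\lambda/\kappa_{\mu}$ are all exactly the steps in App.~\ref{app:sec:proof:lemma:ogd:convergence}.

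The gap is in the middle step that you yourself flag: converting the regret inequality into a bound on $\|\widetilde{\theta}^{(t)}-\theta_t\|_2$. Your mechanism --- Jensen on $\|\cdot\|_2^2$ to get $\|\widetilde{\theta}^{(t)}-\theta_t\|^2\le\frac{1}{t}\sum_s v_s$, followed by ``iterating the recursion to control $\sum_s v_s$'' --- does not go through as described. Telescoping up to an intermediate index $k<t$ gives $k\,v_{k+1}\le-\frac{2}{\alpha}\sum_{s\le k}u_s+\frac{G^2}{\alpha^2}(1+\log k)$, and the partial sums $\sum_{s\le k}u_s$ are measured against $\theta_t$, the minimizer of the \emph{full} cumulative loss rather than of the prefix, so they can be negative and you have no lower bound for them; likewise, applying strong convexity at $\theta_t$ to the individual iterates leaves the cross term $\sum_s\nabla f_s^{\text{fed}}(\theta_t)^{\top}(\widehat{\theta}^{(s)}-\theta_t)$, which does not cancel because $\widehat{\theta}^{(s)}$ varies with $s$. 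The paper's resolution avoids $\sum_s v_s$ entirely: apply strong convexity of each $f_s^{\text{fed}}$ at $\theta_t$ evaluated at the \emph{single fixed point} $\widetilde{\theta}^{(t)}$, so that summing over $s$ collapses the linear terms to $\bigl(\sum_{s=1}^{t}\nabla f_s^{\text{fed}}(\theta_t)\bigr)^{\top}(\widetilde{\theta}^{(t)}-\theta_t)=0$ by first-order optimality of $\theta_t$, yielding
\[
\frac{\alpha t}{2}\|\widetilde{\theta}^{(t)}-\theta_t\|^2\;\le\;\sum_{s=1}^{t}\bigl(f_s^{\text{fed}}(\widetilde{\theta}^{(t)})-f_s^{\text{fed}}(\theta_t)\bigr)\;\le\;\sum_{s=1}^{t}\bigl(f_s^{\text{fed}}(\widehat{\theta}^{(s)})-f_s^{\text{fed}}(\theta_t)\bigr)\;\le\;\frac{N^2}{2\alpha}(1+\log t),
\]
where the middle inequality is the paper's online-to-batch step \eqref{eq:jensen}. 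With that replacement for your step (iv), the rest of your argument closes and reproduces the stated bound.
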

Lemma \ref{lemma:ogd:convergence} shows that the difference between $\theta_{\text{sync}}=\widetilde{\theta}^{(t)}$ (returned by Algo.~\ref{algo:new:ogd:server}) and $\theta_t\triangleq {\arg\min}_{\theta'} \mathcal{L}_t^{\text{fed}}(\theta')$ is bounded.
As a result, Lemma \ref{lemma:ogd:convergence}, combined with Lemma \ref{lemma:concentration:theta} which has provided an upper bound on the difference between $\theta_t$ and $\theta$, allows us to bound the difference between $\widetilde{\theta}^{(t)}$ and $\theta$.
That is, the $\theta_{\text{sync}}=\widetilde{\theta}^{(t)}$ returned by Algo.~\ref{algo:new:ogd:server} is an accurate estimation of the groundtruth linear function parameter $\theta$.


With these supporting lemmas, we can derive an upper bound on the cumulative regret of \texttt{FLDB-OGD} when $\tau=1$ (i.e., if a communication round occurs after every iteration).
\begin{theorem}[\texttt{FLDB-OGD} with $\tau=1$]\label{theorem:ogd}
    Ignoring all log factors, with probability of at least $1-\delta$, the overall regret of \texttt{FLDB-OGD} (when $\tau=1$) can be bounded by
    \begin{equation}
    \begin{split}
    R_{T,N} = \widetilde{O}\big(N d + \frac{d}{\kappa_\mu} \sqrt{T} +  \frac{N^{\frac{3}{2}} \sqrt{d}}{\alpha} \sqrt{T}\big)
    \end{split}
    \end{equation}
\end{theorem}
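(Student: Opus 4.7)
The plan is to mirror the regret decomposition used for \texttt{FLDB-GD} (Theorem~\ref{theorem:gd}), but replace the concentration bound for $\theta_{\text{sync}}$ with a composite bound that accounts for the additional OGD error. First, I would fix an iteration $t$ and agent $i$, and use the arm-selection rule (lines 4--5 of Algo.~\ref{algo:new:ogd:agent}) in exactly the same way as for \texttt{FLDB-GD}: since $x_{t,1,i}$ maximizes $\theta_{\text{sync}}^\top \phi(\cdot)$ and $x_{t,2,i}$ maximizes the UCB quantity, one can show that the instantaneous regret $r_{t,i} = 2 f(x^*_{t,i}) - f(x_{t,1,i}) - f(x_{t,2,i})$ is upper bounded by roughly $3\|\widetilde{\theta}^{(t)} - \theta\|_{V_t} \cdot \|\phi(x_{t,1,i}) - \phi(x_{t,2,i})\|_{W_{\text{sync}}^{-1}}$, up to the usual factor of $\beta_t/\kappa_\mu$ absorbed into the UCB width.

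Next, the key step is to bound $\|\widetilde{\theta}^{(t)} - \theta\|_{V_t}$. Here I would invoke a triangle inequality in the $V_t$-norm:
\begin{equation*}
\|\widetilde{\theta}^{(t)} - \theta\|_{V_t} \;\leq\; \|\widetilde{\theta}^{(t)} - \theta_t\|_{V_t} + \|\theta_t - \theta\|_{V_t} \;\leq\; \frac{N\sqrt{N + \lambda/\kappa_\mu}}{\alpha}\sqrt{1+\log t} + \frac{\beta_t}{\kappa_\mu},
\end{equation*}
where the first term is controlled by Lemma~\ref{lemma:ogd:convergence} (the OGD convergence guarantee, which applies since Proposition~\ref{prop:alpha:convex} ensures $\alpha$-strong convexity when $N$ is large enough) and the second is controlled by the MLE concentration of Lemma~\ref{lemma:concentration:theta}. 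This cleanly separates the regret contribution into a ``statistical'' piece (same as \texttt{FLDB-GD}) plus an ``optimization'' piece stemming from the fact that OGD does not drive the loss gradient exactly to zero.

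The third step is to sum over $t\in[T]$ and $i\in[N]$. For the $\|\cdot\|_{W_{\text{sync}}^{-1}}$ factor, I would apply the standard elliptical potential / self-normalized lemma (as in the \texttt{FLDB-GD} proof) to get $\sum_{t,i}\|\phi(x_{t,1,i}) - \phi(x_{t,2,i})\|_{W_{\text{sync}}^{-1}}^2 = \widetilde{O}(d)$, and Cauchy--Schwarz to convert this into a $\sqrt{TN \cdot d}$ bound on the sum of norms. Combining with the statistical error term $\beta_T/\kappa_\mu = \widetilde{O}(\sqrt{d}/\kappa_\mu)$ reproduces the $\widetilde{O}(Nd + \frac{d}{\kappa_\mu}\sqrt{T})$ portion; combining with the OGD error term $\frac{N\sqrt{N+\lambda/\kappa_\mu}}{\alpha}\sqrt{1+\log t} = \widetilde{O}(N^{3/2}/\alpha)$ (treating $\lambda/\kappa_\mu$ as a constant relative to $N$) produces the new $\widetilde{O}(\frac{N^{3/2}\sqrt{d}}{\alpha}\sqrt{T})$ term. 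The log factors from $\sqrt{1+\log t}$ and $\beta_t$ get absorbed into the $\widetilde{O}$.

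The main obstacle I anticipate is bookkeeping around the $V_t$ vs.\ $W_{\text{sync}}$ matrices: Lemma~\ref{lemma:ogd:convergence} is stated in the $V_t$-norm, but the arm-selection UCB uses the $W_{\text{sync}}^{-1}$-norm, so I need to verify that these design matrices coincide (or are comparable up to the regularizer $\lambda/\kappa_\mu$) at the synchronization moments, which in the $\tau=1$ case holds by construction since a communication round occurs every iteration. A second subtle point is the factor $N^{3/2}$ in the OGD error: it enters through the $N$-fold summed loss that drives the per-iteration gradient norm up by $N$, and through the $\alpha$-strong-convexity constant being proportional to $N$ via Proposition~\ref{prop:alpha:convex}, so I would trace this carefully so the final dependence on $N$ and $\alpha$ matches the stated bound.
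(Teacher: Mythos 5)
Your proposal follows essentially the same route as the paper's proof: the triangle inequality $\|\widetilde{\theta}^{(t)} - \theta\|_{V_t} \leq \|\widetilde{\theta}^{(t)} - \theta_t\|_{V_t} + \|\theta_t - \theta\|_{V_t}$ combining Lemma~\ref{lemma:ogd:convergence} with Lemma~\ref{lemma:concentration:theta}, the resulting widened confidence width in the instantaneous-regret bound, and then the same good/bad-iteration and elliptical-potential argument as in the \texttt{FLDB-GD} analysis. The decomposition into a statistical piece plus an OGD optimization piece, and the tracing of the $N^{3/2}/\alpha$ factor, match the paper's argument step for step.
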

The regret upper bound of our \texttt{FLDB-OGD} algorithm is also sub-linear in $T$.
Compared with \texttt{FLDB-GD} (Theorem \ref{theorem:gd}), the regret upper bound for \texttt{FLDB-OGD} has an additional term of $\widetilde{O}(\frac{N^{3/2} \sqrt{d}}{\alpha}\sqrt{T})$.
This is the loss we suffer for not ensuring that the $\theta_{\text{sync}}=\widetilde{\theta}^{(t)}$ returned by Algo.~\ref{algo:new:ogd:server} could achieve the minimum of $\mathcal{L}_t^{\text{fed}}$ \eqref{eq:loss:func:fed:ogd}.
On the other hand, by paying this cost in terms of regrets, \texttt{FLDB-OGD} gains considerably smaller communication costs than \texttt{FLDB-GD}.
This is because \texttt{FLDB-OGD} only needs a single round of communication after every $\tau$ iterations, whereas \texttt{FLDB-GD} requires a large number of communication rounds in every iteration to find the minimum of $\mathcal{L}_t^{\text{fed}}(\theta')$ \eqref{eq:loss:func:fed}.

Unlike the theoretical result for \texttt{FLDB-GD} (Theorem \ref{theorem:gd}), the regret upper bound in (Theorem \ref{theorem:ogd}) is not improved as the number $N$ of agents is increased.
This is because in the proof of Lemma \ref{lemma:ogd:convergence} (App.~\ref{app:sec:proof:lemma:ogd:convergence}), we have made use of the convergence of OGD for strongly convex functions  \cite{hazan2016introduction}, which requires an upper bound on the expected norm of the gradient: $G^2 \geq E\|\nabla f_{s}\|^2$.
We have shown that the worst-case choice of $G$ is $G=N$ (see \eqref{eq:G:upper:bound} in App.~\ref{app:sec:proof:lemma:ogd:convergence}), which contributed a dependency of $N$ to the last term in the regret upper bound in Theorem \ref{theorem:ogd}.
In other words, if we additionally assume that there exists an upper bound $G$ on expected gradient norm which is independent of $N$, then the last term in Theorem \ref{theorem:ogd} can be replaced by $\frac{G\sqrt{N} \sqrt{d}}{\alpha} \sqrt{T}$. This would then make our average regret upper bound (averaged over $N$ agents) become tighter with a larger number $N$ of agents.

Now we analyze the regret of our practical \texttt{FLDB-OGD} algorithm when $\tau>1$, i.e., when a communication round occurs after multiple iterations.
\begin{proposition}[\texttt{FLDB-OGD} with $\tau>1$]\label{prop:regret:local:update}
    Ignoring all log factors, with probability of at least $1-\delta$, the overall regret of \texttt{FLDB-OGD} (when $\tau>1$) can be bounded by
    \begin{equation}
    R_{T,N} = \widetilde{O}\big(\tau N d + \frac{d}{\sqrt{\tau} \kappa_\mu} \sqrt{T} +  \tau\frac{N^{\frac{3}{2}} \sqrt{d}}{\alpha} \sqrt{T}\big)
    \end{equation}
\end{proposition}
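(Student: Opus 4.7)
The plan is to adapt the analysis of Theorem \ref{theorem:ogd} (the $\tau=1$ case) to handle delayed synchronization. The two core ingredients that need generalization are Lemma \ref{lemma:ogd:convergence} (the OGD convergence bound) and the UCB-style per-iteration regret decomposition together with the summation of the resulting confidence widths against a stale $W_{\text{sync}}$. Once each piece is generalized, the final regret is obtained by recombining them in the same way as in the proof of Theorem \ref{theorem:ogd}.

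First, I would re-derive Lemma \ref{lemma:ogd:convergence} for $\tau > 1$. In this regime the server performs only $T/\tau$ OGD steps, and the gradient used at step $t_c$ is the aggregate $\nabla f_{t_c}^{\text{fed}}(\widehat{\theta}^{(t_c)}) = \sum_{i=1}^{N} \sum_{s=(t_c-1)\tau+1}^{t_c\tau} \nabla l_s^{i}(\widehat{\theta})$, which sums $\tau N$ individual contributions instead of $N$. Two quantities in the standard OGD analysis change accordingly: the per-step gradient-norm bound becomes $G = O(\tau N)$ rather than $O(N)$, and the per-step strong-convexity parameter of $f_{t_c}^{\text{fed}}$ is $\tau\alpha$ because it is a sum of $\tau$ per-iteration $\alpha$-strongly convex functions (applying Proposition \ref{prop:alpha:convex} at each iteration within the window and union-bounding the failure events). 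Plugging these into the standard strongly-convex OGD guarantee \cite{hazan2016introduction} and then converting from Euclidean to $V_{t_c\tau}$-norm via the crude bound $\lambda_{\max}(V_{t_c\tau}) = O(t_c\tau N)$ yields an error that carries an extra multiplicative $\tau$ factor compared with Lemma \ref{lemma:ogd:convergence}, matching the $\tau$ multiplier in front of the $N^{3/2}\sqrt{d}\sqrt{T}/\alpha$ term of the final bound.

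Next, I would mirror the regret decomposition used in Theorem \ref{theorem:ogd}. For any iteration $t$ in window $t_c$, we have $(\theta_{\text{sync}}, W_{\text{sync}}) = (\widetilde{\theta}^{(t_c)}, V_{(t_c-1)\tau})$. Combining Lemma \ref{lemma:concentration:theta} applied at iteration $(t_c-1)\tau$ with the new OGD error bound yields $\lVert \theta - \widetilde{\theta}^{(t_c)} \rVert_{W_{\text{sync}}} \leq \beta_t/\kappa_\mu + \widetilde{\epsilon}_{t_c}$, which validates the UCB index and implies
\[
r_{t,i} \;\leq\; 2\,\frac{\beta_t + \widetilde{\epsilon}_{t_c}}{\kappa_\mu}\,\lVert \phi(x_{t,1,i}) - \phi(x_{t,2,i}) \rVert_{W_{\text{sync}}^{-1}}.
\]
Summing over $(t,i)$ and applying Cauchy--Schwarz reduces the remaining task to controlling $\sum_t \sum_i \lVert \phi_{t,i} \rVert_{W_{\text{sync}}^{-1}}^2$ against stale information matrices. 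The standard delayed-synchronization trick from the federated linear bandit literature---comparing $\det(V_t)$ to $\det(W_{\text{sync}})$ across the rank-one updates that accumulate within one window---then converts this into the familiar elliptical-potential quantity at the cost of an overhead that produces the $\tau$-dependence in the first term and modifies the $d\sqrt{T}/\kappa_\mu$ term as stated.

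The main obstacle, in my view, is quantifying the price of a stale $W_{\text{sync}}$ sharply: for $\tau=1$ the UCB matrix coincides with the running information matrix and the elliptical potential lemma applies directly, while for $\tau>1$ one must bound the determinant ratio $\det(V_t)/\det(W_{\text{sync}})$ across the up-to-$\tau N$ rank-one updates contained in a window to recover the stated scaling. A secondary difficulty is the bookkeeping around the OGD analysis: the algorithm's step size $1/(\alpha t_c)$ is set against the per-iteration strong-convexity parameter $\alpha$ rather than the per-window parameter $\tau\alpha$, and the interaction of $G=O(\tau N)$, $\tau\alpha$-strong convexity, and only $T/\tau$ OGD steps must be combined carefully, while also verifying that the projection set $\mathcal{S} = \{\theta : \lVert \theta - \widehat{\theta}^{(t_c)} \rVert \leq 2r\}$ still contains the relevant MLE $\theta_{t_c\tau}$ so that the OGD convergence guarantee continues to apply.
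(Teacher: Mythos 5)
Your proposal takes a genuinely different route from the paper. The paper's proof of Proposition~\ref{prop:regret:local:update} is a three-line reduction: because the arm parameters are frozen within each window and contexts are sampled i.i.d., a run with $N$ agents, horizon $T$, and local update period $\tau$ is statistically equivalent to a run with $N'=N\tau$ agents, horizon $T'=T/\tau$, and period $1$; substituting $(T',N')$ into Theorem~\ref{theorem:ogd} immediately gives $N'd=\tau Nd$, $\tfrac{d}{\kappa_\mu}\sqrt{T'}=\tfrac{d}{\sqrt{\tau}\kappa_\mu}\sqrt{T}$, and $\tfrac{N'^{3/2}\sqrt{d}}{\alpha}\sqrt{T'}=\tau\tfrac{N^{3/2}\sqrt{d}}{\alpha}\sqrt{T}$. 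Your direct re-derivation (re-running the OGD analysis per window and handling the stale $W_{\text{sync}}$ via determinant ratios) is the standard alternative and could in principle be carried out, but as written it does not land on the stated bound, for two concrete reasons.

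First, your own ingredients do not produce the claimed "extra multiplicative $\tau$ factor" in the OGD error. With $G=O(\tau N)$ and per-window strong convexity $\tau\alpha$, the strongly convex OGD bound gives $\|\widetilde{\theta}^{(t_c)}-\theta_{t_c\tau}\|\le \tfrac{G}{\tau\alpha}\sqrt{(1+\log t_c)/t_c}=\tfrac{N}{\alpha}\sqrt{(1+\log t_c)/t_c}$, and multiplying by $\sqrt{\lambda_{\max}(V_{t_c\tau})}=O(\sqrt{t_c\tau N})$ yields an extra $\sqrt{\tau}$, not $\tau$; if instead you only use $\alpha$-strong convexity (which is what the algorithm's step size $1/(\alpha t_c)$ actually matches), you get an extra $\tau^{3/2}$. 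Neither reproduces the stated $\tau$ multiplier, whereas the reduction does, because there the relevant horizon is $T'=T/\tau$ and the factor $(N\tau)^{3/2}\sqrt{T/\tau}=\tau N^{3/2}\sqrt{T}$ comes out exactly. Second, the $1/\sqrt{\tau}$ improvement in the middle term has no source in a direct analysis: $\beta_t$ and the elliptical-potential sum both depend on the total observation count $TN$, which is unchanged by $\tau$, so a direct argument would give $\tfrac{d}{\kappa_\mu}\sqrt{T}$ rather than $\tfrac{d}{\sqrt{\tau}\kappa_\mu}\sqrt{T}$. Both discrepancies disappear once you adopt the paper's reinterpretation of one communication window as one "iteration" of the $\tau=1$ algorithm with $N\tau$ agents; I would recommend restructuring your argument around that reduction rather than patching the direct analysis.
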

\textbf{Trade-off between Regrets and Communication.}
As long as the number of agents $N$ is large enough (i.e., ensured by Prop.~\ref{prop:alpha:convex}), the second term in the regret upper bound from Prop.~\ref{prop:regret:local:update} is dominated by the other two terms.
This implies that a larger number of local updates $\tau$ (which indicates less frequent communication between the agents and the Central Server)
results in a worse regret upper bound for \texttt{FLDB-OGD}.
On the other hand, a larger $\tau$ reduces the total number of communication rounds by a factor of $1/\tau$. This induces a \emph{theoretical trade-off between regrets and communication complexity}, which is also validated by our empirical experiments (Sec.~\ref{sec:experiments}).

\vspace{-2mm}
\section{Experiments}
\label{sec:experiments}
\vspace{-2mm}

Here we evaluate the performance of our algorithms using both synthetic and real-world experiments. 
We let $\tau=1$ unless specified otherwise.

\textbf{Experimental Settings.}
In the synthetic experiments, we generate the groundtruth linear function parameters $\theta$ in each experiment by randomly sampling from the standard Gaussian distribution. 
In each round, every agent receives $K$ arms (i.e., contexts) randomly generated from the standard Gaussian distribution $\mathcal{N}(0, I)$ with dimension $d$. 
For \texttt{FLDB-GD}, we adopt the \texttt{LBFGS} method from \texttt{PyTorch}~\citep{paszke2019pytorchimperativestylehighperformance} to efficiently find the minimizer of the loss function~\eqref{eq:loss:func:fed} in every iteration.
All figures in this section plot the \emph{average cumulative regret} of all $N$ agents up to a given iteration, which allows us to inspect the benefit brought by the federated setting to an agent on average. 
To verify the benefit of collaboration in the federated setting, we compare our algorithms with the single-agent baseline of linear dueling bandits (\texttt{LDB}) \cite{ICML22_bengs2022stochastic}, 
in which every agent simply performs a local linear dueling bandit algorithm in isolation.
For real-world evaluations, we adopt the MovieLens dataset~\cite{movielens}, following the experimental setup from~\cite{wang2023onlineclusteringbanditsmisspecified}. 
More details on the experimental settings are deferred to
App.~\ref{app:sec:more:exp:details}.

\begin{figure*}[t]
\vspace{-2mm}
     \centering
     \begin{tabular}{cccc}
         \hspace{-0.6cm} 
         \includegraphics[width=0.25\linewidth]{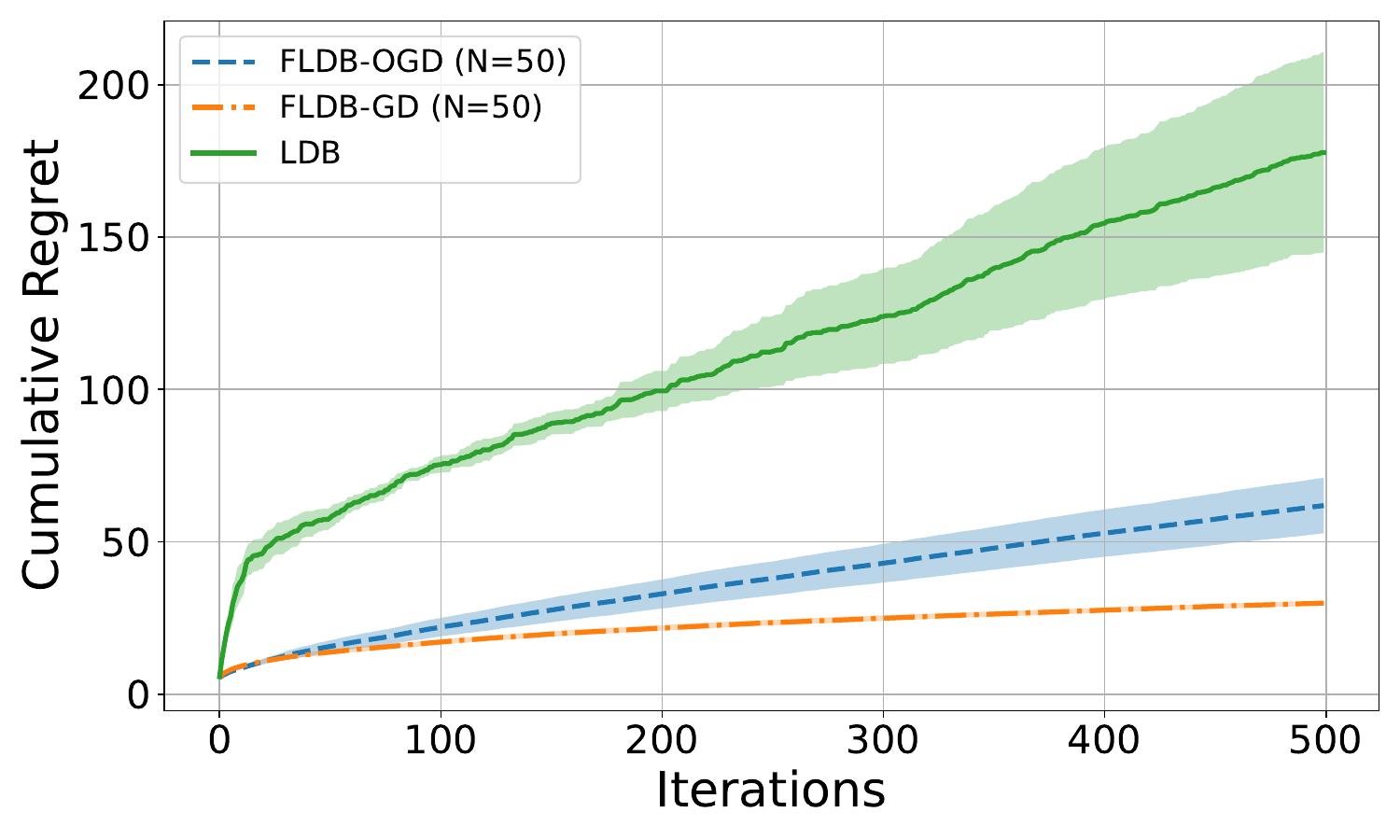} & 
         
         \hspace{-0.4cm} 
         \includegraphics[width=0.25\linewidth]{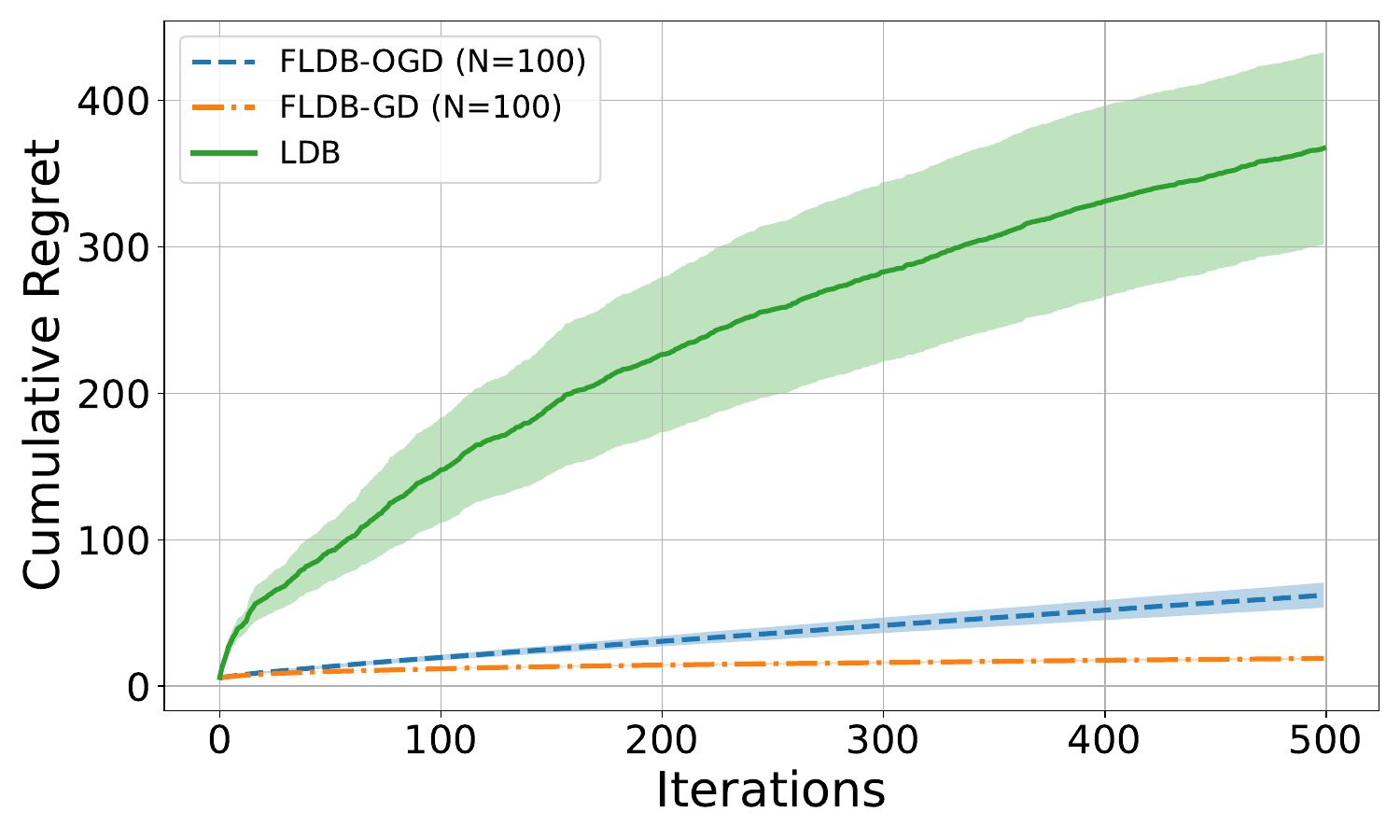}& 
         
         \hspace{-0.4cm}
         \includegraphics[width=0.25\linewidth]{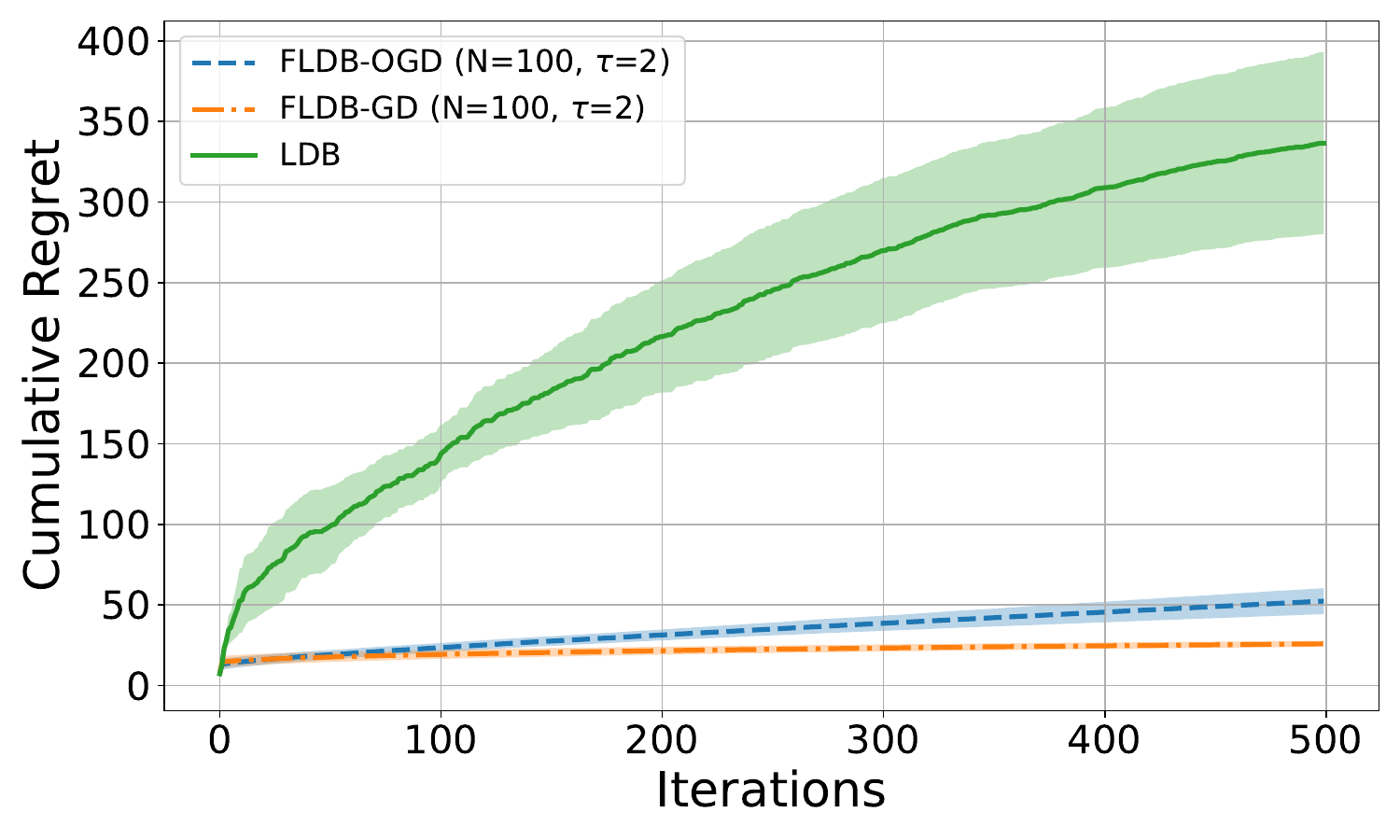} &  
         
         \hspace{-0.4cm}   
         \includegraphics[width=0.25\linewidth]{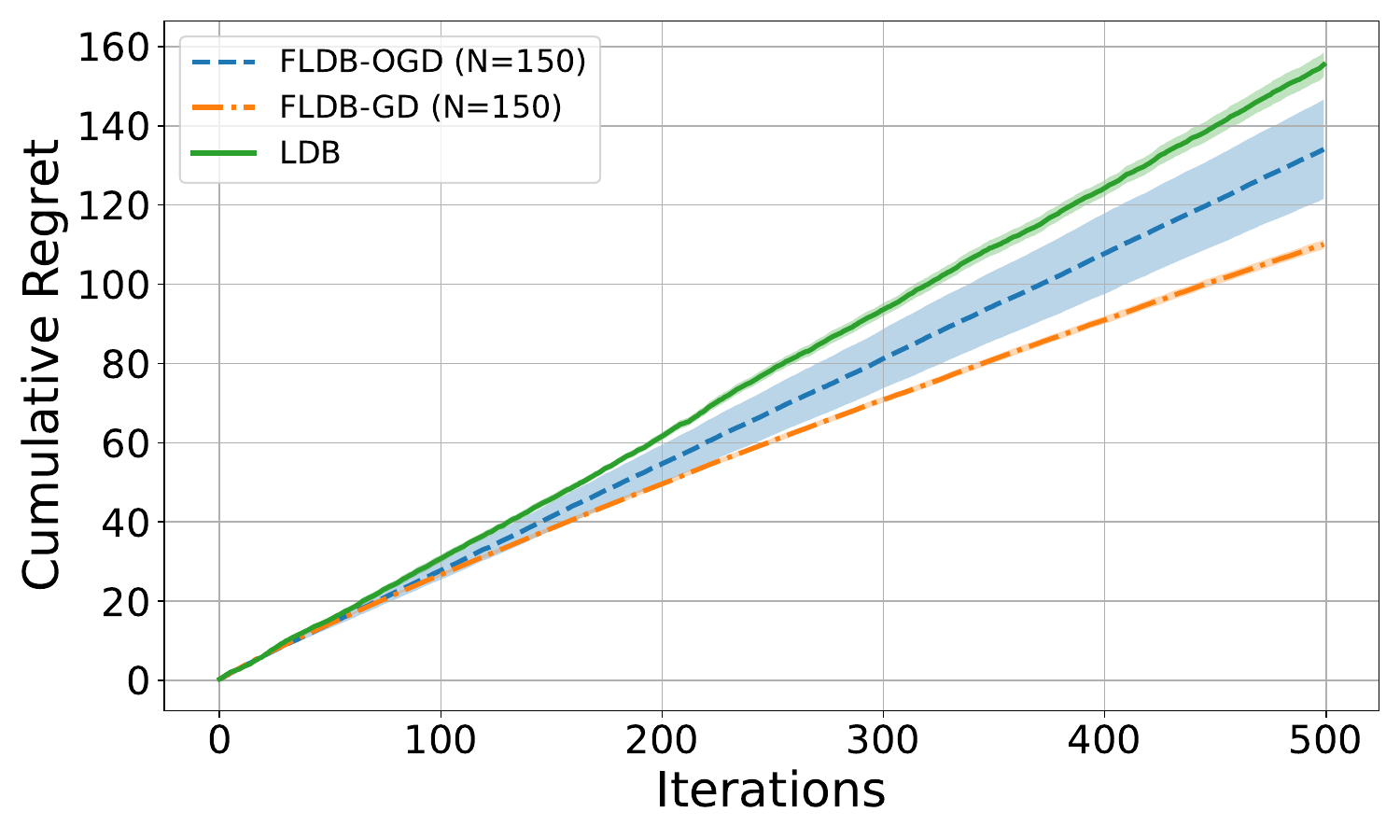}
         \\
         {\hspace{-0.6cm} {\footnotesize(a)~$N=50$}} & {\hspace{-0.4cm} {\footnotesize(b)~$N=100$}} & {\hspace{-0.4cm}{\footnotesize(c)~$N=100, \tau=2$}} &
         {\hspace{-4mm} {\footnotesize(d)~MovieLens}} 
         \\
         {\hspace{-0.6cm} {\footnotesize$(K=10, d=5)$}} & {\hspace{-0.4cm} {\footnotesize$(K=10, d=5)$}} & {\hspace{-0.4cm}{\footnotesize$(K=10, d=5)$}} &
         {\hspace{-4mm} {\footnotesize$(K=5, d=10)$}} 
     \end{tabular}
\vspace{-2.8mm}
     \caption{
    Cumulative regret for different methods with varying numbers of agents: (a) $N=50$, (b) $N=100$, 
    (c) $N=100,\tau=2$ under the number of arms $K=10$ and dimension $d=5$, and (d) MovieLens Dataset with $K=5$ and $d=10$.
}

     \label{fig:exp:synth_1}
\vspace{-1.4mm}
\end{figure*}

\begin{figure*}[t]
\vspace{-2mm}
     \centering
     \begin{tabular}{cccc}
         \hspace{-0.6cm} 
         \includegraphics[width=0.25\linewidth]{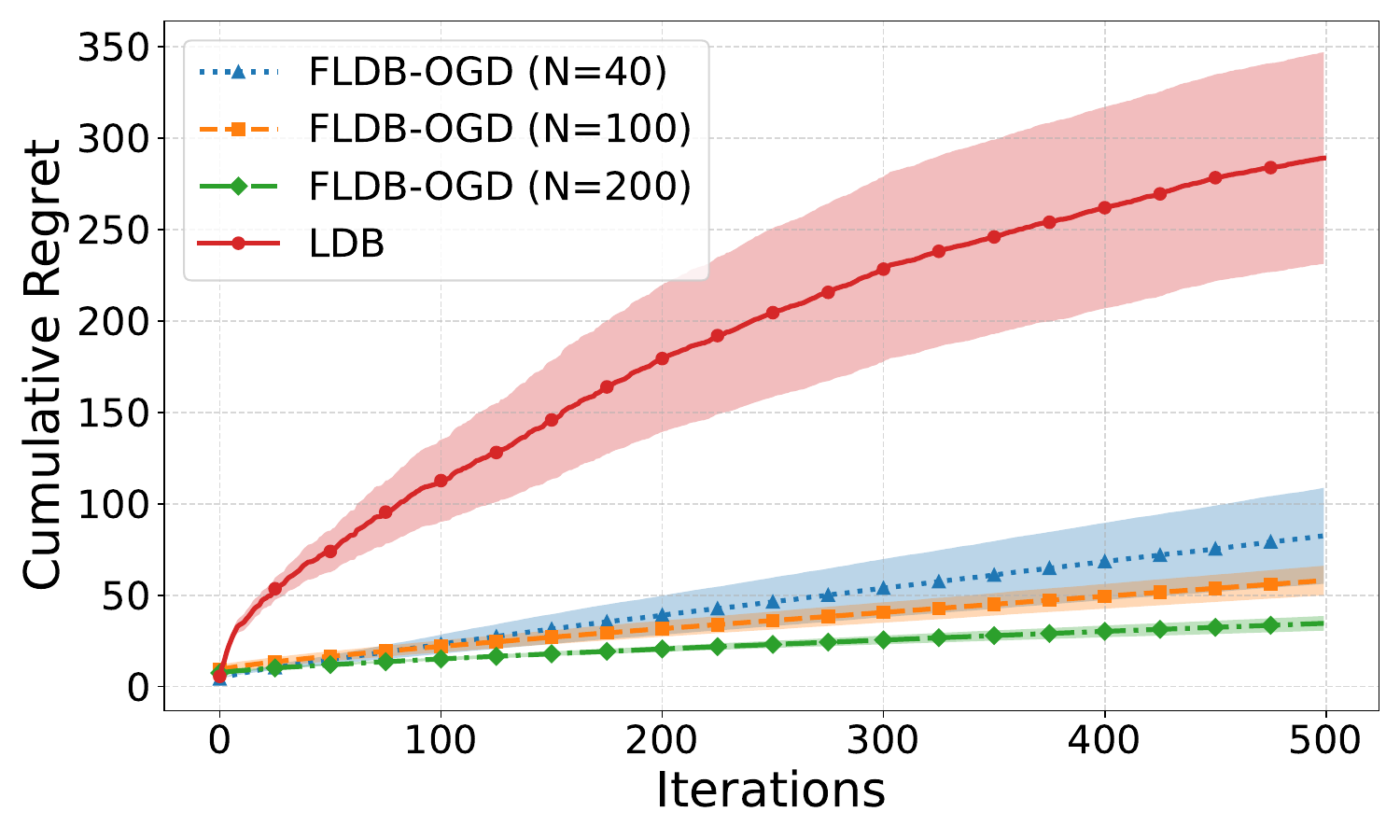} & 
         \hspace{-0.4cm} 
         \includegraphics[width=0.25\linewidth]{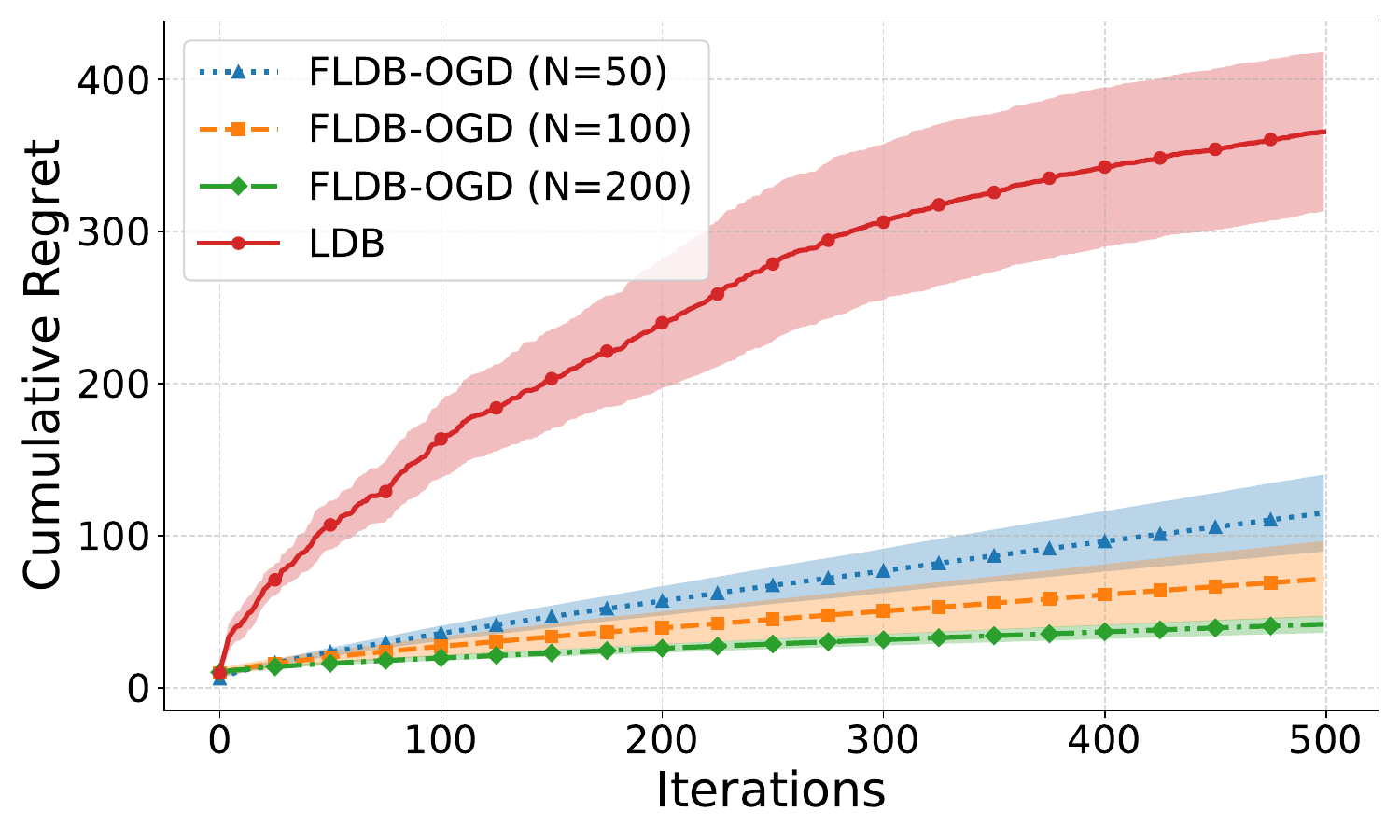}
          & 
          \hspace{-0.4cm}
         \includegraphics[width=0.25\linewidth]{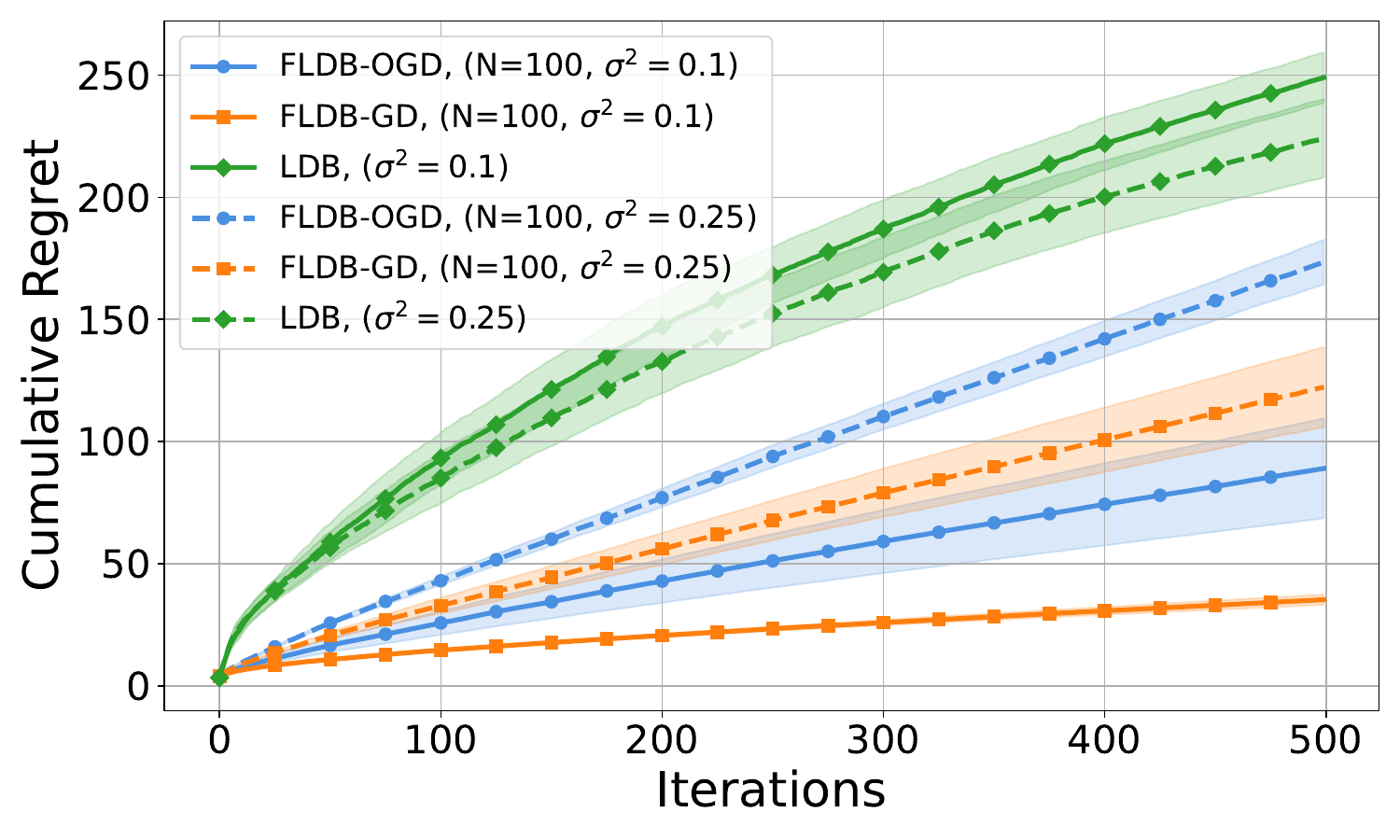} &  
         
         \hspace{-0.7cm}
         \includegraphics[width=0.25\linewidth]{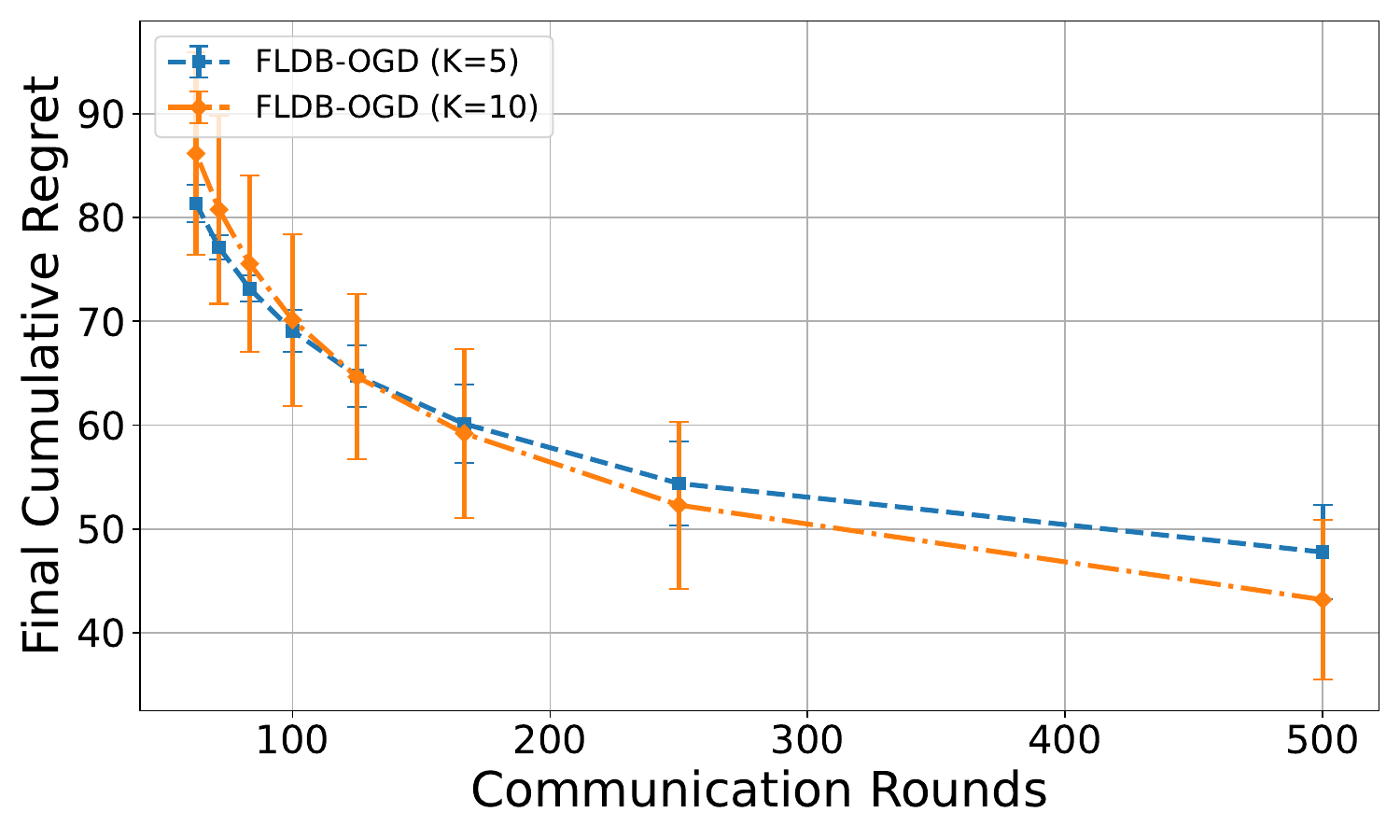}
         \\
         {\hspace{-0.6cm} {\footnotesize(a)~\texttt{FLDB-OGD}}} & {\hspace{-0.4cm} {\footnotesize(b)~\texttt{FLDB-OGD}}} & {\hspace{-0.4cm}{\footnotesize(c)~Heterogeneous }} &
         {\hspace{-4.5mm} {\footnotesize(d)~Communication vs.~Regrets}} \\
         {\hspace{-0.6cm} {\footnotesize $(K=10, d=5)$}} & {\hspace{-0.4cm} {\footnotesize $(K=50, d=5)$}} & {\hspace{-0.4cm}{\footnotesize $(K=5, d=5)$}} &
         {\hspace{-4mm} {\footnotesize $(N=100, d=5)$}} 
     \end{tabular}
\vspace{-2.8mm}
    \caption{
    Cumulative regret of \texttt{FLDB-OGD} under different settings. 
    (a)–(b): Impact of the number of agents with $K=10,50$ and $d=5$. 
    (c): Performance under heterogeneous rewards with $K=5$, $d=5$. 
    (d): Final regret versus number of communication rounds with $N=100$, $d=5$.
    }
     \label{fig:exp:ablation}
\vspace{-4.5mm}
\end{figure*}


\textbf{Results.}
In the synthetic experiments, we evaluate our \texttt{FLDB-OGD} algorithm, as well as the vanilla \texttt{FLDB-GD} algorithm and the \texttt{LDB} baseline, under the same settings of the number of agents $N$, the number of arms $K$ and the input dimension $d$.
To begin with, we fix \( K = 10 \) and \( d = 5 \), and evaluate the impact of different number of agents $N$ and different update period $\tau$. In the MovieLens experiments, we set \( K = 5 \) and \( d = 10 \). The results, presented in Figs.~\ref{fig:exp:synth_1}a--\ref{fig:exp:synth_1}d, show that both \texttt{FLDB-GD} and \texttt{FLDB-OGD} consistently outperform the single-agent baseline \texttt{LDB} across all settings.
Moreover, \texttt{FLDB-GD} achieves lower cumulative regret than \texttt{FLDB-OGD}, which is consistent with our theoretical analysis in Section~\ref{subsec:theory:ogd}, as \texttt{FLDB-GD} enjoys a tighter regret bound compared to \texttt{FLDB-OGD}. However, as we have pointed out in Sec.~\ref{subsec:algo:ogd} and Sec.~\ref{subsec:theory:ogd}, this performance advantage of \texttt{FLDB-GD} in terms of regret comes at the expense of \emph{significantly increased communication costs}. 

In addition, we conduct an ablation study to further examine the behavior of our \texttt{FLDB-OGD} algorithm under different experimental conditions. Specifically, we analyze the effects of (i) the number of agents ($N$), (ii) reward heterogeneity, and (iii) the number of communication rounds.

\textbf{Impact of the Number of Agents $N$.}
Here we examine how the number of agents $N$ affects the performance of the practical \texttt{FLDB-OGD} algorithm, while fixing the values of $K$ and $d$.
As shown in Fig.~\ref{fig:exp:ablation}a and b, the cumulative regret of \texttt{FLDB-OGD} decreases as the number of agents $N$ becomes larger, indicating that the collaboration among more agents indeed leads to improved performance.
These results validate the effectiveness of our \texttt{FLDB-OGD} algorithm in leveraging collaborative information from multiple agents, and offer practical motivation for encouraging more agents to participate in the federated learning process.
The observed empirical improvement of \texttt{FLDB-OGD} with larger $N$ does not follow directly from the theoretical guarantee provided in Theorem~\ref{theorem:ogd}.
This discrepancy suggests that the theoretical bound in Theorem~\ref{theorem:ogd} may be overly conservative. Meanwhile, it provides justifications for our additional assumption at the end of Section~\ref{subsec:theory:ogd}, which posits that the gradient norm bound $G$ is independent of $N$. Under this assumption, the regret bound for \texttt{FLDB-OGD} indeed improves with larger $N$, leading to better agreement between theory and practice.

\textbf{Heterogeneous Agents.}
To evaluate the performance of our algorithms when the reward functions of different agents are heterogeneous, we introduce variability into the reward functions of different agents by perturbing the global parameters $\theta^*$.
Specifically, for each agent $i$, we define a personalized parameter $\theta^*_i = \theta^* + \epsilon$, where the perturbation $\epsilon$ is sampled from a multivariate Gaussian distribution $\mathcal{N}(0, \sigma^2 I)$. We vary the standard deviation $\sigma$ from 0.1 to 0.5 to control the degree of heterogeneity.
Fig.~\ref{fig:exp:ablation}c presents the results for $\sigma^2 = 0.1$ and $\sigma^2 = 0.25$. For each fixed value of $\sigma$, we observe that the trends are consistent with those in the homogeneous synthetic setting (Fig.~\ref{fig:exp:synth_1}). Both \texttt{FLDB-OGD} and \texttt{FLDB-GD} incur higher cumulative regrets as the level of degree of heterogeneity increases.
Of note, \texttt{FLDB-OGD} maintains favorable performance in the presence of moderate heterogeneity, demonstrating its robustness under heterogeneous 
reward functions of different agents. 


\textbf{Regret-Communication Trade-off.}
To investigate the effect of communication frequency on the performance of \texttt{FLDB-OGD}, we vary the number of local updates $\tau$ in each iteration and report the resulting final regret after $T=500$ iterations.
Given the values of $T$ and $\tau$, the total number of communication rounds is given by $T / \tau$.
Therefore, for a fixed $T$, a smaller $\tau$ corresponds to more frequent communication, while a larger $\tau$ reduces the number of communication rounds. 
Fig.~\ref{fig:exp:ablation}d illustrates the relationship between communication frequency and the final regret. 
These results corroborate our theoretical analysis in Prop.~\ref{prop:regret:local:update}: a larger number of communication rounds (i.e., smaller $\tau$) leads to smaller regrets.

\vspace{-2mm}
\section{Related work}
\vspace{-2mm}
\textbf{Federated Bandits.}
Recent studies have extended the classical $K$-armed bandit problem to the federated setting. 
\cite{li2022privacy,li2020federated} introduced privacy-preserving federated $K$-armed bandits in centralized and decentralized settings, respectively. \cite{shi2021federated} formulated a global bandit model where arm rewards are averaged across agents, which was later extended to incorporate personalization \citep{shi2021federatedwithpersonalization}. 
For federated linear contextual bandits, \cite{wang2019distributed} proposed a distributed algorithm using sufficient statistics to compute the Linear UCB policy, which was later extended to incorporate differential privacy \citep{dubey2020differentially}, agent-specific contexts \citep{huang2021federated}, and asynchronous communication \citep{li2021asynchronous}. 
Federated kernelized 
and neural bandits have been developed for hyperparameter tuning \citep{dai2020federated,dai2021differentially,dai2022federated}.
In addition, many recent works have extended federated bandits to various settings and applied them to solve different real-world problems \cite{li2022communicationkcb,li2022communicationglb,zhu2021federated,ciucanu2022samba,blaser2024federated,fan2024federated,yang2024federated,solanki2024fairness,fourati2024federated,wang2024towards,li2024federated,wei2024incentivized,li2024fedconpe}.


\textbf{Dueling Bandits.}
Dueling bandits have received considerable attention in recent years
\citep{ICML09_yue2009interactively,ICML11_yue2011beat,JCSS12_yue2012k,WSDM14_zoghi2014relative,ICML14_ailon2014reducing,ICML14_zoghi2014relative,COLT15_komiyama2015regret,ICML15_gajane2015relative,UAI18_saha2018battle,AISTATS19_saha2019active,ALT19_saha2019pac,AISTATS22_saha2022exploiting,ICML23_zhu2023principled}.
To account for complicated real-world scenarios, a number of contextual dueling bandit algorithms have been developed which model the reward function using either a linear function
\citep{NeurIPS21_saha2021optimal,ICML22_bengs2022stochastic,arXiv24_li2024feelgood,ALT22_saha2022efficient,arXiv23_di2023variance} or a neural network \cite{verma2024neural}.

\vspace{-2mm}
\section{Conclusion and Limitation}
\vspace{-2mm}
\label{sec:conclusion}
We have introduced \texttt{FLDB-OGD}, the first federated linear dueling bandit algorithm, enabling multi-agent collaboration while preserving data privacy. 
By integrating online gradient descent with federated learning, our approach is able to effectively estimate the linear function parameters. Theoretical analysis guarantees a sub-linear cumulative regret bound, and empirical results demonstrate the benefits of collaboration and the regret-communication trade-off.
A potential limitation is that our methods are not applicable to dueling bandit problems with non-linear reward functions (when a known non-linear feature mapping is unavailable). We plan to address this challenge in future work by extending our methods to the frameworks of kernelized bandits and neural bandits.

\bibliography{reference}

\begin{thebibliography}{10}

\bibitem{NeurIPS21_saha2021optimal}
Aadirupa Saha.
\newblock Optimal algorithms for stochastic contextual preference bandits.
\newblock In {\em Proc. NeurIPS}, pages 30050--30062, 2021.

\bibitem{ALT22_saha2022efficient}
Aadirupa Saha and Akshay Krishnamurthy.
\newblock Efficient and optimal algorithms for contextual dueling bandits under realizability.
\newblock In {\em Proc. ALT}, pages 968--994, 2022.

\bibitem{ICML22_bengs2022stochastic}
Viktor Bengs, Aadirupa Saha, and Eyke H{\"u}llermeier.
\newblock Stochastic contextual dueling bandits under linear stochastic transitivity models.
\newblock In {\em Proc. ICML}, pages 1764--1786, 2022.

\bibitem{arXiv24_li2024feelgood}
Xuheng Li, Heyang Zhao, and Quanquan Gu.
\newblock Feel-good thompson sampling for contextual dueling bandits.
\newblock {\em arXiv:2404.06013}, 2024.

\bibitem{JCSS12_yue2012k}
Yisong Yue, Josef Broder, Robert Kleinberg, and Thorsten Joachims.
\newblock The k-armed dueling bandits problem.
\newblock {\em Journal of Computer and System Sciences}, pages 1538--1556, 2012.

\bibitem{lin2024prompt}
Xiaoqiang Lin, Zhongxiang Dai, Arun Verma, See-Kiong Ng, Patrick Jaillet, and Bryan Kian~Hsiang Low.
\newblock Prompt optimization with human feedback.
\newblock {\em arXiv preprint arXiv:2405.17346}, 2024.

\bibitem{ji2024reinforcement}
Kaixuan Ji, Jiafan He, and Quanquan Gu.
\newblock Reinforcement learning from human feedback with active queries.
\newblock {\em arXiv preprint arXiv:2402.09401}, 2024.

\bibitem{mcmahan2017communication}
Brendan McMahan, Eider Moore, Daniel Ramage, Seth Hampson, and Blaise~Aguera y~Arcas.
\newblock Communication-efficient learning of deep networks from decentralized data.
\newblock In {\em Artificial intelligence and statistics}, pages 1273--1282. PMLR, 2017.

\bibitem{shi2021federated}
Chengshuai Shi and Cong Shen.
\newblock Federated multi-armed bandits.
\newblock In {\em Proc. {AAAI}}, 2021.

\bibitem{wang2019distributed}
Yuanhao Wang, Jiachen Hu, Xiaoyu Chen, and Liwei Wang.
\newblock Distributed bandit learning: Near-optimal regret with efficient communication.
\newblock {\em arXiv preprint arXiv:1904.06309}, 2019.

\bibitem{NIPS11_abbasi2011improved}
Yasin Abbasi-Yadkori, D{\'a}vid P{\'a}l, and Csaba Szepesv{\'a}ri.
\newblock Improved algorithms for linear stochastic bandits.
\newblock In {\em Proc. NeurIPS}, pages 2312--2320, 2011.

\bibitem{dai2022federated}
Zhongxiang Dai, Yao Shu, Arun Verma, Flint~Xiaofeng Fan, Bryan Kian~Hsiang Low, and Patrick Jaillet.
\newblock Federated neural bandits.
\newblock {\em arXiv preprint arXiv:2205.14309}, 2022.

\bibitem{AS04_hunter2004mm}
David~R Hunter.
\newblock Mm algorithms for generalized bradley-terry models.
\newblock {\em Annals of Statistics}, pages 384--406, 2004.

\bibitem{Book_luce2005individual}
R~Duncan Luce.
\newblock {\em Individual choice behavior: A theoretical analysis}.
\newblock Courier Corporation, 2005.

\bibitem{ICML17_li2017provably}
Lihong Li, Yu~Lu, and Dengyong Zhou.
\newblock Provably optimal algorithms for generalized linear contextual bandits.
\newblock In {\em Proc. ICML}, pages 2071--2080, 2017.

\bibitem{ding2021efficientalgorithmgeneralizedlinear}
Qin Ding, Cho-Jui Hsieh, and James Sharpnack.
\newblock An efficient algorithm for generalized linear bandit: Online stochastic gradient descent and thompson sampling, 2021.

\bibitem{li2017provablyoptimalalgorithmsgeneralized}
Lihong Li, Yu~Lu, and Dengyong Zhou.
\newblock Provably optimal algorithms for generalized linear contextual bandits, 2017.

\bibitem{wu2020stochasticlinearcontextualbandits}
Weiqiang Wu, Jing Yang, and Cong Shen.
\newblock Stochastic linear contextual bandits with diverse contexts, 2020.

\bibitem{hazan2016introduction}
Elad Hazan et~al.
\newblock Introduction to online convex optimization.
\newblock {\em Foundations and Trends{\textregistered} in Optimization}, 2(3-4):157--325, 2016.

\bibitem{paszke2019pytorchimperativestylehighperformance}
Adam Paszke, Sam Gross, Francisco Massa, Adam Lerer, James Bradbury, Gregory Chanan, Trevor Killeen, Zeming Lin, Natalia Gimelshein, Luca Antiga, Alban Desmaison, Andreas Köpf, Edward Yang, Zach DeVito, Martin Raison, Alykhan Tejani, Sasank Chilamkurthy, Benoit Steiner, Lu~Fang, Junjie Bai, and Soumith Chintala.
\newblock Pytorch: An imperative style, high-performance deep learning library, 2019.

\bibitem{movielens}
F.~Maxwell Harper and Joseph~A. Konstan.
\newblock The movielens datasets: History and context.
\newblock {\em ACM Trans. Interact. Intell. Syst.}, 5(4), December 2015.

\bibitem{wang2023onlineclusteringbanditsmisspecified}
Zhiyong Wang, Jize Xie, Xutong Liu, Shuai Li, and John C.~S. Lui.
\newblock Online clustering of bandits with misspecified user models, 2023.

\bibitem{li2022privacy}
Tan Li and Linqi Song.
\newblock Privacy-preserving communication-efficient federated multi-armed bandits.
\newblock {\em IEEE Journal on Selected Areas in Communications}, 2022.

\bibitem{li2020federated}
Tian Li, Anit~Kumar Sahu, Manzil Zaheer, Maziar Sanjabi, Ameet Talwalkar, and Virginia Smith.
\newblock Federated optimization in heterogeneous networks.
\newblock {\em Proceedings of Machine learning and systems}, 2:429--450, 2020.

\bibitem{shi2021federatedwithpersonalization}
Chengshuai Shi, Cong Shen, and Jing Yang.
\newblock Federated multi-armed bandits with personalization.
\newblock In {\em Proc. {AISTATS}}, pages 2917--2925, 2021.

\bibitem{dubey2020differentially}
Abhimanyu Dubey and Alex Pentland.
\newblock Differentially-private federated linear bandits.
\newblock In {\em Proc. {NeurIPS}}, pages 6003--6014, 2020.

\bibitem{huang2021federated}
Ruiquan Huang, Weiqiang Wu, Jing Yang, and Cong Shen.
\newblock Federated linear contextual bandits.
\newblock In {\em Proc. {NeurIPS}}, 2021.

\bibitem{li2021asynchronous}
Chuanhao Li and Hongning Wang.
\newblock Asynchronous upper confidence bound algorithms for federated linear bandits.
\newblock In {\em Proc. {AISTATS}}, 2022.

\bibitem{dai2020federated}
Zhongxiang Dai, Bryan Kian~Hsiang Low, and Patrick Jaillet.
\newblock Federated bayesian optimization via thompson sampling.
\newblock {\em Advances in Neural Information Processing Systems}, 33:9687--9699, 2020.

\bibitem{dai2021differentially}
Zhongxiang Dai, Bryan Kian~Hsiang Low, and Patrick Jaillet.
\newblock Differentially private federated {Bayesian} optimization with distributed exploration.
\newblock In {\em Proc. {NeurIPS}}, 2021.

\bibitem{li2022communicationkcb}
Chuanhao Li, Huazheng Wang, Mengdi Wang, and Hongning Wang.
\newblock Communication efficient distributed learning for kernelized contextual bandits.
\newblock {\em Advances in Neural Information Processing Systems}, 35:19773--19785, 2022.

\bibitem{li2022communicationglb}
Chuanhao Li and Hongning Wang.
\newblock Communication efficient federated learning for generalized linear bandits.
\newblock {\em Advances in Neural Information Processing Systems}, 35:38411--38423, 2022.

\bibitem{zhu2021federated}
Zhaowei Zhu, Jingxuan Zhu, Ji~Liu, and Yang Liu.
\newblock Federated bandit: A gossiping approach.
\newblock {\em Proc. ACM Meas. Anal. Comput. Syst.}, 5(1):1--29, 2021.

\bibitem{ciucanu2022samba}
Radu Ciucanu, Pascal Lafourcade, Gael Marcadet, and Marta Soare.
\newblock {SAMBA}: A generic framework for secure federated multi-armed bandits.
\newblock {\em JAIR}, 73:737--765, 2022.

\bibitem{blaser2024federated}
Ethan Blaser, Chuanhao Li, and Hongning Wang.
\newblock Federated linear contextual bandits with heterogeneous clients.
\newblock In {\em International Conference on Artificial Intelligence and Statistics}, pages 631--639. PMLR, 2024.

\bibitem{fan2024federated}
Li~Fan, Ruida Zhou, Chao Tian, and Cong Shen.
\newblock Federated linear bandits with finite adversarial actions.
\newblock {\em Advances in Neural Information Processing Systems}, 36, 2024.

\bibitem{yang2024federated}
Hantao Yang, Xutong Liu, Zhiyong Wang, Hong Xie, John~CS Lui, Defu Lian, and Enhong Chen.
\newblock Federated contextual cascading bandits with asynchronous communication and heterogeneous users.
\newblock In {\em Proceedings of the AAAI Conference on Artificial Intelligence}, volume~38, pages 20596--20603, 2024.

\bibitem{solanki2024fairness}
Sambhav Solanki, Shweta Jain, and Sujit Gujar.
\newblock Fairness and privacy guarantees in federated contextual bandits.
\newblock {\em arXiv preprint arXiv:2402.03531}, 2024.

\bibitem{fourati2024federated}
Fares Fourati, Mohamed-Slim Alouini, and Vaneet Aggarwal.
\newblock Federated combinatorial multi-agent multi-armed bandits.
\newblock {\em arXiv preprint arXiv:2405.05950}, 2024.

\bibitem{wang2024towards}
Zibo Wang, Yifei Zhu, Dan Wang, and Zhu Han.
\newblock Towards fair and scalable trial assignment in federated bandits: A shapley value approach.
\newblock {\em IEEE Transactions on Big Data}, 2024.

\bibitem{li2024federated}
Wenjie Li, Qifan Song, Jean Honorio, and Guang Lin.
\newblock Federated x-armed bandit.
\newblock In {\em Proceedings of the AAAI Conference on Artificial Intelligence}, volume~38, pages 13628--13636, 2024.

\bibitem{wei2024incentivized}
Zhepei Wei, Chuanhao Li, Tianze Ren, Haifeng Xu, and Hongning Wang.
\newblock Incentivized truthful communication for federated bandits.
\newblock {\em arXiv preprint arXiv:2402.04485}, 2024.

\bibitem{li2024fedconpe}
Zhuohua Li, Maoli Liu, and John Lui.
\newblock Fedconpe: Efficient federated conversational bandits with heterogeneous clients.
\newblock {\em arXiv preprint arXiv:2405.02881}, 2024.

\bibitem{ICML09_yue2009interactively}
Yisong Yue and Thorsten Joachims.
\newblock Interactively optimizing information retrieval systems as a dueling bandits problem.
\newblock In {\em Proc. ICML}, pages 1201--1208, 2009.

\bibitem{ICML11_yue2011beat}
Yisong Yue and Thorsten Joachims.
\newblock Beat the mean bandit.
\newblock In {\em Proc. ICML}, pages 241--248, 2011.

\bibitem{WSDM14_zoghi2014relative}
Masrour Zoghi, Shimon~A Whiteson, Maarten De~Rijke, and Remi Munos.
\newblock Relative confidence sampling for efficient on-line ranker evaluation.
\newblock In {\em Proc. WSDM}, pages 73--82, 2014.

\bibitem{ICML14_ailon2014reducing}
Nir Ailon, Zohar Karnin, and Thorsten Joachims.
\newblock Reducing dueling bandits to cardinal bandits.
\newblock In {\em Proc. ICML}, pages 856--864, 2014.

\bibitem{ICML14_zoghi2014relative}
Masrour Zoghi, Shimon Whiteson, Remi Munos, and Maarten Rijke.
\newblock Relative upper confidence bound for the k-armed dueling bandit problem.
\newblock In {\em Proc. ICML}, pages 10--18, 2014.

\bibitem{COLT15_komiyama2015regret}
Junpei Komiyama, Junya Honda, Hisashi Kashima, and Hiroshi Nakagawa.
\newblock Regret lower bound and optimal algorithm in dueling bandit problem.
\newblock In {\em Proc. COLT}, pages 1141--1154, 2015.

\bibitem{ICML15_gajane2015relative}
Pratik Gajane, Tanguy Urvoy, and Fabrice Cl{\'e}rot.
\newblock A relative exponential weighing algorithm for adversarial utility-based dueling bandits.
\newblock In {\em Proc. ICML}, pages 218--227, 2015.

\bibitem{UAI18_saha2018battle}
Aadirupa Saha and Aditya Gopalan.
\newblock Battle of bandits.
\newblock In {\em Proc. UAI}, pages 805--814, 2018.

\bibitem{AISTATS19_saha2019active}
Aadirupa Saha and Aditya Gopalan.
\newblock Active ranking with subset-wise preferences.
\newblock In {\em Proc. AISTATS}, pages 3312--3321, 2019.

\bibitem{ALT19_saha2019pac}
Aadirupa Saha and Aditya Gopalan.
\newblock Pac battling bandits in the plackett-luce model.
\newblock In {\em Proc. ALT}, pages 700--737, 2019.

\bibitem{AISTATS22_saha2022exploiting}
Aadirupa Saha and Suprovat Ghoshal.
\newblock Exploiting correlation to achieve faster learning rates in low-rank preference bandits.
\newblock In {\em Proc. AISTATS}, pages 456--482, 2022.

\bibitem{ICML23_zhu2023principled}
Banghua Zhu, Michael Jordan, and Jiantao Jiao.
\newblock Principled reinforcement learning with human feedback from pairwise or k-wise comparisons.
\newblock In {\em Proc. ICML}, pages 43037--43067, 2023.

\bibitem{arXiv23_di2023variance}
Qiwei Di, Tao Jin, Yue Wu, Heyang Zhao, Farzad Farnoud, and Quanquan Gu.
\newblock Variance-aware regret bounds for stochastic contextual dueling bandits.
\newblock {\em arXiv:2310.00968}, 2023.

\bibitem{verma2024neural}
Arun Verma, Zhongxiang Dai, Xiaoqiang Lin, Patrick Jaillet, and Bryan Kian~Hsiang Low.
\newblock Neural dueling bandits.
\newblock {\em arXiv preprint arXiv:2407.17112}, 2024.

\bibitem{wu2021clustering}
Junda Wu, Canzhe Zhao, Tong Yu, Jingyang Li, and Shuai Li.
\newblock Clustering of conversational bandits for user preference learning and elicitation.
\newblock In {\em Proceedings of the 30th ACM International Conference on Information \& Knowledge Management}, pages 2129--2139, 2021.

\bibitem{zong2016cascading}
Shi Zong, Hao Ni, Kenny Sung, Nan~Rosemary Ke, Zheng Wen, and Branislav Kveton.
\newblock Cascading bandits for large-scale recommendation problems.
\newblock {\em arXiv preprint arXiv:1603.05359}, 2016.

\bibitem{hazan2023introductiononlineconvexoptimization}
Elad Hazan.
\newblock Introduction to online convex optimization, 2023.

\end{thebibliography}
\bibliographystyle{unsrt}







\appendix



\newpage

\newpage
\appendix
\onecolumn


\section{Details of the Vanilla {\tt FLDB-GD} Algorithm}
\label{app:sec:fldb-gd}
In this section, we provide the detailed descriptions of our vanilla \texttt{FLDB-GD} algorithm (Sec.~\ref{subsec:algo:gd}).

\begin{algorithm}[h]
	\begin{algorithmic}[1]
        \STATE {\textbf{Initialization:}} $W_{\text sync} = \frac{\lambda}{\kappa_{\mu}}I_{d \times d}, \theta_{\text sync} = 0_{d \times 1}$
		\FOR{$t= 1, \ldots, T$}
            \STATE Receive contexts $\mathcal{X}_{t,i}$
            \STATE Choose the first arm  $x_{t,1,i} = \arg\max_{x\in\mathcal{X}_{t,i}}\theta_{\text{sync}}^\top \phi(x)$
            \STATE Choose the second arm $x_{t,2,i} = \arg\max_{x\in\mathcal{X}_{t,i}} \theta_{\text{sync}}^\top \left( \phi(x) - \phi(x_{t,1,i}) \right) + \frac{\beta_t}{\kappa_\mu}\lVert{\phi(x) - \phi(x_{t,1,i})}\rVert_{W_{\text{sync}}^{-1}}$
            \STATE Observe the preference feedback $y_{t,i} = \mathbbm{1}(x_{t,1,i}\succ x_{t,2,i})$ 
            \STATE Calculate $W_{\text{new},i} = \left(\phi(x_{t,1,i}) - \phi(x_{t,2,i})\right) \left(\phi(x_{t,1,i}) - \phi(x_{t,2,i})\right)^{\top}$
            
		\FOR{$s= 0, \ldots, M-1$}
                \STATE Receive $\theta^{(s)}$ from Central Server
                \STATE Calculate local gradient $\nabla\mathcal{L}_t^{i}(\theta^{(s)})$ \eqref{eq:loss:func:fed:local}
                \STATE Send $\nabla\mathcal{L}_t^{i}(\theta^{(s)})$ back to Central Server
            \ENDFOR

            \STATE Send $W_{\text{new},i}$ to Central Server
            \STATE Receive $W_{\text{sync}}$ from Central Server
		\ENDFOR
	\end{algorithmic}
\caption{\texttt{FLDB-GD} (Agent $i$)}
\label{algo:new:gd:agent}
\end{algorithm}

\begin{algorithm}[h]
	\begin{algorithmic}[1]

            \IF{Receive $\{\nabla\mathcal{L}_t^{i}(\theta^{(s)})\}_{i=1,\ldots,N}$ from all $N$ agents}
                \STATE Aggregates local gradients: $\nabla\mathcal{L}_t^{\text{fed}}({\theta}^{(s)}) = \sum^N_{i=1} \nabla\mathcal{L}_t^{i}(\theta^{(s)}) + \lambda \theta^{(s)}$ \eqref{eq:loss:func:fed}
                \STATE Updates parameter: $\theta^{(s+1)} = \theta^{(s)} - \eta \nabla\mathcal{L}_t^{\text{fed}}({\theta}^{(s)})$
                \STATE Broadcast $\theta^{(s)}$ to all agents
            \ENDIF
            \IF{Receive $\{W_{\text{new},i}\}_{i=1,\ldots,N}$ from all $N$ agents}
                \STATE Calculate $W_{\text{sync}} \leftarrow W_{\text{sync}} + \sum^N_{i=1} W_{\text{new},i}$
                \STATE Broadcast $\{W_{\text{sync}}\}$ to all agents
            \ENDIF
	\end{algorithmic}
\caption{\texttt{FLDB-GD} (Central Server)}
\label{algo:new:gd:server}
\end{algorithm}


\textbf{Details of Algo.~\ref{algo:new:gd:agent} and Algo.~\ref{algo:new:gd:server}.}
In each round $t$, agent $i$ receives a set of context vectors $\mathcal{X}_{t,i}$ and uses the synchronized parameter $\theta_{\text{sync}}$ to select a pair of arms (lines 4–5 of Algo.~\ref{algo:new:gd:agent}). After observing the preference feedback $y_{t,i} = \mathbbm{1}(x_{t,1,i} \succ x_{t,2,i})$, the agent computes a matrix $W_{\text{new},i}$ based on the feature difference between the chosen arms (line 7 of Algo.~\ref{algo:new:gd:agent}).

The server and agents then perform $M$ steps of federated gradient descent. In each step $s = 0, \ldots, M-1$, the server broadcasts the current parameter $\theta^{(s)}$ to all agents (line 9 of Algo.~\ref{algo:new:gd:agent}). Each agent computes its local gradient $\nabla \mathcal{L}_t^i(\theta^{(s)})$ (line 10 of Algo.~\ref{algo:new:gd:agent}) and sends it to the server (line 11 of Algo.~\ref{algo:new:gd:agent}). Upon receiving all local gradients, the server aggregates them as
\[
\nabla \mathcal{L}_t^{\text{fed}}(\theta^{(s)}) = \sum_{i=1}^N \nabla \mathcal{L}_t^i(\theta^{(s)}) + \lambda \theta^{(s)},
\]
and updates the parameter by gradient descent: $\theta^{(s+1)} = \theta^{(s)} - \eta \nabla \mathcal{L}_t^{\text{fed}}(\theta^{(s)})$ (lines 2–3 of Algo.~\ref{algo:new:gd:server}). The updated parameter is then broadcast to all agents (line 4 of Algo.~\ref{algo:new:gd:server}).

After completing $M$ steps, each agent sends its $W_{\text{new},i}$ to the server (line 12 of Algo.~\ref{algo:new:gd:agent}). The server updates the global matrix as
\[
W_{\text{sync}} \leftarrow W_{\text{sync}} + \sum_{i=1}^N W_{\text{new},i}
\]
(line 6 of Algo.~\ref{algo:new:gd:server}), and broadcasts the updated $W_{\text{sync}}$ to all agents (line 7 of Algo.~\ref{algo:new:gd:server}), completing round $t$.

\section{More Experimental Details and Results}
\label{app:sec:more:exp:details}
In all our experiments, we set the horizon to $T = 500$ and use the same hyperparameter $\lambda = 1 / T$ to ensure fair comparisons.
For \texttt{FLDB-OGD}, we choose $\alpha=1000$.
Every curve represents the mean and standard error computed over three independent runs.

\subsection{Heterogeneous Rewards}
\label{sec:heterogeneous}
\begin{figure*}[h!]
\vspace{-1.2mm}
     \centering
     \begin{tabular}{ccc}
         \includegraphics[width=0.33\linewidth]{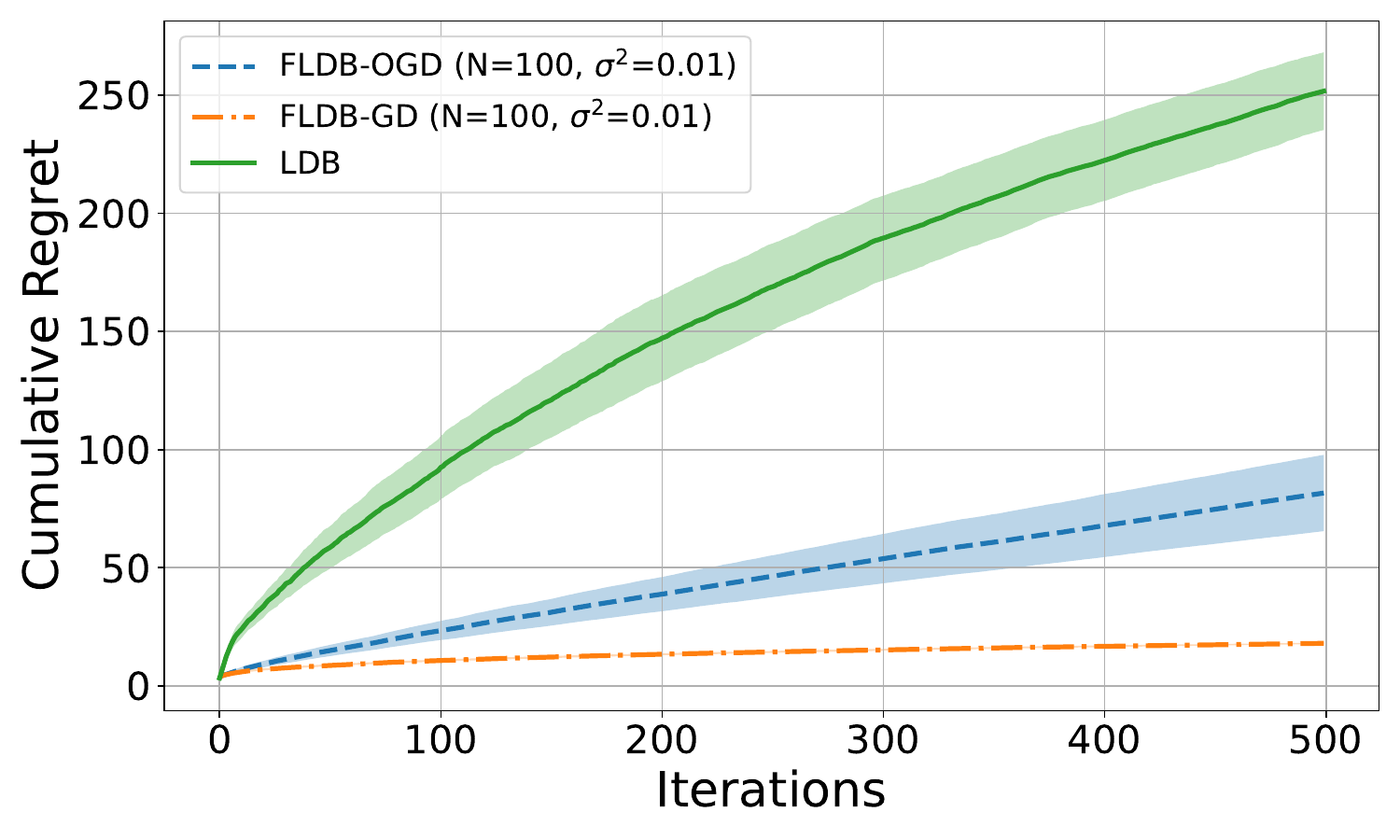} & \hspace{-5mm} 
         \includegraphics[width=0.33\linewidth]{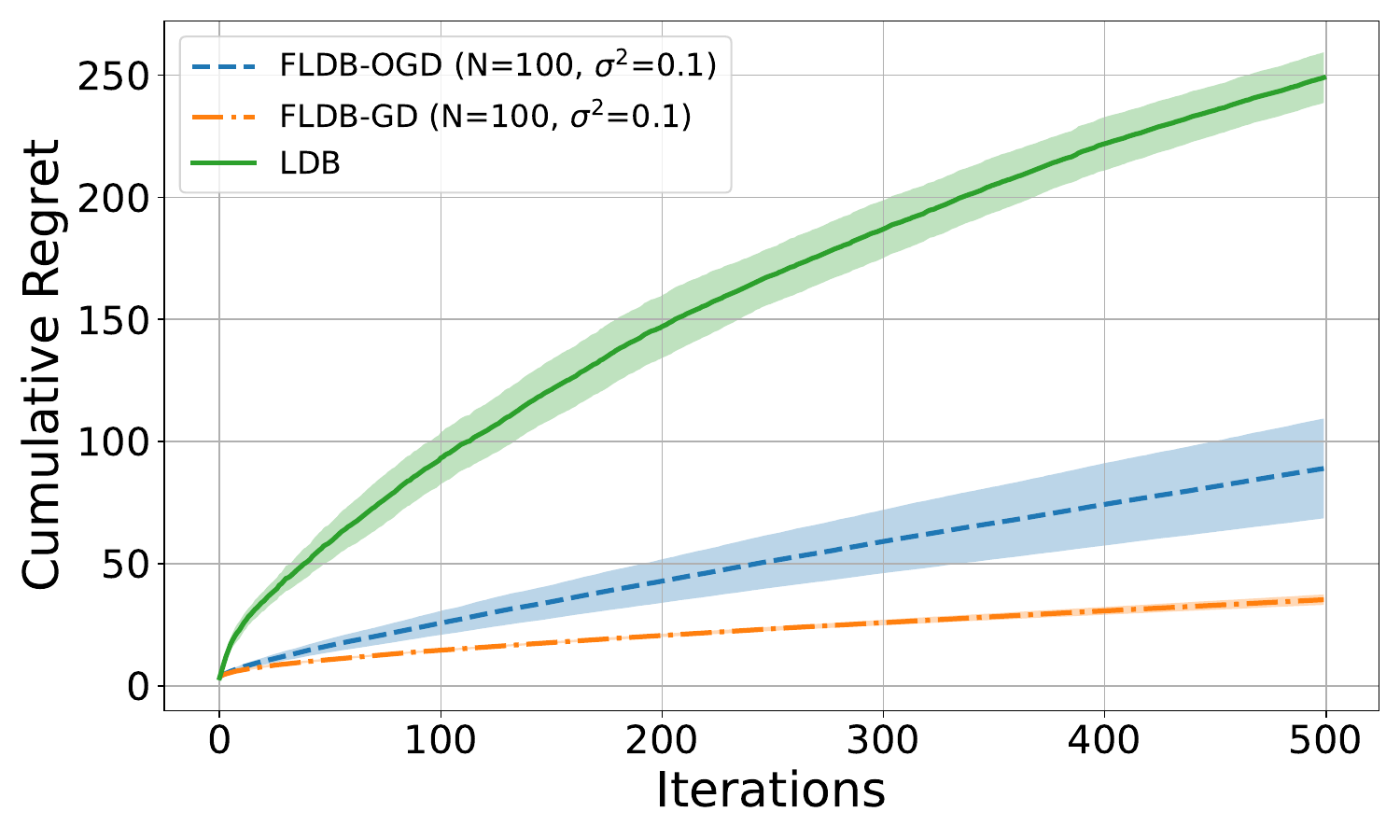}& \hspace{-3mm}
         \includegraphics[width=0.33\linewidth]{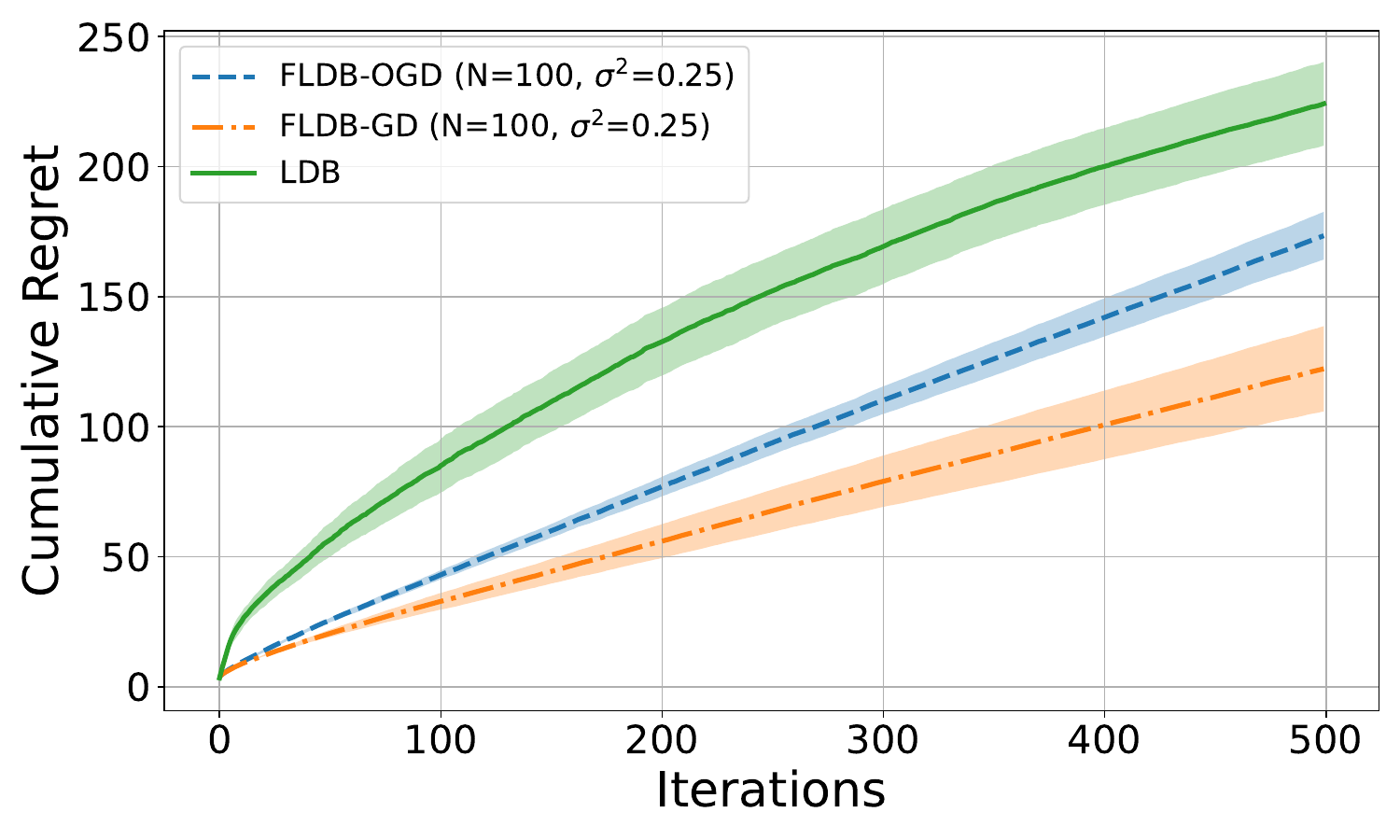} \\
         {\hspace{-2mm} (a) $\sigma^2=0.01$} & {\hspace{-5mm} (b) $\sigma^2=0.1$} & {\hspace{-6mm} (c) $\sigma^2=0.25$} 
     \end{tabular}
\vspace{-2.8mm}
     \caption{
     Cumulative regret with varying $\sigma^2$ under $K=5, d=5$.}
     \label{fig:exp:heterogeneous}
\vspace{-1.4mm}
\end{figure*}
The results for the experiments with heterogeneous rewards are shown in Fig.~\ref{fig:exp:heterogeneous}. Both \texttt{FLDB-GD} and \texttt{FLDB-OGD} exhibit robustness under different levels of agent-specific perturbations, maintaining competitive performance as the degree of heterogeneity increases.


\subsection{Communication trade-off}
\label{sec:comm:trade:off}

\begin{figure*}[h!]
\vspace{-1.2mm}
     \centering
     \begin{tabular}{ccc}
         \includegraphics[width=0.33\linewidth]{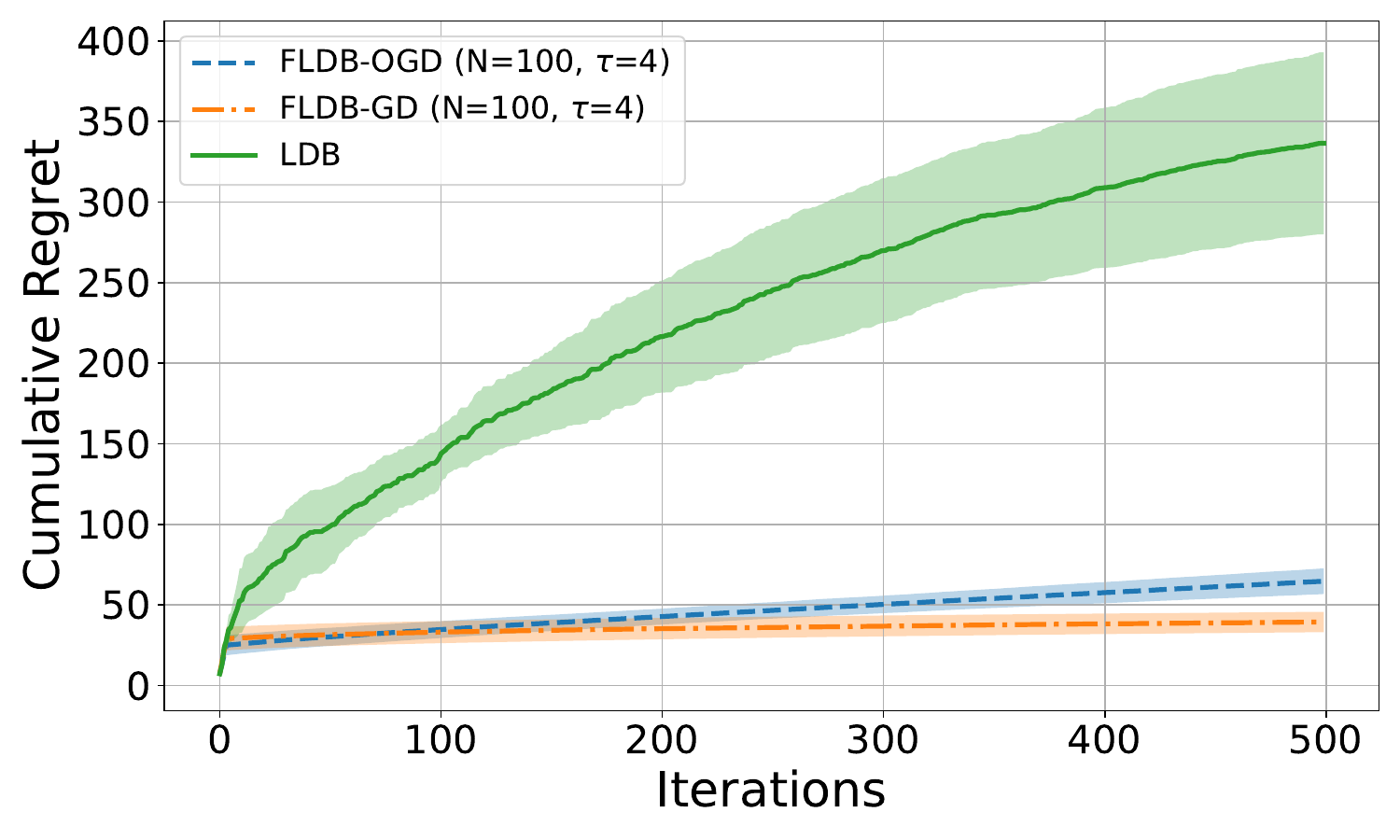} & \hspace{-5mm} 
         \includegraphics[width=0.33\linewidth]{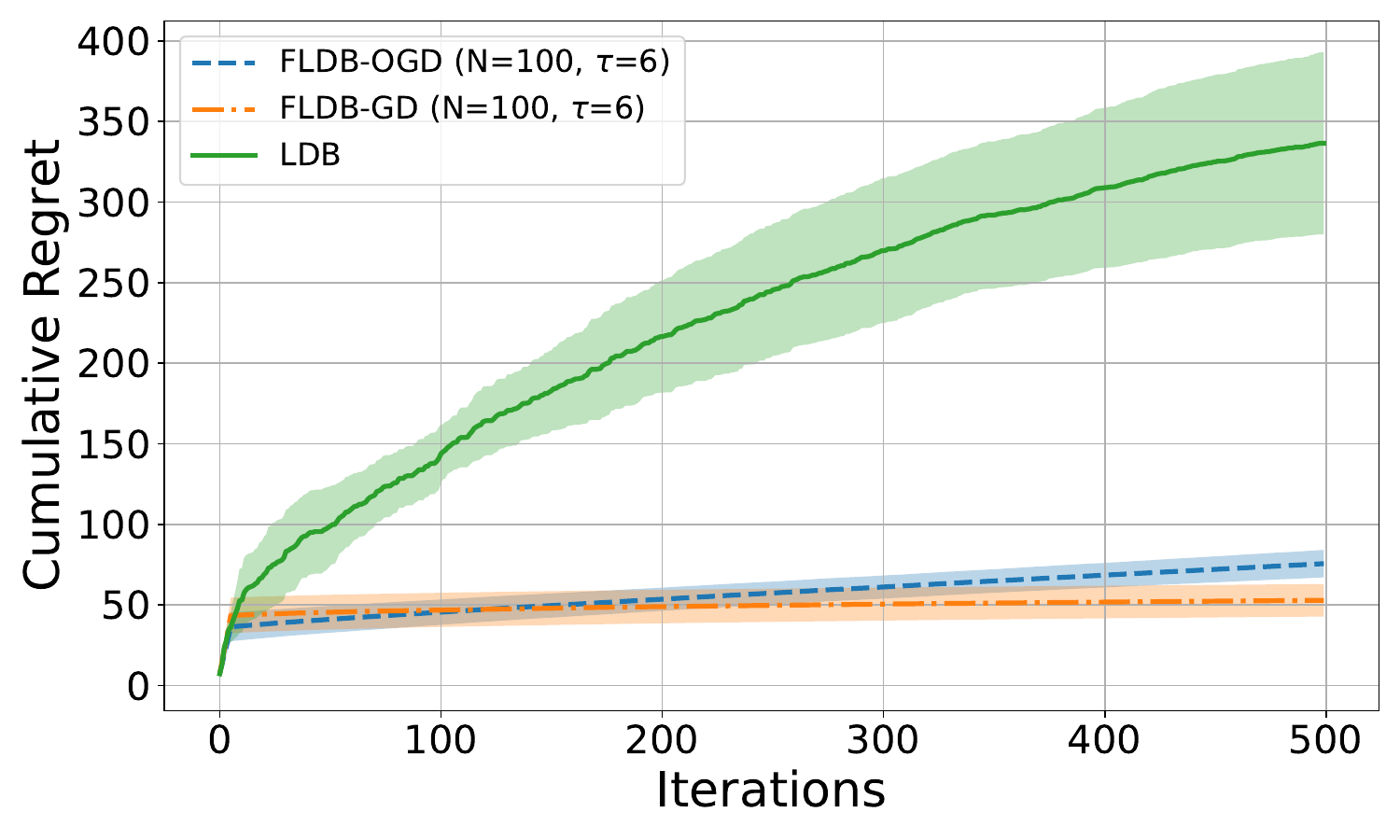}& \hspace{-3mm}
         \includegraphics[width=0.33\linewidth]{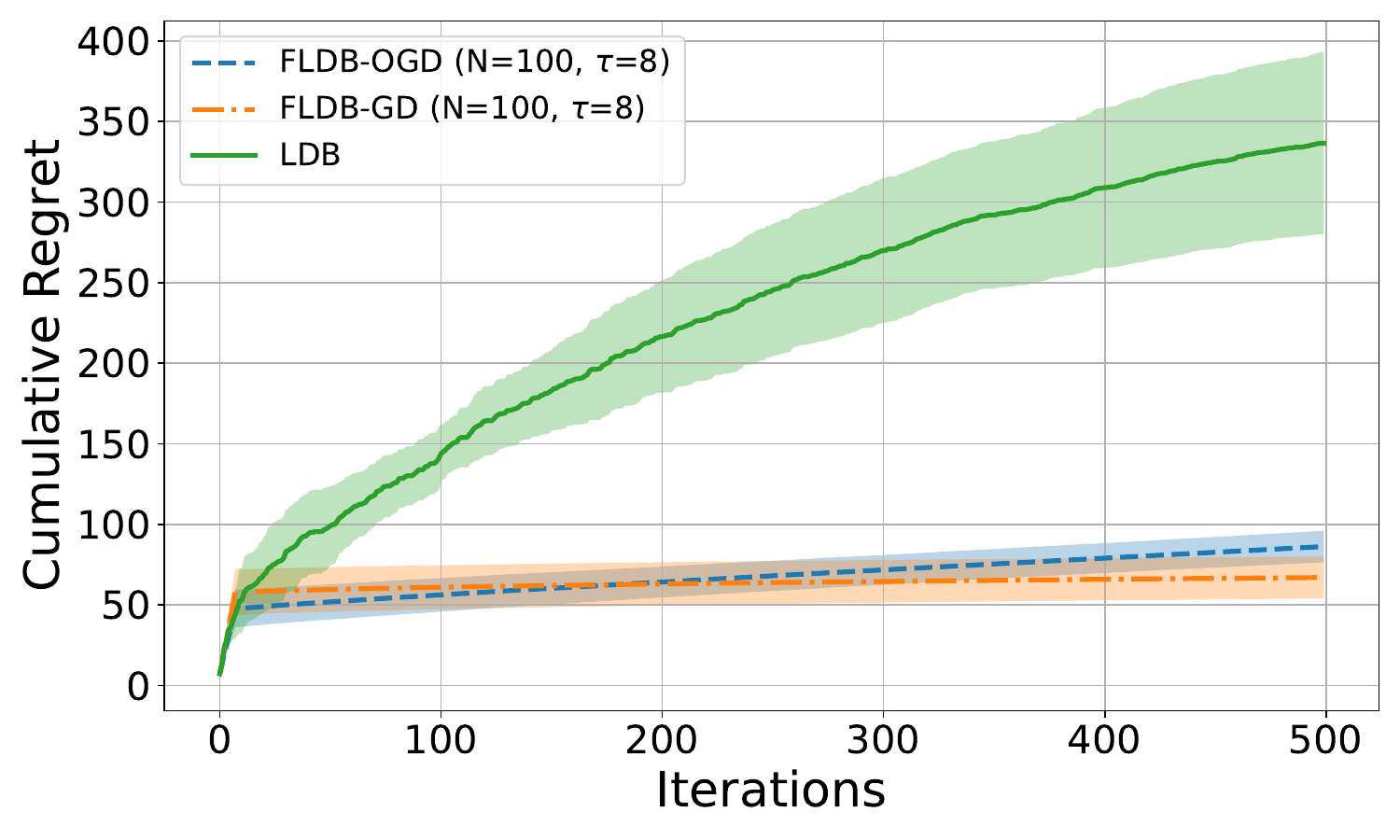} \\
         {\hspace{-2mm} (a) $\tau=4$} & {\hspace{-5mm} (b) $\tau=6$} & {\hspace{-6mm} (c) $\tau=8$} 
     \end{tabular}
\vspace{-2.8mm}
     \caption{
     Cumulative regret with varying $\tau$ under $N=100, K=10, d=5$.}
     \label{fig:exp:comm:round_1}
\vspace{-1.4mm}
\end{figure*}


\begin{figure*}[h!]
\vspace{-1.2mm}
     \centering
     \begin{tabular}{cc}
         \includegraphics[width=0.5\linewidth]{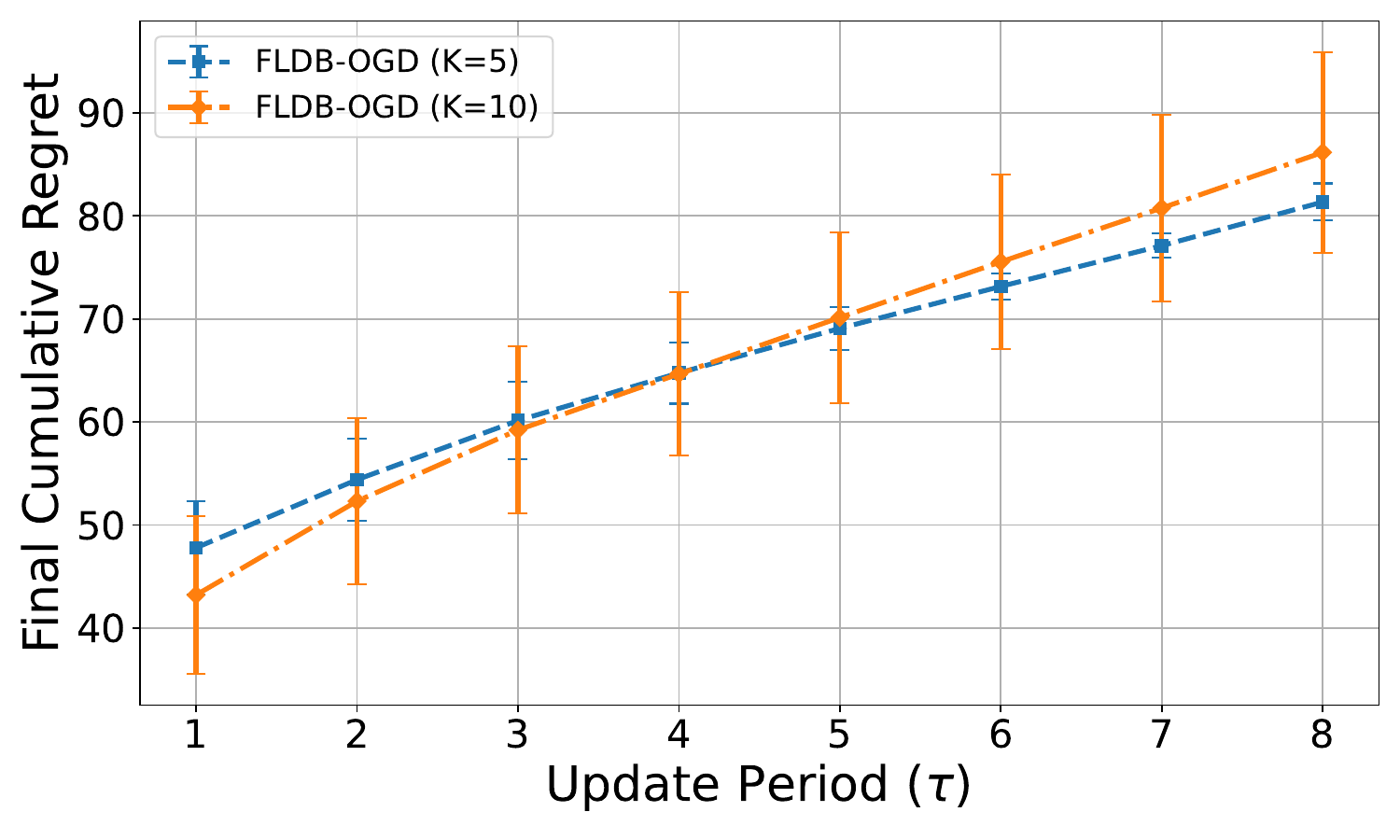} & \hspace{-5mm} 
         \includegraphics[width=0.5\linewidth]{ICML_2025/figures/final_regret_vs_communication_rounds_N100_K_multiple.pdf} \\
         {\hspace{-2mm} (a) $N=100$} & {\hspace{-5mm} (b) $N=100$} 
     \end{tabular}
\vspace{-2.8mm}
    \caption{
    Final cumulative regret versus local update period $\tau$ for different numbers of arms $K$ with $N=100$.
    }
     \label{fig:exp:comm:round_2}
\vspace{-1.4mm}
\end{figure*}
We conduct experiments with varying local update periods $\tau = 1, 2, \dots, 8$. As shown in Figure~\ref{fig:exp:comm:round_1}, \texttt{FLDB-OGD} consistently outperforms the \texttt{LDB} baseline across all values of $\tau$. In addition, we examine the effect of $\tau$ on the final cumulative regret, as depicted in Figure~\ref{fig:exp:comm:round_2}a. The results show that the final regret increases with larger $\tau$, which aligns with our theoretical analysis in Proposition~\ref{prop:regret:local:update}.

When we plot the final cumulative regret against the number of communication rounds (i.e., $T/\tau$ for a fixed horizon $T$), as shown in Figure~\ref{fig:exp:comm:round_2}b, the trade-off between communication frequency and regret becomes more evident.

\subsection{Real-world Experiment}
\label{sec:realworld}

For the experiment on the MovieLens dataset~\cite{movielens}, we follow the setting introduced in~\cite{wang2023onlineclusteringbanditsmisspecified}. We extract 200 users who rate most and 200 items with most ratings. 
Then we construct a binary
matrix $\boldsymbol{H}^{200\times200}$ based on the user rating \cite{wu2021clustering, zong2016cascading}: if the user rating is greater than 3, the feedback is 1; otherwise, the feedback is 0.
 Then we select the first 20 rows $\boldsymbol{H}^{20\times200}_{1}$ to generate the feature vectors by SVD. The remaining 180 rows $\boldsymbol{F}^{180\times200}$ is used as the feedback matrix, meaning user $i$ receives $\boldsymbol{F}(i,j)$ as feedback while choosing item $j$. At time $t$, we first randomly select one user as environment and sample $5$ items for the algorithms to choose from. 
 
\subsection{Computational Resources and Runtime}
\label{sec:resource}
Each trial was executed on a single NVIDIA A800 GPU. For the setting with $K=10, d=5$, the average runtime per trial is as follows: 
\begin{itemize}
    \item \texttt{LDB}: 264.51 s 
    \item \texttt{FLDB-OGD}$(N=100, \tau=1)$: 70.25 s
    \item \texttt{FLDB-OGD}$(N=100, \tau=2)$: 68.34 s
    \item \texttt{FLDB-GD}$(N=100)$: $>$ 9 h but $<$ 10 h
\end{itemize}

\section{Proof of Lemma~\ref{lemma:concentration:theta}}
\label{appendix:proof:theta}
We first consider the classic linear dueling bandit setting, namely the single agent scenario (N=1). In this scenario, we ignore the subscript $i$ (we use $\{x_{t,1}, x_{t,2}, y_t, r_t\}$ instead of $\{x_{t,1,i}, x_{t,2,i}, y_{t,i}, r_{t,i}\}$).
\begin{lemma}[Single Agent]
\label{lemma:concentration:theta:single}
Let $\beta_t \triangleq \sqrt{2\log(1/\delta) + d\log\left( 1 + t\kappa_{\mu}/(d\lambda) \right)}$

With probability of at least $1-\delta$, we have that for all $t=1,\ldots,T$
\[
\norm{\theta - \theta_{t}}_{V_t} \leq \frac{\beta_t}{\kappa_\mu}.
\]
\end{lemma}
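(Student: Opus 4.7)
The plan is to follow the standard MLE concentration recipe for generalized linear models, adapted here to the dueling setting where each sample is a paired preference. Let $z_s \triangleq \phi(x_{s,1}) - \phi(x_{s,2})$ and $\epsilon_s \triangleq y_s - \mu(\theta^\top z_s)$. Since the loss $\mathcal{L}_t$ is convex, the minimizer $\theta_t$ satisfies the first-order condition $\nabla \mathcal{L}_t(\theta_t) = 0$, which expands to
\begin{equation*}
\sum_{s=1}^{t-1}\bigl[\mu(\theta_t^\top z_s) - y_s\bigr] z_s + \lambda \theta_t \; = \; 0.
\end{equation*}
First I would split $y_s = \mu(\theta^\top z_s) + \epsilon_s$ inside this identity, so that the sum decomposes into a deterministic piece (depending only on $\theta$ and $\theta_t$) and a noise piece $\sum_s \epsilon_s z_s$.

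Next, I would linearize the deterministic piece via the mean value theorem: for each $s$ there exists $\bar\theta_s$ on the segment between $\theta$ and $\theta_t$ such that $\mu(\theta_t^\top z_s) - \mu(\theta^\top z_s) = \dot\mu(\bar\theta_s^\top z_s)\, z_s^\top(\theta_t - \theta)$. Letting $\alpha_s \triangleq \dot\mu(\bar\theta_s^\top z_s) \geq \kappa_\mu$ (by Assumption~\ref{assumption:basic}) and $H_t \triangleq \sum_{s<t} \alpha_s z_s z_s^\top + \lambda I$, the first-order condition rearranges to
\begin{equation*}
H_t (\theta_t - \theta) \; = \; \sum_{s=1}^{t-1} \epsilon_s z_s \; - \; \lambda \theta.
\end{equation*}
Because $\alpha_s \geq \kappa_\mu$ for every $s$, we have $H_t \succeq \kappa_\mu V_t$ with $V_t = \tfrac{\lambda}{\kappa_\mu} I + \sum_{s<t} z_s z_s^\top$. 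A standard PSD-ordering argument then yields $\|\theta_t - \theta\|_{V_t} \leq \kappa_\mu^{-1} \bigl\| \sum_s \epsilon_s z_s - \lambda\theta \bigr\|_{V_t^{-1}}$.

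For the stochastic term, I would invoke the self-normalized martingale inequality of Abbasi-Yadkori et al.~(2011): since $y_s \in \{0,1\}$ the noise $\epsilon_s$ is a bounded, conditionally zero-mean, hence sub-Gaussian martingale difference with respect to the filtration generated by the history, and $\|z_s\|_2 \leq 1$ by Assumption~\ref{assumption:basic}. The resulting bound, holding uniformly in $t$ with probability at least $1-\delta$, controls $\|\sum_s \epsilon_s z_s\|_{V_t^{-1}}$ by $\sqrt{2\log(1/\delta) + \log(\det V_t / \det V_0)}$, and an elliptical potential / trace bound turns the log-determinant into $d\log(1 + t\kappa_\mu/(d\lambda))$. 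The regularization term $\lambda\|\theta\|_{V_t^{-1}}$ is absorbed using $V_t \succeq (\lambda/\kappa_\mu) I$ together with $\|\theta\|\leq 1$, producing the definition of $\beta_t$ in the statement.

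The main obstacle is ensuring the self-normalized bound is applied to the right martingale structure (noting that $z_s$ is predictable because both arms are chosen before $y_s$ is observed) and verifying that the MVT linearization preserves the lower bound $\kappa_\mu$ uniformly over the relevant segment; a minor subtlety is handling the $\lambda \theta$ residual without blowing up $\beta_t$, which is what motivates the $\lambda/\kappa_\mu$ scaling built into $V_t$. The $N=1$ case above is stated for clarity; the extension to the federated case is immediate since the loss in \eqref{eq:loss:func:fed} has the same form with the summation running over all $tN$ paired samples aggregated across agents, replacing $t$ by $tN$ in the elliptical potential bound to recover the $\beta_t$ in Lemma~\ref{lemma:concentration:theta}.
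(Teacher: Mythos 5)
Your proposal follows essentially the same route as the paper's proof: the first-order optimality condition of the regularized MLE, a mean-value-theorem linearization with $\dot\mu$ lower-bounded by $\kappa_\mu$ to pass from the Hessian-weighted error to $\norm{\theta-\theta_t}_{V_t}$, the self-normalized martingale inequality of Abbasi-Yadkori et al.\ for $\norm{\sum_s \epsilon_s z_s}_{V_t^{-1}}$, and the log-determinant bound yielding $\beta_t$. The only substantive difference is that you explicitly carry the $\lambda\theta$ residual and absorb it via $V_t \succeq (\lambda/\kappa_\mu)I$, which adds a benign $\sqrt{\lambda/\kappa_\mu}$ slack to the stated constant, whereas the paper silently drops this term by asserting $G_t(\theta)=0$ (it actually equals $\lambda\theta$); your treatment is the more careful of the two.
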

\begin{proof}
Let $\phi: \mathbb{R}^{d'} \rightarrow \mathbb{R}^{d}$ be a feature map such that $f(x) = \theta^\top\phi(x)$ and $d \geq d^\prime$. 
In the case of linear bandits, $\phi(x) = x$ and $d=d'$.
In iteration $s$, define $\widetilde{\phi}_s = \phi(x_{s,1}) - \phi(x_{s,2})$.
Define $\widetilde{f}_{s} = f(x_{s,1}) - f(x_{s,2}) =\theta_f^{\top} \widetilde{\phi}_s$.

For any $\theta_{f'} \in\mathbb{R}^{d}$, define 
\[
G_t(\theta_{f^\prime}) = \sum_{s=1}^t \left(\mu(\theta_{f^\prime}^\top\widetilde{\phi}_s ) - \mu(\theta^\top\widetilde{\phi}_s) \right) \widetilde{\phi}_s  + \lambda \theta_{f'}.
\]
For $\lambda'\in(0, 1)$, setting $\theta_{\bar{f}} = \lambda' \theta_{f^\prime_1} + (1 - \lambda')\theta_{f^\prime_2}$.
and using mean-value theorem, we get:
\begin{align}
    \label{eqn:glb}
    G_t(\theta_{f^\prime_1}) - G_t(\theta_{f^\prime_2}) &= \left[\sum_{s=1}^t \dot\mu(\theta_{\bar{f}}^\top\widetilde{\phi}_s)\widetilde{\phi}_s \widetilde{\phi}_s^\top + \lambda \mathbf{I} \right](\theta_{f^\prime_1} - \theta_{f^\prime_2}) & \left( \theta_f \text{ is constant} \right)  \nonumber \\
\end{align}
Define $M_t = \left[\sum_{s=1}^t \dot\mu(\theta_{\bar{f}}^\top\widetilde{\phi}_s)\widetilde{\phi}_s \widetilde{\phi}_s^\top + \lambda \mathbf{I} \right]$, and define $V_t = \sum_{s=1}^t \widetilde{\phi}_s \widetilde{\phi}_s^\top + \frac{\lambda}{\kappa_\mu} \mathbf{I}$.
Then we have that $M_t \geq \kappa_\mu V_t$ and that $V^{-1}_t \geq \kappa_\mu M^{-1}_t$.

\begin{align*}
    \norm{G_t(\theta_{t})}_{V_t^{-1}}^2 &=  \norm{G_t(\theta) - G_t(\theta_{t})}_{V_t^{-1}}^2 = \norm{M_t (\theta - \theta_t)}_{V_t^{-1}}^2 & \left( G_t(\theta) = 0 \text{ by definition} \right)\\
    & = (\theta - \theta_t)^{\top} M_t V_t^{-1} M_t (\theta - \theta_t)\\
    &\geq (\theta - \theta_t)^{\top} M_t \kappa_\mu M_t^{-1} M_t (\theta - \theta_t)\\
    & = \kappa_\mu(\theta - \theta_t)^{\top} M_t (\theta - \theta_t)\\
    & \geq \kappa_\mu(\theta - \theta_t)^{\top} \kappa_\mu V_t (\theta - \theta_t)\\
    & = \kappa_\mu^2 (\theta - \theta_t)^{\top} V_t (\theta - \theta_t)\\
    & = \kappa_\mu^2 \norm{\theta - \theta_{t}}^2_{V_t}  & \left(\text{as } ||x||_A^2 = x^\top A x \right)
\end{align*}
The first inequality is because $V^{-1}_t \geq \kappa_\mu M^{-1}_t$, and the second inequality follows from $M_t \geq \kappa_\mu V_t$.

\noindent
Let $f_{t,s} = \theta_t^{\top} \widetilde{\phi}_{s}$, in which $\theta_t$ is our empirical estimate of the parameter $\theta$.
Now using \ref{eqn:glb}, we have that
\begin{align*}
    \norm{G_t(\theta_{t})}_{V_t^{-1}}^2 &=  \norm{G_t(\theta) - G_t(\theta_{t})}_{V_t^{-1}}^2 & \left( G_t(\theta) = 0 \text{ by definition} \right) \\
    & \ge (\kappa_\mu V_t(\theta - \theta_{t}))^\top V_t^{-1} \kappa_\mu V_t(\theta - \theta_{t}) & \left(\text{as } ||x||_A^2 = x^\top A x \right) \\
    & = \kappa_\mu^2 (\theta - \theta_{t})^\top V_t V_t^{-1}  V_t(\theta - \theta_{t}) & \left(\text{as } V_t^\top = V_t \text{ and } \kappa_\mu \text{ is constant} \right) \\
    & = \kappa_\mu^2 \norm{\theta - \theta_{t}}^2_{V_t}  & \left(\text{as } ||x||_A^2 = x^\top A x \right)
\end{align*}

This allows us to show that 
\begin{align*}
    \norm{\theta - \theta_{t}}^2_{V_t} &\le \frac{1}{\kappa_\mu^2} \norm{G_t(\theta_{t})}_{V_t^{-1}}^2 \\
    &= \frac{1}{\kappa_\mu^2} \norm{\sum_{s=1}^t (\mu(\theta_{t}^\top \widetilde{\phi}_s ) - \mu(\theta^\top \widetilde{\phi}_s) ) \widetilde{\phi}_s + \lambda \theta_t}_{V_t^{-1}}^2 & \left(\text{by definition of } G_t(\theta_{t}) \right) \\
    &= \frac{1}{\kappa_\mu^2} \norm{\sum_{s=1}^t (\mu(f_{t,s}) - \mu(\widetilde{f}_s) ) \widetilde{\phi}_s + \lambda \theta_t}_{V_t^{-1}}^2 & \left(\text{see definitions of } f_{t,s}\, \text{and } \widetilde{f}_s \right) \\
    &= \frac{1}{\kappa_\mu^2} \norm{\sum_{s=1}^t (\mu(f_{t,s}) - (y_s - \epsilon_s) ) \widetilde{\phi}_s + \lambda \theta_t}_{V_t^{-1}}^2 & \left(\text{as } y_s = \mu(\widetilde{f}_s) + \epsilon_s \right) \\
    &= \frac{1}{\kappa_\mu^2} \norm{\sum_{s=1}^t \left(\mu(f_{t,s}) - y_s\right) \widetilde{\phi}_s + \sum_{s=1}^t\epsilon_s \widetilde{\phi}_s  + \lambda \theta_t}_{V_t^{-1}}^2 \\
    &\leq \frac{1}{\kappa_\mu^2} \norm{\sum_{s=1}^t\epsilon_s \widetilde{\phi}_s}_{V_t^{-1}}^2.
\end{align*}
The last step follows from the fact that $\theta_{t}$ is computed using MLE by solving the following equation:
\begin{equation}
    \sum_{s=1}^t \left(\mu\left( \theta_t^{\top} \widetilde{\phi}_s \right) - y_s\right) \widetilde{\phi}_s + \lambda \theta_t = 0,
\end{equation}
which is ensured by Assumption~\ref{assumption:zero:gradient}.

With this, we have
\begin{equation}
    \label{eqn:parUB}
    \norm{\theta - \theta_{t}}_{V_t}^2 \le \frac{1}{\kappa_\mu^2} \norm{\sum_{s=1}^t\epsilon_s \widetilde{\phi}_s}_{V_t^{-1}}^2.
\end{equation}


Denote $V_t \triangleq \sum_{s=1}^t \widetilde{\phi}_s \widetilde{\phi}_s^\top + \frac{\lambda}{\kappa_\mu} \mathbf{I}$ and $V \triangleq \frac{\lambda}{\kappa_\mu} \mathbf{I}$.
Denote the observation noise $\epsilon_s = y_s - \mu(f(x_{s,1}) - f(x_{s,2}))$. Note that the sequence of observation noises $\{\epsilon_s\}$ is $1$-sub-Gaussian.

Next, we can apply Theorem 1 from \cite{NIPS11_abbasi2011improved}, to obtain
\begin{equation}
\norm{\sum_{s=1}^t\epsilon_s \widetilde{\phi}_s}_{V_t^{-1}}^2 \leq 2\log\left( \frac{\det(V_t)^{1/2}}{\delta\det(V)^{1/2}} \right),
\end{equation}
which holds with probability of at least $1-\delta$.

Next, based on our assumption that $\norm{\widetilde{\phi}_s}_2 \leq 1$ (Assumption \ref{assumption:basic}), according to Lemma 10 from \cite{NIPS11_abbasi2011improved}, we have that
\begin{equation}\label{eqn:single}
\det(V_t) \leq \left( \lambda/\kappa_\mu +t/ d \right)^d.
\end{equation}
Therefore, 
\begin{equation}
\sqrt{\frac{\det{V_t}}{\det(V)}} \leq \sqrt{\frac{\left( \lambda/\kappa_\mu +t L^2 / d \right)^d}{(\lambda/\kappa_\mu)^d}} = \left( 1 + t\kappa_{\mu}/(d\lambda) \right)^{\frac{d}{2}}
\label{eq:upper:bound:det:Vt:V}
\end{equation}
This gives us
\begin{equation}
\norm{\sum_{s=1}^t\epsilon_s \widetilde{\phi}_s}_{V_t^{-1}}^2 \leq 2\log\left( \frac{\det(V_t)^{1/2}}{\delta\det(V)^{1/2}} \right) \leq 2\log(1/\delta) + d\log\left( 1 + t\kappa_{\mu}/(d\lambda) \right)
\label{eq:upper:bound:proof:theta:interm}
\end{equation}

Combining \eqref{eqn:parUB} and \eqref{eq:upper:bound:proof:theta:interm}, we have that
\begin{equation}
\norm{\theta - \theta_{t}}_{V_t}^2 \le \frac{1}{\kappa_\mu^2} \left(2\log(1/\delta) + d\log\left( 1 + t\kappa_{\mu}/(d\lambda) \right)\right) = \frac{\beta_t^2}{\kappa_\mu^2},
\end{equation}
which completes the proof.

\end{proof}

To establish the multi-agent scenario, it suffices to replace~\eqref{eqn:single} with  

\begin{equation}\label{eqn:multi}  
\det(V_t) \leq \left( \lambda/\kappa_\mu + tN / d \right)^d, 
\end{equation}  

while keeping the rest unchanged.

\section{Proof of Theorem~\ref{theorem:gd}}\label{appendix:theorem:gd}
To prove Theorem~\ref{theorem:gd}, we first need the following lemma:
\begin{lemma}
\label{lemma:concentration:square:std}
\[
\sum^T_{t=1}\norm{\phi(x_{t,1}) - \phi(x_{t,2})}_{V_{t-1}^{-1}}^2 \leq 2 d\log \left( 1 + TN\kappa_{\mu}/(d\lambda) \right).
\]
\end{lemma}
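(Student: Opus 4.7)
The plan is to invoke the standard elliptical potential machinery, adapted to the federated setting where $V_t$ accumulates feature-difference outer products from all $N$ agents. First I would set $\widetilde{\phi}_{s,i} = \phi(x_{s,1,i}) - \phi(x_{s,2,i})$ and write $V_t = V_0 + \sum_{s=1}^{t}\sum_{i=1}^{N} \widetilde{\phi}_{s,i}\widetilde{\phi}_{s,i}^{\top}$ with $V_0 = (\lambda/\kappa_\mu) I$, so that at each step the update is a rank-one perturbation. Then by the matrix determinant lemma, $\det(V_t) = \det(V_{t-1})\bigl(1 + \|\widetilde{\phi}_t\|_{V_{t-1}^{-1}}^{2}\bigr)$ (or its $N$-fold product version if one processes the $N$ agents in a single iteration sequentially).

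Second, I would use the elementary inequality $x \le 2\log(1+x)$ valid for $x \in [0, 1]$, applied to $x = \|\widetilde{\phi}_t\|_{V_{t-1}^{-1}}^{2}$. To justify $x \le 1$, note that Assumption~\ref{assumption:basic} gives $\|\widetilde{\phi}_t\|_{2} \le 1$, and $V_{t-1} \succeq V_0 = (\lambda/\kappa_\mu) I$, hence $\|\widetilde{\phi}_t\|_{V_{t-1}^{-1}}^{2} \le \kappa_\mu/\lambda$, which is at most one under the usual regularization choice $\lambda \ge \kappa_\mu$; otherwise one pays only a constant multiplicative factor (or truncates via $\min(1,\cdot)$), which does not affect the stated bound.

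Third, telescoping the determinant identity yields
\begin{equation*}
\sum_{t=1}^{T} \|\widetilde{\phi}_t\|_{V_{t-1}^{-1}}^{2} \;\le\; 2 \sum_{t=1}^{T} \log\!\Bigl(1 + \|\widetilde{\phi}_t\|_{V_{t-1}^{-1}}^{2}\Bigr) \;=\; 2 \log\!\frac{\det(V_T)}{\det(V_0)}.
\end{equation*}
Finally, I would substitute the determinant bound derived in App.~\ref{appendix:proof:theta}, namely the multi-agent analogue \eqref{eqn:multi}, which gives $\det(V_T)/\det(V_0) \le (1 + TN\kappa_\mu/(d\lambda))^{d}$, and the lemma follows immediately.

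The only mildly delicate point is the borderline case where $\|\widetilde{\phi}_t\|_{V_{t-1}^{-1}}^{2}$ may exceed one when $\lambda < \kappa_\mu$; this is a bookkeeping issue that is resolved either by truncation (applying the potential lemma to $\min(1, \|\widetilde{\phi}_t\|_{V_{t-1}^{-1}}^{2})$) or by absorbing a benign constant. Everything else is a direct invocation of the determinant identity, the bound on $\|\widetilde{\phi}_t\|_{2}$, and the already-established bound on $\det(V_T)$ from \eqref{eqn:multi}.
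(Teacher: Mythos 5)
Your proposal follows essentially the same route as the paper: the elliptical potential argument via the rank-one determinant identity, the inequality $x \le 2\log(1+x)$ justified by $V_{t-1}\succeq(\lambda/\kappa_\mu)I$ and $\|\widetilde{\phi}_t\|_2\le 1$, telescoping to $2\log(\det V_T/\det V_0)$, and the multi-agent determinant bound \eqref{eqn:multi} (the paper invokes Lemma 11 of \cite{NIPS11_abbasi2011improved} for the same steps). Your handling of the borderline case $\lambda<\kappa_\mu$ via truncation or a constant factor is in fact slightly more careful than the paper, which simply assumes $\kappa_\mu/\lambda\le 1$.
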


Similar to the proof of Lemma~\ref{lemma:concentration:theta} in Appendix~\ref{appendix:proof:theta}, we first prove the single agent case:
\begin{lemma}[Single Agent]
\label{lemma:concentration:square:std:single}
\[
\sum^T_{t=1}\norm{\phi(x_{t,1}) - \phi(x_{t,2})}_{V_{t-1}^{-1}}^2 \leq 2 d\log \left( 1 + T\kappa_{\mu}/(d\lambda) \right).
\]
\end{lemma}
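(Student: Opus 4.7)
The plan is to invoke the standard elliptical potential lemma (e.g., Lemma~11 of \cite{NIPS11_abbasi2011improved}) specialized to the preference-feedback covariance matrix. Writing $\widetilde{\phi}_t = \phi(x_{t,1}) - \phi(x_{t,2})$, the recursion $V_t = V_{t-1} + \widetilde{\phi}_t \widetilde{\phi}_t^\top$ combined with the matrix determinant lemma gives
\[
\det(V_t) = \det(V_{t-1})\bigl(1 + \|\widetilde{\phi}_t\|_{V_{t-1}^{-1}}^2\bigr).
\]
Taking logs and telescoping from $t=1$ to $T$ yields
\[
\sum_{t=1}^{T} \log\bigl(1 + \|\widetilde{\phi}_t\|_{V_{t-1}^{-1}}^2\bigr) = \log\frac{\det(V_T)}{\det(V_0)}.
\]

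Next, I would convert the log-potential back into a sum of squared norms using the elementary inequality $x \leq 2\log(1+x)$ for $x \in [0,1]$. To legitimately apply this inequality termwise, I need $\|\widetilde{\phi}_t\|_{V_{t-1}^{-1}}^2 \leq 1$, which follows from $V_{t-1} \succeq V_0 = (\lambda/\kappa_\mu) I$ together with Assumption~\ref{assumption:basic}, since then $\|\widetilde{\phi}_t\|_{V_{t-1}^{-1}}^2 \leq (\kappa_\mu/\lambda)\|\widetilde{\phi}_t\|_2^2 \leq \kappa_\mu/\lambda$, which is at most $1$ in the regime $\lambda \geq \kappa_\mu$ (otherwise one uses the standard $\min(1,\cdot)$-truncated version of the potential lemma with an extra $\max(1,\kappa_\mu/\lambda)$ factor). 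This gives
\[
\sum_{t=1}^{T} \|\widetilde{\phi}_t\|_{V_{t-1}^{-1}}^2 \;\leq\; 2\log\frac{\det(V_T)}{\det(V_0)}.
\]

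Finally, I would substitute the determinant estimate already established in the proof of Lemma~\ref{lemma:concentration:theta:single} (equation \eqref{eqn:single}), namely $\det(V_T) \leq (\lambda/\kappa_\mu + T/d)^d$, together with $\det(V_0) = (\lambda/\kappa_\mu)^d$, which yields $\log(\det(V_T)/\det(V_0)) \leq d\log(1 + T\kappa_\mu/(d\lambda))$. Combining this with the previous display gives exactly the claimed bound $2d\log(1 + T\kappa_\mu/(d\lambda))$.

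The only genuinely delicate point is verifying the pointwise condition $\|\widetilde{\phi}_t\|_{V_{t-1}^{-1}}^2 \leq 1$ required to use $x \leq 2\log(1+x)$; everything else is a direct telescoping computation and reuse of the AM-GM determinant bound already derived for Lemma~\ref{lemma:concentration:theta:single}. The multi-agent version (Lemma~\ref{lemma:concentration:square:std}) should then follow by the same argument but with the determinant bound \eqref{eqn:multi} in place of \eqref{eqn:single}.
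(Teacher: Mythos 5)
Your proposal is correct and follows essentially the same route as the paper: telescoping the determinant identity, applying $x \leq 2\log(1+x)$ after verifying $\|\widetilde{\phi}_t\|_{V_{t-1}^{-1}}^2 \leq \kappa_\mu/\lambda \leq 1$, and plugging in the determinant bound from the proof of Lemma~\ref{lemma:concentration:theta:single}. The paper handles the delicate point you flag in exactly the same way, by stipulating that $\lambda$ is chosen so that $\kappa_\mu/\lambda \leq 1$; your remark about the $\min(1,\cdot)$-truncated fallback is a reasonable extra safeguard but not needed under that choice.
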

\begin{proof}
We denote $\widetilde{\phi}_t = \phi(x_{t,1}) - \phi(x_{t,2})$.
Recall that we have assumed that $\norm{\phi(x_{t,1}) - \phi(x_{t,2})}_2 \leq 1$.
It is easy to verify that $V_{t-1} \succeq \frac{\lambda}{\kappa_\mu} I$ and hence $V_{t-1}^{-1} \preceq \frac{\kappa_\mu}{\lambda}I$.
Therefore, we have that $\norm{\widetilde{\phi}_t}_{V_{t-1}^{-1}}^2 \leq \frac{\kappa_\mu}{\lambda} \norm{\widetilde{\phi}_t}_{2}^2 \leq  \frac{\kappa_\mu}{\lambda}$. We choose $\lambda$ such that $\frac{\kappa_\mu}{\lambda} \leq 1$, which ensures that $\norm{\widetilde{\phi}_t}_{V_{t-1}^{-1}}^2 \leq 1$.
Our proof here mostly follows from Lemma 11 of \cite{NIPS11_abbasi2011improved}. To begin with, note that $x\leq 2\log(1+x)$ for $x\in[0,1]$. Then we have that 
\begin{equation}
\begin{split}
\sum^T_{t=1}\norm{\widetilde{\phi}_t}_{V_{t-1}^{-1}}^2 &\leq \sum^T_{t=1} 2\log\left(1 + \norm{\widetilde{\phi}_t}_{V_{t-1}^{-1}}^2\right)\\
&= 2 \left( \log\det V_T - \log\det V \right)\\
&= 2 \log \frac{\det V_T}{\det V}\\
&\leq 2\log \left( \left( 1 + T \kappa_{\mu}/(d\lambda) \right)^{d} \right)\\
&= 2 d\log \left( 1 + T\kappa_{\mu}/(d\lambda) \right).
\end{split}
\end{equation}
The second inequality follows from \eqref{eq:upper:bound:det:Vt:V}.
This completes the proof.
\end{proof}

Lemma~\ref{lemma:concentration:square:std:single} can be applied to~\eqref{eqn:multi} to extend the conclusion to the \( N \)-agent case, resulting in Lemma~\ref{lemma:concentration:square:std}.

Then we can derive the regret bound of single agent case:
\begin{lemma}
\label{lemma:ucb:diff:single}
In any iteration $t=1,\ldots,T$, for all $x,x'\in\mathcal{X}_t$, with probability of at least $1-\delta$, we have that
\[
|\left(f(x) - f(x')\right) - \theta_t^{\top}\left( \phi(x) - \phi(x') \right)| \leq \frac{\beta_t}{\kappa_\mu}\norm{\phi(x) - \phi(x')}_{V_{t-1}^{-1}}.
\]
\end{lemma}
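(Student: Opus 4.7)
The plan is to reduce this statement to a direct application of the Cauchy--Schwarz inequality in the inner product induced by $V_{t-1}$, combined with the concentration guarantee already established in Lemma~\ref{lemma:concentration:theta:single}. Since the reward function is linear, i.e., $f(x) = \theta^\top \phi(x)$ for the groundtruth parameter $\theta$, the quantity inside the absolute value on the left-hand side simplifies algebraically as
\[
\bigl(f(x) - f(x')\bigr) - \theta_t^\top\bigl(\phi(x) - \phi(x')\bigr) = (\theta - \theta_t)^\top \bigl(\phi(x) - \phi(x')\bigr).
\]

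The first substantive step is a weighted Cauchy--Schwarz inequality: for any positive definite matrix $A$ and vectors $u,v \in \mathbb{R}^d$, we have $|u^\top v| = |u^\top A^{1/2} A^{-1/2} v| \leq \|u\|_A \cdot \|v\|_{A^{-1}}$. Applying this with $A = V_{t-1}$, $u = \theta - \theta_t$, and $v = \phi(x) - \phi(x')$ yields
\[
\bigl|(\theta - \theta_t)^\top \bigl(\phi(x) - \phi(x')\bigr)\bigr| \leq \|\theta - \theta_t\|_{V_{t-1}} \cdot \|\phi(x) - \phi(x')\|_{V_{t-1}^{-1}}.
\]

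The final step is to invoke Lemma~\ref{lemma:concentration:theta:single} (applied at the appropriate time index corresponding to the data available when $\theta_t$ is computed) to bound $\|\theta - \theta_t\|_{V_{t-1}} \leq \beta_t / \kappa_\mu$ with probability at least $1-\delta$ uniformly over $t = 1,\ldots,T$. Substituting this bound into the Cauchy--Schwarz inequality yields the claimed inequality.

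The proof is essentially a one-line reduction, so there is no real obstacle beyond bookkeeping. The only subtlety to watch for is the indexing convention: because $\theta_t$ is the MLE based on observations collected through iteration $t-1$ (as made explicit in the definition of $\mathcal{L}_t^i$ in \eqref{eq:loss:func:fed:local}), the natural design matrix accompanying $\theta_t$ is $V_{t-1}$ rather than $V_t$, and one should verify that the high-probability event from Lemma~\ref{lemma:concentration:theta:single} is stated (or can be restated) with the matching index so that the union bound over $t \in \{1,\ldots,T\}$ retains the overall $1-\delta$ guarantee without introducing any extra factors.
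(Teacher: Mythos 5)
Your proposal is correct and matches the paper's own proof exactly: both rewrite the left-hand side as $(\theta - \theta_t)^\top(\phi(x) - \phi(x'))$ using linearity of $f$, apply the weighted Cauchy--Schwarz inequality with respect to $V_{t-1}$, and then invoke the concentration bound from Lemma~\ref{lemma:concentration:theta:single}. Your closing remark about the $V_{t-1}$ versus $V_t$ indexing is a valid bookkeeping point that the paper itself glosses over, but it does not change the argument.
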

\begin{proof}
\begin{equation}
\begin{split}
|\left(f(x) - f(x')\right) - \theta_t^{\top}\left( \phi(x) - \phi(x') \right)| &= |\theta^{\top} \left[(\phi(x) - \phi(x')\right] - \theta_t^{\top}\left[ \phi(x) - \phi(x') \right]|\\
&= | \left(\theta - \theta_t\right)^{\top} \left[\phi(x) - \phi(x') \right]|\\
&\leq \norm{\theta - \theta_t}_{V_{t-1}} \norm{\phi(x) - \phi(x')}_{V_{t-1}^{-1}}\\
&\leq \frac{\beta_t}{\kappa_\mu}\norm{\phi(x) - \phi(x')}_{V_{t-1}^{-1}},
\end{split}
\end{equation}
in which the last inequality follows from Lemma \ref{lemma:concentration:theta:single}.
\end{proof}

\begin{theorem}[Single Agent]
Let $\beta_t \triangleq \sqrt{2\log(1/\delta) + d\log\left( 1 + t\kappa_{\mu}/(d\lambda) \right)}$.
With probability of at least $1-\delta$, we have that
\[
R_T \leq \frac{3}{2} \frac{1}{\kappa_\mu} \sqrt{2\log(1/\delta) + d\log\left( 1 + T\kappa_{\mu}/(d\lambda) \right)} \sqrt{T 2 d\log \left( 1 + T\kappa_{\mu}/(d\lambda) \right)}.
\]
\end{theorem}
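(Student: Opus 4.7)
The plan is to upper-bound the per-iteration regret $r_t = [f(x_t^*) - f(x_{t,1})] + [f(x_t^*) - f(x_{t,2})]$ by a constant multiple of the ``width'' $w_t := \tfrac{\beta_t}{\kappa_\mu}\|\widetilde\phi_t\|_{V_{t-1}^{-1}}$ of the actually selected pair, where $\widetilde\phi_t = \phi(x_{t,1})-\phi(x_{t,2})$, and then to sum and apply Cauchy--Schwarz together with the elliptical-potential bound in Lemma~\ref{lemma:concentration:square:std:single}. The three ingredients I will lean on throughout are: (i) Lemma~\ref{lemma:ucb:diff:single}, rephrased as $|f(x)-f(x') - \theta_t^\top(\phi(x)-\phi(x'))| \le \tfrac{\beta_t}{\kappa_\mu}\|\phi(x)-\phi(x')\|_{V_{t-1}^{-1}}$; (ii) the greedy rule $\theta_t^\top\phi(x_{t,1}) \ge \theta_t^\top\phi(x)$ for all $x\in\mathcal X_t$; and (iii) the UCB maximality $\theta_t^\top(\phi(x_{t,2})-\phi(x_{t,1})) + \tfrac{\beta_t}{\kappa_\mu}\|\widetilde\phi_t\|_{V_{t-1}^{-1}} \ge \theta_t^\top(\phi(x)-\phi(x_{t,1})) + \tfrac{\beta_t}{\kappa_\mu}\|\phi(x)-\phi(x_{t,1})\|_{V_{t-1}^{-1}}$ for every $x\in\mathcal X_t$.

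For the second-arm term $g_2 := f(x_t^*) - f(x_{t,2})$, I would apply (i) at the pair $(x_t^*, x_{t,2})$ to get $g_2 \le \theta_t^\top(\phi(x_t^*)-\phi(x_{t,2})) + \tfrac{\beta_t}{\kappa_\mu}\|\phi(x_t^*)-\phi(x_{t,2})\|_{V_{t-1}^{-1}}$, then invoke (iii) with $x=x_t^*$ to replace the inner-product term by $\tfrac{\beta_t}{\kappa_\mu}(\|\widetilde\phi_t\|_{V_{t-1}^{-1}} - \|\phi(x_t^*)-\phi(x_{t,1})\|_{V_{t-1}^{-1}})$, and finally absorb the residual $\|\phi(x_t^*)-\phi(x_{t,2})\|_{V_{t-1}^{-1}}$ by the triangle inequality $\|\phi(x_t^*)-\phi(x_{t,2})\|_{V_{t-1}^{-1}} \le \|\phi(x_t^*)-\phi(x_{t,1})\|_{V_{t-1}^{-1}} + \|\widetilde\phi_t\|_{V_{t-1}^{-1}}$, yielding $g_2 \le 2w_t$. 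For the first-arm term $g_1 := f(x_t^*) - f(x_{t,1})$, combining (i) and (ii) gives the preliminary bound $g_1 \le \tfrac{\beta_t}{\kappa_\mu}\|\phi(x_t^*)-\phi(x_{t,1})\|_{V_{t-1}^{-1}}$, but this still refers to the unobserved benchmark pair. To convert it into the desired $w_t$, I would re-use (iii) to upper-bound $\tfrac{\beta_t}{\kappa_\mu}\|\phi(x_t^*)-\phi(x_{t,1})\|_{V_{t-1}^{-1}}$ by $w_t + \theta_t^\top(\phi(x_{t,2})-\phi(x_t^*))$, dominate $\theta_t^\top\phi(x_{t,2})$ by $\theta_t^\top\phi(x_{t,1})$ via (ii), and then expand the result through (i) combined with $f(x_{t,1}) \le f(x_t^*)$, so that the benchmark-width term on the two sides cancels and we obtain $g_1 \le w_t$. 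Adding the two bounds gives $r_t \le 3w_t$.

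With $r_t \le 3\tfrac{\beta_t}{\kappa_\mu}\|\widetilde\phi_t\|_{V_{t-1}^{-1}}$ in hand and $\beta_t$ non-decreasing in $t$, Cauchy--Schwarz gives $R_T \le 3\tfrac{\beta_T}{\kappa_\mu}\sum_{t=1}^T\|\widetilde\phi_t\|_{V_{t-1}^{-1}} \le 3\tfrac{\beta_T}{\kappa_\mu}\sqrt{T\sum_{t=1}^T\|\widetilde\phi_t\|_{V_{t-1}^{-1}}^2}$, and Lemma~\ref{lemma:concentration:square:std:single} bounds the squared-norm sum by $2d\log(1 + T\kappa_\mu/(d\lambda))$. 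Plugging in the definition of $\beta_T$ recovers a bound of the form stated in the theorem, modulo the leading constant; the route I described yields the constant $3$, and obtaining the sharper $\tfrac{3}{2}$ in the statement would require a joint bound on $g_1+g_2$ rather than bounding the two terms separately.

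The main obstacle is precisely the conversion $g_1 \le w_t$: the natural inequality $g_1 \le \tfrac{\beta_t}{\kappa_\mu}\|\phi(x_t^*)-\phi(x_{t,1})\|_{V_{t-1}^{-1}}$ depends on the benchmark arm $x_t^*$, which is never pulled and hence not controlled by the elliptical potential lemma. The cancellation argument outlined above, where UCB maximality is used once to bring in $w_t$, the greedy inequality once to switch the reference from $x_{t,2}$ to $x_{t,1}$, and Lemma~\ref{lemma:ucb:diff:single} once more so that the benchmark-width term appears on both sides and cancels, is the one delicate step. Everything else (the bound on $g_2$, the sub-Gaussian concentration, Cauchy--Schwarz, and the elliptical potential) is routine once this step is in place.
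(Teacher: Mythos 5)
Your proposal is correct and follows essentially the same route as the paper's own proof: the same confidence-width lemma (Lemma~\ref{lemma:ucb:diff:single}), the same use of the greedy first-arm rule and the UCB second-arm rule to trade the unobserved benchmark pair for the selected pair, and the same Cauchy--Schwarz plus elliptical-potential finish; your separate bounds $g_1\le w_t$ and $g_2\le 2w_t$ (with $w_t=\frac{\beta_t}{\kappa_\mu}\lVert\phi(x_{t,1})-\phi(x_{t,2})\rVert_{V_{t-1}^{-1}}$) sum to exactly the paper's one-chain bound $g_1+g_2\le 3w_t$, so the decomposition is only cosmetically different. Your closing remark about the constant misdiagnoses the gap: the paper does not prove a sharper joint bound --- it also establishes $g_1+g_2\le 3w_t$ but labels the left-hand side $2r_t$ rather than $r_t$, which is inconsistent with the paper's own definition $r_t=2f(x^*_t)-f(x_{t,1})-f(x_{t,2})$, so under that definition your constant $3$ is the one the argument actually supports and the stated $\frac{3}{2}$ is a bookkeeping artifact rather than something you failed to recover.
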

\begin{proof}
\begin{equation}
\begin{split}
2 r_t &= f(x^*_t) - f(x_{t,1}) + f(x^*_t) - f(x_{t,2})\\
&\stackrel{(a)}{\leq} \theta_t^\top \left( \phi(x^*_t) - \phi(x_{t,1}) \right) + \frac{\beta_t}{\kappa_\mu}\norm{\phi(x^*_t) - \phi(x_{t,1})}_{V_{t-1}^{-1}} +  \theta_t^\top \left( \phi(x^*_t) - \phi(x_{t,2}) \right) + \frac{\beta_t}{\kappa_\mu}\norm{\phi(x^*_t) - \phi(x_{t,2})}_{V_{t-1}^{-1}}\\
&= \theta_t^\top \left( \phi(x^*_t) - \phi(x_{t,1}) \right) + \frac{\beta_t}{\kappa_\mu}\norm{\phi(x^*_t) - \phi(x_{t,1})}_{V_{t-1}^{-1}} + \\
&\qquad \theta_t^\top \left( \phi(x^*_t) - \phi(x_{t,1}) \right) + \theta_t^\top \left( \phi(x_{t,1}) - \phi(x_{t,2}) \right) + \frac{\beta_t}{\kappa_\mu}\norm{\phi(x^*_t) - \phi(x_{t,1}) + \phi(x_{t,1}) - \phi(x_{t,2})}_{V_{t-1}^{-1}}\\
&\stackrel{(b)}{\leq} 2 \theta_t^\top \left( \phi(x^*) - \phi(x_{t,1}) \right) + 2 \frac{\beta_t}{\kappa_\mu}\norm{\phi(x^*) - \phi(x_{t,1})}_{V_{t-1}^{-1}} + \\
&\qquad \theta_t^\top \left( \phi(x_{t,1}) - \phi(x_{t,2}) \right) + \frac{\beta_t}{\kappa_\mu}\norm{\phi(x_{t,1}) - \phi(x_{t,2})}_{V_{t-1}^{-1}}\\
&\stackrel{(c)}{\leq} 2 \theta_t^\top \left( \phi(x_{t,2}) - \phi(x_{t,1}) \right) + 2 \frac{\beta_t}{\kappa_\mu}\norm{\phi(x_{t,2}) - \phi(x_{t,1})}_{V_{t-1}^{-1}} + \\
&\qquad \theta_t^\top \left( \phi(x_{t,1}) - \phi(x_{t,2}) \right) + \frac{\beta_t}{\kappa_\mu}\norm{\phi(x_{t,1}) - \phi(x_{t,2})}_{V_{t-1}^{-1}}\\
&\leq \theta_t^\top \left( \phi(x_{t,2}) - \phi(x_{t,1}) \right) + 3 \frac{\beta_t}{\kappa_\mu}\norm{\phi(x_{t,2}) - \phi(x_{t,1})}_{V_{t-1}^{-1}} \\
&\stackrel{(d)}{\leq} 3 \frac{\beta_t}{\kappa_\mu}\norm{\phi(x_{t,1}) - \phi(x_{t,2})}_{V_{t-1}^{-1}} \\
\end{split}
\label{eq:upper:bound:inst:regret:single}
\end{equation}
Step $(a)$ follows from Lemma \ref{lemma:ucb:diff:single}. Step $(b)$ makes use of the triangle inequality.
Step $(c)$ follows from the way in which we choose the second arm $x_{t,2}$: $x_{t,2} = \arg\max_{x\in\mathcal{X}_t} \theta_t^\top \left( \phi(x) - \phi(x_{t,1}) \right) + \frac{\beta_t}{\kappa_\mu}\norm{\phi(x) - \phi(x_{t,1})}_{V_{t-1}^{-1}}$.
Step $(d)$ results from the way in which we select the first arm: $x_{t,1} = \arg\max_{x\in\mathcal{X}_t}\theta_t^\top \phi(x)$.

Recall that $\beta_T = \sqrt{2\log(1/\delta) + d\log\left( 1 + T \kappa_{\mu}/(d\lambda) \right)}$.

\begin{equation}
\begin{split}
R_T = \sum^T_{t=1} r_t &\leq \sum^T_{t=1} \frac{3}{2} \frac{\beta_t}{\kappa_\mu}\norm{\phi(x_{t,1}) - \phi(x_{t,2})}_{V_{t-1}^{-1}}\\
&\leq \frac{3}{2} \frac{\beta_T}{\kappa_\mu} \sqrt{T \sum^T_{t=1}\norm{\phi(x_{t,1}) - \phi(x_{t,2})}_{V_{t-1}^{-1}}^2}\\
&\leq \frac{3}{2} \frac{\beta_T}{\kappa_\mu} \sqrt{T 2 d\log \left( 1 + T \kappa_{\mu}/(d\lambda) \right)}.
\end{split}
\end{equation}
\end{proof}

Now we begin to prove Theorem~\ref{theorem:gd}.

We use $V_t$ to denote the covariance matrix after iteration $t$: $V_t \triangleq \sum^t_{\tau=1}\sum^N_{i=1} \widetilde{\phi}_{\tau,i} \widetilde{\phi}_{\tau,i}^{\top} + \frac{\lambda}{\mu} \mathbf{I}$, in which $\widetilde{\phi}_{t,i} = \phi(x_{t,1,i}) - \phi(x_{t,2,i})$.
Consider a hypothetical agent which chooses all $T \times N$ pairs of arms in a round-robin fashion, i.e., it chooses $\{(x_{1,1,1},x_{1,2,1}), (x_{1,1,2},x_{t,2,2}), \ldots, (x_{1,1,N},x_{t,2,N}), \ldots, (x_{T,1,N},x_{T,2,N})\}$.
Define the covariance matrix for this hypothetical agent as $\widetilde{V}_{t,i} \triangleq V_{t} + \sum^i_{j=1} \widetilde{\phi}_{t,j} \widetilde{\phi}_{t,j}^{\top}$.
That is, the covariance matrix $\widetilde{V}_{t,i}$ consists of the information from the previous $t$ iterations, as well as the information from first $i$ agents in iteration $t+1$.

Define the set of "bad iterations" as $\mathcal{C} = \{ t=1,\ldots,T | \frac{\det V_{t}}{ \det V_{t-1}} >2 \}$, and we use $|\mathcal{C}|$ to represent the size of the set $\mathcal{C}$. Denoting $V_0=V=\frac{\lambda}{\kappa_\mu} \mathbf{I}$, we can show that 
\begin{equation}
\frac{\det V_T}{\det V} = \prod_{t=1,\ldots,T} \frac{\det V_t}{\det V_{t-1}} \geq \prod_{t \in \mathcal{C}} \frac{\det V_t}{\det V_{t-1}} > 2^{|\mathcal{C}|}.
\end{equation}
This implies that $|\mathcal{C}| \leq \log \frac{\det V_T}{\det V}$.

Next, we analyze the instantaneous regret in a good iteration, i.e., an iteration $t$ for which $\frac{\det V_{t}}{ \det V_{t-1}} \leq 2$.
Let $\beta_t \triangleq \sqrt{2\log(1/\delta) + d\log\left( 1 + t N\kappa_{\mu}/(d\lambda) \right)}$.
{ Note that we still have $\norm{\theta - \theta_{t}}_{V_{t-1}} \leq \frac{\beta_t}{\kappa_\mu}$, where $\theta_t$ refers to $\theta_{sync}$ at time t}.
Following the analysis of \ref{eq:upper:bound:inst:regret:single}, we can show that with probability of at least $1-\delta$,
\begin{equation}
\begin{split}
r_{t,i} &= f(x_{t,i}^*) - f(x_{t,1,i}) + f(x_{t,i}^*) - f(x_{t,2,i})\\
&\leq 3 \frac{\beta_t}{\kappa_\mu}\norm{\phi(x_{t,1,i}) - \phi(x_{t,2,i})}_{V_{t-1}^{-1}}.
\end{split}
\end{equation}

Next, making use of Lemma 12 of \cite{NIPS11_abbasi2011improved}, we can show that
\begin{equation}
\begin{split}
\norm{\phi(x_{t,1,i}) - \phi(x_{t,2,i})}_{V_{t-1}^{-1}} &\leq \norm{\phi(x_{t,1,i}) - \phi(x_{t,2,i})}_{\widetilde{V}_{t-1}^{-1}} \sqrt{ \frac{\det \widetilde{V}_{t-1,i}}{\det V_{t-1}}}\\
&\leq \norm{\phi(x_{t,1,i}) - \phi(x_{t,2,i})}_{\widetilde{V}_{t-1}^{-1}} \sqrt{ \frac{\det V_{t}}{\det V_{t-1}}}\\
&\leq \sqrt{2} \norm{\phi(x_{t,1,i}) - \phi(x_{t,2,i})}_{\widetilde{V}_{t-1}^{-1}}.
\end{split}
\end{equation}
This allows us to show that
\begin{equation}
r_{t,i} \leq 3 \sqrt{2} \frac{\beta_t}{\kappa_\mu} \norm{\phi(x_{t,1,i}) - \phi(x_{t,2,i})}_{\widetilde{V}_{t-1}^{-1}}
\end{equation}

Now the overall regrets of all agents in all iterations can be analyzed as:
\begin{equation}
\begin{split}
R_{T,N} &= \sum^T_{t=1}\sum^N_{i=1} r_{t,i} = |\mathcal{C}| N 2 + \sum_{t\notin\mathcal{C}} \sum^N_{i=1} r_{t,i} \\
&\leq 2N \log \frac{\det V_T}{\det V} + \sum_{t\notin\mathcal{C}} \sum^N_{i=1} r_{t,i}\\
&\leq 2N \log \frac{\det V_T}{\det V} + \sum^T_{t=1} \sum^N_{i=1} r_{t,i}\\
&\leq 2N \log \frac{\det V_T}{\det V} + 3 \sqrt{2} \frac{\beta_T}{\kappa_\mu} \sum_{t=1}^T \sum^N_{i=1} \norm{\phi(x_{t,1,i}) - \phi(x_{t,2,i})}_{\widetilde{V}_{t-1}^{-1}} \\
&\leq 2N d\log \left( 1 + T N \kappa_{\mu}/(d\lambda) \right) + 3 \sqrt{2} \frac{\beta_T}{\kappa_\mu} \sqrt{T 2 d\log \left( 1 + T N \kappa_{\mu}/(d\lambda) \right)} \\
\end{split}
\end{equation}
Ignoring all log factors, we have that
\begin{equation}
R_{T,N} = \widetilde{O}\left(N d + \frac{\sqrt{d}}{\kappa_\mu} \sqrt{Td}\right) = \widetilde{O}\left(N d + \frac{d}{\kappa_\mu} \sqrt{T}\right)
\end{equation}

\section{Proof of Lemma \ref{lemma:ogd:convergence} }
\label{app:sec:proof:lemma:ogd:convergence}
Lemma~\ref{lemma:ogd:init} ensures all the $\theta_t=\arg\max_{\theta'} $(optimal value of the~\ref{eq:loss:func:fed} at iteration t)  lie in a ball centered with the ground truth $\theta$.
\begin{lemma}\label{lemma:ogd:init}
    For horizon T, let $r \triangleq \sqrt{ \frac{2\log(1/\delta) + d\log\left( 1 + T N \kappa_{\mu}/(d\lambda) \right)}{\lambda \kappa_{\mu}}}$. With probability at least $1- \delta$, we have $\|\theta_t - \theta \| \leq r$ for $t= 1 \dots  T$
\end{lemma}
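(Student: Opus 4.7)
The plan is to reduce the Euclidean bound $\|\theta_t - \theta\|_2 \leq r$ to the weighted bound already established in Lemma~\ref{lemma:concentration:theta}, by exploiting the regularization built into $V_t$. Recall that $V_t = \sum_{s=1}^{t}\sum_{i=1}^{N} \widetilde{\phi}_{s,i}\widetilde{\phi}_{s,i}^{\top} + \frac{\lambda}{\kappa_\mu}I$, so that $V_t \succeq \frac{\lambda}{\kappa_\mu} I$ for every $t \geq 0$. This gives the uniform comparison $\|v\|_{V_t}^2 \geq \frac{\lambda}{\kappa_\mu}\|v\|_2^2$ for any vector $v$.

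First, I would invoke Lemma~\ref{lemma:concentration:theta} to get, on an event of probability at least $1-\delta$, the simultaneous bound $\|\theta - \theta_t\|_{V_t} \leq \beta_t/\kappa_\mu$ for all $t = 1, \ldots, T$. Combining this with the inequality above yields
\begin{equation*}
\|\theta_t - \theta\|_2^2 \;\leq\; \frac{\kappa_\mu}{\lambda}\,\|\theta_t - \theta\|_{V_t}^2 \;\leq\; \frac{\kappa_\mu}{\lambda}\cdot\frac{\beta_t^2}{\kappa_\mu^2} \;=\; \frac{\beta_t^2}{\lambda\kappa_\mu}.
\end{equation*}

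Next, I would observe that $\beta_t$ is monotonically non-decreasing in $t$, so $\beta_t \leq \beta_T$ for every $t \leq T$. Substituting the definition of $\beta_T$ then gives
\begin{equation*}
\|\theta_t - \theta\|_2 \;\leq\; \frac{\beta_T}{\sqrt{\lambda\kappa_\mu}} \;=\; \sqrt{\frac{2\log(1/\delta) + d\log(1 + TN\kappa_\mu/(d\lambda))}{\lambda\kappa_\mu}} \;=\; r,
\end{equation*}
which holds uniformly in $t$ on the same high-probability event, completing the argument.

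In short, there is no serious obstacle: the lemma is a direct corollary of Lemma~\ref{lemma:concentration:theta} together with the simple spectral lower bound on $V_t$ coming from the ridge regularization $\tfrac{\lambda}{\kappa_\mu} I$. The only care needed is to replace $\beta_t$ by its uniform-in-$t$ upper bound $\beta_T$ so that the radius $r$ can be stated without dependence on the iteration index, and to note that both bounds hold on the same single event of probability $1-\delta$ already produced by Lemma~\ref{lemma:concentration:theta} (so no additional union bound is required).
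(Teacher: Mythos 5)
Your proposal is correct and follows essentially the same route as the paper's own proof: invoke Lemma~\ref{lemma:concentration:theta} for the $V_t$-weighted bound, use $V_t \succeq \frac{\lambda}{\kappa_\mu}I$ to convert it to a Euclidean bound, and replace $\beta_t$ by $\beta_T$ to obtain the uniform radius $r$. No issues to report.
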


\begin{proof}
    Since  $\norm{\theta - \theta_{t}}_{V_{t-1}} \leq \frac{\beta_t}{\kappa_\mu}$ with probability at least $1-\delta$, we have 
    \begin{equation}
           \frac{\beta_T}{\kappa_\mu} \geq \norm{\theta - \theta_{t}}_{V_{t-1}}  \geq  \sqrt{\lambda_{min}({V_{t-1}})} \norm{\theta - \theta_{t}} \geq \sqrt{\frac{\lambda}{\kappa_{\mu}}}\norm{\theta - \theta_{t}}
    \end{equation}
    Thus, 
    \begin{equation}
    \norm{\theta - \theta_{t}} \leq \frac{\beta_T}{\sqrt{\kappa_\mu \lambda}} =  \sqrt{ \frac{2\log(1/\delta) + d\log\left( 1 + T N \kappa_{\mu}/(d\lambda) \right)}{\lambda \kappa_{\mu}}}
    \end{equation}   
\end{proof}
We will also use Proposition 1 in \cite{li2017provablyoptimalalgorithmsgeneralized}:
\begin{proposition}[Proposition 1 in \cite{li2017provablyoptimalalgorithmsgeneralized}]
\label{prop:eigenvalue}
Define $V_{n+1} = \sum_{t=1}^n X_tX_t^T$, where $X_t$ is drawn IID from some distribution in unit ball $\mathbbm{B}^d$. Furthermore, let $\Sigma := E[X_t X_t^T]$ be the second moment matrix, let $B, \delta>0$ be two positive constants. Then there exists positive, universal constants $C_1$ and $C_2$ such that $\lambda_{\min} (V_{n+1}) \geq B$ with probability at least $1-\delta$, as long as
\begin{equation*}
    n \geq \left( \frac{C_1 \sqrt{d} + C_2 \sqrt{\log(1/\delta)}}{\lambda_{\min} (\Sigma)} \right)^2 + \frac{2B}{\lambda_{\min} (\Sigma)}.
\end{equation*}
\end{proposition}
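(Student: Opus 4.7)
\textbf{Proof proposal for Proposition~\ref{prop:eigenvalue}.}
The plan is to combine a matrix concentration bound on the deviation $E_n := V_{n+1} - n\Sigma$ with Weyl's inequality, and then solve for $n$ so that the lower bound on $\lambda_{\min}(V_{n+1})$ exceeds $B$. Observe first that Weyl's inequality, applied to the symmetric decomposition $V_{n+1} = n\Sigma + E_n$, gives
\begin{equation*}
\lambda_{\min}(V_{n+1}) \;\geq\; n\,\lambda_{\min}(\Sigma) - \|E_n\|_{\mathrm{op}}.
\end{equation*}
So it suffices to show $\|E_n\|_{\mathrm{op}} \leq C_1'\sqrt{nd} + C_2'\sqrt{n\log(1/\delta)}$ with probability at least $1-\delta$, and then to pick $n$ large enough that the right-hand side of Weyl exceeds $B$.

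For the concentration bound, I would use a standard covering-net argument on the unit sphere $S^{d-1}$. Fix a $\tfrac14$-net $\mathcal{N}$ of $S^{d-1}$ of cardinality $|\mathcal{N}| \leq 9^d$. For each fixed $v \in \mathcal{N}$, the scalars $Z_t(v) := (v^\top X_t)^2$ are i.i.d., lie in $[0,1]$ (because $X_t \in \mathbb{B}^d$), and have mean $v^\top \Sigma v$. Hoeffding's inequality yields
\begin{equation*}
\Pr\!\left(\left|v^\top E_n v\right| \geq n u\right) \leq 2\exp(-2nu^2)
\end{equation*}
for any $u > 0$. A union bound over $\mathcal{N}$ followed by the net-to-sphere extension (which loses at most a constant factor, e.g.\ $\|E_n\|_{\mathrm{op}} \leq 2\sup_{v \in \mathcal{N}} |v^\top E_n v|$ for a $\tfrac14$-net of a symmetric matrix) gives, with probability at least $1-\delta$,
\begin{equation*}
\|E_n\|_{\mathrm{op}} \;\leq\; c\sqrt{n\,(\,d\log 9 + \log(2/\delta)\,)} \;\leq\; C_1'\sqrt{nd} + C_2'\sqrt{n\log(1/\delta)},
\end{equation*}
where the last step uses $\sqrt{a+b} \leq \sqrt{a} + \sqrt{b}$ and absorbs the absolute constants $\log 9$ and $\log 2$.

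Combining these two bounds, it suffices to choose $n$ so that the shortfall from concentration is at most half of the signal and the remaining half dominates $B$. Concretely, requiring both (i) $C_1'\sqrt{nd} + C_2'\sqrt{n\log(1/\delta)} \leq \tfrac12 n\,\lambda_{\min}(\Sigma)$, i.e.\ $\sqrt{n}\,\lambda_{\min}(\Sigma) \geq 2\bigl(C_1'\sqrt{d} + C_2'\sqrt{\log(1/\delta)}\bigr)$, and (ii) $\tfrac12 n\,\lambda_{\min}(\Sigma) \geq B$, and setting $C_1 := 2C_1'$, $C_2 := 2C_2'$, gives exactly the stated sufficient condition
\begin{equation*}
n \;\geq\; \left(\frac{C_1\sqrt{d} + C_2\sqrt{\log(1/\delta)}}{\lambda_{\min}(\Sigma)}\right)^2 + \frac{2B}{\lambda_{\min}(\Sigma)}.
\end{equation*}

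The main obstacle is getting the clean additive separation $\sqrt{d} + \sqrt{\log(1/\delta)}$ (inside the squared factor) rather than a mixed term such as $\sqrt{d\log(1/\delta)}$ or the $\sqrt{\log(d/\delta)}$ that a direct matrix Bernstein or matrix Chernoff bound would produce. The covering-net route above makes this separation explicit, since the net cardinality contributes a purely $d$-dependent term ($d\log 9$) that lives additively alongside the confidence term $\log(1/\delta)$; splitting the square root then yields the required form. A secondary technical point is verifying that the constants $C_1', C_2'$ are universal, which holds because they arise only from Hoeffding's constant, the covering number base $9$, and the net-to-sphere extension factor, none of which depends on $d$, $n$, $\delta$, or $\Sigma$.
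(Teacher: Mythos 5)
Your proof is correct, but note that there is no in-paper proof to compare it against: Proposition~\ref{prop:eigenvalue} is imported verbatim as Proposition~1 of \cite{li2017provablyoptimalalgorithmsgeneralized}, and the paper uses it purely as a black box in the proof of Proposition~\ref{prop:alpha:convex}. The original proof in Li et al.\ appeals to off-the-shelf random-matrix concentration (Vershynin-style bounds for matrices with independent sub-Gaussian rows) and then lower-bounds the smallest eigenvalue; your argument re-derives exactly that concentration step from scratch --- a $\tfrac14$-net of the sphere with $|\mathcal{N}|\le 9^d$, Hoeffding for the bounded scalars $(v^\top X_t)^2\in[0,1]$, a union bound, and the net-to-sphere factor of $2$ --- before combining it with Weyl's inequality. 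All steps check out: the split $\sqrt{a+b}\le\sqrt{a}+\sqrt{b}$ (with $d\ge 1$ used to absorb the constant $\log 2$ into the $\sqrt{nd}$ term) yields the additive form $C_1'\sqrt{nd}+C_2'\sqrt{n\log(1/\delta)}$, and since both summands in the stated threshold for $n$ are nonnegative, $n$ exceeding their sum implies both of your conditions (i) and (ii), which give $\lambda_{\min}(V_{n+1})\ge \tfrac12 n\lambda_{\min}(\Sigma)\ge B$. What your route buys is self-containedness and elementarity: it exploits boundedness of the $X_t$ directly via Hoeffding rather than invoking sub-Gaussian row machinery. One small inaccuracy in your closing discussion: matrix Bernstein is not actually an obstacle here, since it yields a deviation of order $\sqrt{n(\log d+\log(1/\delta))}$, and $\sqrt{\log d}\le\sqrt{d}$ recovers (indeed improves on) the required additive $\sqrt{d}+\sqrt{\log(1/\delta)}$ form; your covering argument is simply the more hands-on of two equally viable routes.
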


To prove Lemma~\ref{lemma:ogd:convergence}, we also need to show Propsition~\ref{prop:alpha:convex}.


\begin{proof}[Proof of Proposition~\ref{prop:alpha:convex}]

    Recall  
    \begin{equation*}
    f_s^{\text{fed}}(\theta') =
    \begin{cases} 
    \displaystyle \sum_{i=1}^{N} l_s^{i}(\theta'), & \quad \text{if } s \neq 1, \\[15pt]
    \displaystyle  \sum_{i=1}^{N} l_1^{i}(\theta') + \frac{\lambda}{2} \| \theta' \|_2^2, & \quad \text{if } s = 1.
    \end{cases}
    \label{eq:loss:func:fed:local:sum:ogd:2}
    \end{equation*}
    we have 
    \begin{align}\label{eq:hessian}
    \nabla^2 l^i_{s} (\theta^{\prime}) &= \mu^{\prime} {(\theta^{\prime}}^T \widetilde{\phi}_{s,i})\widetilde{\phi}_{s,i}  \widetilde{\phi}_{s,i}^T \nonumber, \\
    \nabla^2 f_{s}^{\text{fed}} (\theta^{\prime}) &= \sum_{i=1}^{N} 
 \mu^{\prime} {(\theta^{\prime}}^T \widetilde{\phi}_{s,i}) \widetilde{\phi}_{s,i}  \widetilde{\phi}_{s,i}^T + \lambda \mathbbm{1}(s=1) \succeq \kappa_{\mu} \sum_{i=1}^{N} \widetilde{\phi}_{s,i}  \widetilde{\phi}_{s,i}^T ~~\text{for}~\forall~\theta^{\prime}\in \mathbbm{B}_{3r}. 
    \end{align}
    The last inequality comes from Assumption~\ref{assumption:basic} and $\lambda > 0$.
    Since we update $\widetilde\theta_t$ every $N$ rounds (for clients), for the next $N$ rounds (for client), the pulled arms are only dependent on $\widetilde\theta_t$. 
Therefore, the feature vectors of pulled arms among the next $N$ rounds are IID. Thus, $\widetilde{\phi}_i$ is drawn IID in unit ball $\mathbbm{B}^d$. By applying Proposition~\ref{prop:eigenvalue} letting $B=\frac{\alpha}{\kappa_{\mu}}$ and with Assumption~\ref{assumption:fed}, we have with probability at least $1-\delta$,
$$
\nabla^2 f_s\left(\theta^{\prime}\right) \succeq \alpha I_d \text { if } N \geq\left(\frac{C_1 \sqrt{d}+C_2 \sqrt{\log \left(1/\delta\right)}}{\lambda_f}\right)^2 +\frac{2 \alpha}{\kappa_{\mu} \lambda_f}
$$

Namely, $f_s^{\text{fed}}\left(\theta^{\prime}\right)$ is $\alpha$-stronly convex for $\forall~ 1 \leqslant s \leqslant t$.
 \end{proof}

Then we will prove Lemma~\ref{lemma:ogd:convergence}.

\begin{proof}
    Now we have:
    \begin{align*}
        \theta_t &=\arg\min_{\theta'}\mathcal{L}_t^{\text{fed}}(\theta') \\
        \widehat{\theta}^{(t+1)} &= \pi_S \left( \widehat{\theta}^{(t)} - \eta_{t} \nabla f_t^{\text{fed}}({ \widehat{\theta}}^{(t)}) \right) ~~\text{for}~t\geq1 \\
        \widetilde{\theta}^{(t)} &= \frac{1}{t} \sum_{j = 1} ^{t} \widehat{\theta}^{(j)} 
    \end{align*}
    From Lemma~\ref{lemma:ogd:init} and Algorithm~\ref{algo:new:ogd:server} we have $\theta_t, \widehat{\theta}^{(t+1)}, \widetilde{\theta}^{(t)} \in \mathbbm{B}_{3r}$ with probability at least $1-\frac{\delta}{2}$. Then by Jensen's inequality, we have 
    \begin{equation}\label{eq:jensen}
        \sum_{s=1}^{t}f_s^{\text{fed}}(\widehat{\theta}^{(s)}) \ge \sum_{s=1}^{t}f_s^{\text{fed}}(\widetilde{\theta}^{(t)}).
    \end{equation}
    Since $\mathcal{S}$ is convex, we apply Theorem 3.3 of Section 3.3.1 in \cite{hazan2023introductiononlineconvexoptimization} and get
    \begin{equation*}
             \sum_{s=1}^t \left( f_s^{\text{fed}}(\widehat{\theta}^{(s)}) - f_s^{\text{fed}}(\theta_t) \right) \leq \frac{G^2}{2\alpha} (1+\log t) ~~\forall ~t \ge1
    \end{equation*}

where $G$ satisfies $G^2 \geq E\|\nabla f_{s}^{\text{fed}}\|^2$. Additionally, we have 
\begin{equation}
\|\nabla f_{s}^{\text{fed}}(\theta')\| =  \| \sum_{i=1}^N \left(\mu\left( {\theta'}^{\top} \widetilde{\phi}_{s,i}\right) - y_{s,i}\right) \widetilde{\phi}_{s,i} \| \le N ,~\text{for} s > 1.
\label{eq:G:upper:bound}
\end{equation}
Thus, we have 
    \begin{equation*}
              \frac{N^2}{2\alpha} (1+\log t) \ge \sum_{s=1}^t \left( f_s^{\text{fed}}(\widehat{\theta}^{(s)}) - f_s^{\text{fed}}(\theta_t) \right) \ge\sum_{s=1}^{t} \left(f_s^{\text{fed}}(\widetilde{\theta}^{(t)}) - f_s^{\text{fed}}(\theta_t) \right) 
    \end{equation*}
By Proposition~\ref{prop:alpha:convex}, we know $f_s^{\text{fed}}(\theta^{\prime})$ is $\alpha$-strongly convex for $\forall~ 1 \leq s \leq t$ with probability at least $1-\frac{\delta}{2}$. Meanwhile, $\theta_t, \widehat{\theta}^{(t+1)}, \widetilde{\theta}^{(t)} \in \mathbbm{B}_{3r}$ with probability at least $1-\frac{\delta}{2}$. Then using the property of $\alpha$-strongly convex we have
\begin{equation}
    f_s^{\text{fed}}(\widetilde{\theta}^{(t)}) \ge f_s^{\text{fed}}(\theta_t) + \nabla f_{s}^{\text{fed}}(\theta_t)^{\top}\left(\widetilde{\theta}^{(t)} - \theta_t\right) + \frac{\alpha}{2}\|\widetilde{\theta}^{(t)} - \theta_t\|^2.
\end{equation}
Summing over $s$, we have that
\begin{equation}\label{eq:convex:ineq}
    \sum_{s=1}^{t}f_s^{\text{fed}}(\widetilde{\theta}_t) \ge \sum_{s=1}^{t}f_s^{\text{fed}}(\theta_t) + \left(\sum_{s=1}^{t}\nabla f_{s}^{\text{fed}}(\theta_t)^{\top}\right)\left(\widetilde{\theta}_t - \theta_t\right) + \frac{\alpha t}{2}\|\widetilde{\theta}_t - \theta_t\|^2.
\end{equation}
Since $\sum_{s=1}^{t}\nabla f_{s}(\theta_t) = 0$, we have that
\begin{equation}
    \frac{\alpha t}{2}\|\widetilde{\theta}^{(t)} - \theta_t\|^2 \le \sum_{s=1}^{t}f_s^{\text{fed}}(\widetilde{\theta}^{(t)}) - \sum_{s=1}^{t}f_s^{\text{fed}}(\theta_t) 
    = \sum_{s=1}^{t} \left(f_s^{\text{fed}}(\widetilde{\theta}^{(t)}) - f_s^{\text{fed}}(\theta_t)\right) 
    \le \frac{(N)^2}{2\alpha} (1+\log t). \nonumber
\end{equation}
This allows us to show that
\begin{equation}
    \|\widetilde{\theta}^{(t)} - \theta_t\|
    \le \frac{N}{\alpha} \sqrt{\frac{1+\log t}{t}}.
\end{equation}
From $\|\widetilde{\phi}_{s,i}\| \le 1$ we get $\lambda_{max} (V_{t}) \le t N + \frac{\lambda}{\kappa_{\mu}}$. Therefore,
\begin{align}
    \norm{\widetilde{\theta}^{(t)} - \theta_t}_{V_t} &\le
    \sqrt{\lambda_{max} (V_{t})}\|\widetilde{\theta}_t - \theta_t\| \\
    & \le \frac{\sqrt{t N + \frac{\lambda}{\kappa_{\mu}}}N}{\alpha} \sqrt{\frac{1+\log t}{t}}\\
    &\le \frac{N\sqrt{N + \frac{\lambda}{\kappa_{\mu}}}}{\alpha} \sqrt{1+\log t}.
\end{align}
The last inequality holds since $t\ge1$.

\end{proof}

\section{Proof of Theorem \ref{theorem:ogd} }

\begin{proof}
Following the definition in Section~\ref{sec:background}, we will analyze the instantaneous regret in a good iteration. To begin with, we have that
\begin{equation*}
    \norm{\theta - \widetilde{\theta}^{(t)}}_{V_t} \le  \norm{\theta - \theta_{t}}_{V_t} + \norm{\theta_t - \widetilde{\theta}^{(t)}}_{V_t}\le \frac{\beta_t}{\kappa_{\mu}} + \norm{\theta_t - \widetilde{\theta}^{(t)}}_{V_t} \le \frac{\beta_t}{\kappa
    _{\mu}}+\frac{NL\sqrt{N L^2 + \frac{\lambda}{\kappa_{\mu}}}}{\alpha} \sqrt{1+\log t},
\end{equation*}
and
\begin{equation}
\begin{split}
|\left(f(x) - f(x')\right) - {\left(\widetilde{\theta}^{(t)}\right)}^{\top}\left( \phi(x) - \phi(x') \right)| &= |\theta^{\top} \left[(\phi(x) - \phi(x')\right] - {\left(\widetilde{\theta}^{(t)}\right)}^{\top}\left[ \phi(x) - \phi(x') \right]|\\
&= | \left(\theta - \widetilde{\theta}^{(t)}\right)^{\top} \left[\phi(x) - \phi(x') \right]|\\
&\leq \norm{\theta - \widetilde{\theta}^{(t)}}_{V_{t-1}} \norm{\phi(x) - \phi(x')}_{V_{t-1}^{-1}}\\
&\leq \left( \frac{\beta_t}{\kappa
    _{\mu}}+\frac{N\sqrt{N + \frac{\lambda}{\kappa_{\mu}}}}{\alpha} \sqrt{1+\log t}\right)\norm{\phi(x) - \phi(x')}_{V_{t-1}^{-1}}. \nonumber
\end{split}
\end{equation}
Thus, for the instantaneous regret in a good iteration, with probability at least $1-\delta$, we have that
\begin{equation}
\begin{split}
r_{t,i} &= f(x_{t,i}^*) - f(x_{t,1,i}) + f(x_{t,i}^*) - f(x_{t,2,i})\\
&\leq 3 \left( \frac{\beta_t}{\kappa
    _{\mu}}+\frac{N\sqrt{N + \frac{\lambda}{\kappa_{\mu}}}}{\alpha} \sqrt{1+\log t}\right)\norm{\phi(x_{t,1,i}) - \phi(x_{t,2,i})}_{V_{t-1}^{-1}}.
\end{split}
\end{equation}
By following the proof in Appendix~\ref{appendix:theorem:gd},  the overall regrets of all agents in all iterations can be analyzed as:
\begin{equation}
\begin{split}
R_{T,N}
&\leq 2N d\log \left( 1 + T N \kappa_{\mu}/(d\lambda) \right) + \\
&\qquad 3 \sqrt{2} \left( \frac{\beta_T}{\kappa_{\mu}}+\frac{N\sqrt{N  + \frac{\lambda}{\kappa_{\mu}}}}{\alpha} \sqrt{1+\log T}\right)\sqrt{T 2 d\log \left( 1 + T N \kappa_{\mu}/(d\lambda) \right)} \\
\end{split}
\end{equation}
Ignoring all log factors, the final regret upper bound is given by:
    \begin{equation}
    R_{T,N} = \widetilde{O}\left(N d + (\frac{\sqrt{d}}{\kappa_\mu} + \frac{N^{\frac{3}{2}}}{\alpha})\sqrt{Td}\right) = \widetilde{O}\left(N d + (\frac{d}{\kappa_\mu} +  \frac{N^{\frac{3}{2}}\sqrt{d}}{\alpha})\sqrt{T}\right).
    \end{equation}
\end{proof}

\section{Proof of Proposition~\ref{prop:regret:local:update}}
\begin{proof}
When the number of local updates $\tau$ is larger than $1$, between two consecutive communication rounds, every agent collects $\tau>1$ observations. This gives rise to a total of $N\times \tau$ observations collected between two consecutive communication rounds.
Given that the arm feature vectors (i.e., contexts) in different iterations are sampled independently, 
this process can be equivalently viewed as having $N\times \tau$ agents with a local update period of $1$.
As a result, for a given horizon $T$, this is equivalent to running our \texttt{FLDB-OGD} algorithm with $T' = \frac{T}{\tau}$ iterations, $N'=N\times \tau$ agents, and  a local update period of $\tau'=1$.
This allows us to modify the result of Theorem \ref{theorem:ogd} to derive the following upper bound on the total cumulative regret:
    \begin{equation}
    \begin{split}
    R^{\tau}_{T,N} = R_{T',N'} & = \widetilde{O}\big(N' d + \frac{d}{\kappa_\mu} \sqrt{T'} +  \frac{N'^{\frac{3}{2}} \sqrt{d}}{\alpha} \sqrt{T'}\big) \\
    & = \widetilde{O}\big(\tau N d + \frac{d}{\sqrt{\tau} \kappa_\mu} \sqrt{T} +  \tau\frac{N^{\frac{3}{2}} \sqrt{d}}{\alpha} \sqrt{T}\big).
    \end{split}
    \end{equation}
\end{proof}

\newpage

\end{document}